\documentclass{article}

\usepackage{microtype}
\usepackage{graphicx}
\usepackage{subcaption}
\usepackage{booktabs} 

\usepackage{hyperref}

\usepackage[accepted]{icml2026}

\usepackage{icml2026} 
\usepackage{xurl}
\usepackage{graphicx}
\usepackage{subcaption}   
\usepackage{multirow}
\usepackage{makecell}
\usepackage{threeparttable}
\usepackage{siunitx}

\usepackage{amsmath,amssymb,amsthm,mathtools}
\usepackage{bbm}
\usepackage{bm}

\usepackage{algorithm}
\usepackage{algorithmic}

\usepackage[inline]{enumitem}
\usepackage{xcolor}
\usepackage[capitalize,noabbrev]{cleveref} 
\usepackage{makecell}
\usepackage{threeparttable}

\usepackage{tabularx}
\usepackage{multirow}
\usepackage{xcolor}
\usepackage[most]{tcolorbox}   
\usepackage{caption}

\usepackage[table]{xcolor}  
\definecolor{hi}{gray}{0.90}   
\definecolor{hi2}{gray}{0.95}  

\newcommand{\hyptag}[1]{%
  \begingroup
  \setlength{\fboxsep}{1.2pt}%
  \colorbox{black!8}{\textsf{\footnotesize\textsc{#1}}}%
  \endgroup
}

\DeclareMathOperator{\Var}{Var}
\DeclareMathOperator{\Cov}{Cov}

\newcommand{\R}{\mathbb{R}}
\newcommand{\E}{\mathbb{E}}

\theoremstyle{plain}
\newtheorem{theorem}{Theorem}[section]
\newtheorem{lemma}[theorem]{Lemma}

\theoremstyle{definition}

\theoremstyle{remark}

\providecommand{\E}{\mathbb{E}}
\providecommand{\Var}{\mathrm{Var}}
\providecommand{\tr}{\mathrm{tr}}
\providecommand{\R}{\mathbb{R}}

\usepackage{booktabs}
\usepackage{array}
\usepackage{makecell}
\usepackage{threeparttable}
\usepackage[table]{xcolor}
\usepackage{colortbl}

\definecolor{PatchGreen}{RGB}{232,245,233}
\definecolor{PatchGreenStrong}{RGB}{210,240,214}
\definecolor{CtrlBlue}{RGB}{232,240,252}
\definecolor{HeaderGray}{gray}{0.97}

\newcolumntype{L}[1]{>{\raggedright\arraybackslash}m{#1}}
\newcolumntype{C}[1]{>{\centering\arraybackslash}m{#1}}
\newcolumntype{R}[1]{>{\raggedleft\arraybackslash}m{#1}}

\newcommand{\flipset}{\mathcal{F}}
\newcommand{\resc}[3]{\makecell[c]{#1\,\%\\\scriptsize(#2/#3)}}

\usepackage{array}
\usepackage{booktabs}
\usepackage{makecell}   
\usepackage{arydshln}   
\usepackage{xcolor,colortbl}

\usepackage{arydshln}   

\usepackage{longtable}
\usepackage{array}
\usepackage{ragged2e} 

\usepackage[textsize=tiny]{todonotes}

\icmltitlerunning{\textsc{DecodeShare}: Tracing the Shared Subspace of LLM Decode-Time Decisions}

\begin{document}

\twocolumn[
    \icmltitle{\textsc{DecodeShare}: Tracing the Shared Subspace of LLM Decode-Time Decisions}
  \icmlsetsymbol{equal}{*}
    
    \begin{icmlauthorlist}
        \icmlauthor{Zishan Shao}{duke_ece}
        \icmlauthor{Lixun Zhang}{duke_ece,duke_cs}
        \icmlauthor{Kangning Cui}{wfu_cs}
        \icmlauthor{Yixiao Wang}{duke_cs}
        \icmlauthor{Ting Jiang}{duke_ece}
        \icmlauthor{Hancheng Ye}{duke_ece}
        \icmlauthor{Qinsi Wang}{duke_ece}
        \icmlauthor{Zhixu Du}{duke_ece}
        \icmlauthor{Yuzhe Fu}{duke_ece}
        \icmlauthor{Fan Yang}{wfu_cs}
        \icmlauthor{Danyang Zhuo}{duke_cs}
        \icmlauthor{Yiran Chen}{duke_ece}
        \icmlauthor{Hai Helen Li}{duke_ece}
      \end{icmlauthorlist}
    
      \icmlaffiliation{duke_ece}{Department of Electrical and Computer Engineering, Duke University, Durham, NC, USA}
      \icmlaffiliation{duke_cs}{Department of Computer Science, Duke University, Durham, NC, USA}
      \icmlaffiliation{wfu_cs}{Department of Computer Science, Wake Forest University, Winston-Salem, NC, USA}
    
      \icmlcorrespondingauthor{Zishan Shao}{zishan.shao@duke.edu}
    
      \icmlkeywords{Machine Learning, ICML}
    
      \vskip 0.3in
]





\printAffiliationsAndNotice{}  

\begin{abstract}
Large language models (LLMs) handle many tasks with one set of parameters, but under KV-cached inference it is unclear what task-general structure, if any, is used at \emph{decode time} rather than during \emph{prefill}. We propose \textsc{DecodeShare}, a protocol that identifies a low-dimensional subspace consistently shared across tasks in decode-time hidden states, and then tests its causal role by removing that subspace only during decoding. In our experiments, disturbing the discovered shared subspace degrades decision performance far more than disturbing either a prefill-derived or random subspace under the same intervention budget. We further show this decode-shared subspace has practical consequences for activation steering: common steering directions can overlap the task-general decode channel. Projecting out this shared subspace directly separates the functional roles of the two components, while evaluating steering vectors at decode-time yields more reliable signal for downstream deployment than prefill-based proxies. Despite its compactness, the shared subspace can serve as a high-leverage causal channel at decode time. Code is available at: \url{https://github.com/Zishan-Shao/decodeshare.git}.
\end{abstract}

\section{Introduction}
\label{sec:intro}

\begin{figure*}
    \centering
\includegraphics[width=0.99\linewidth]{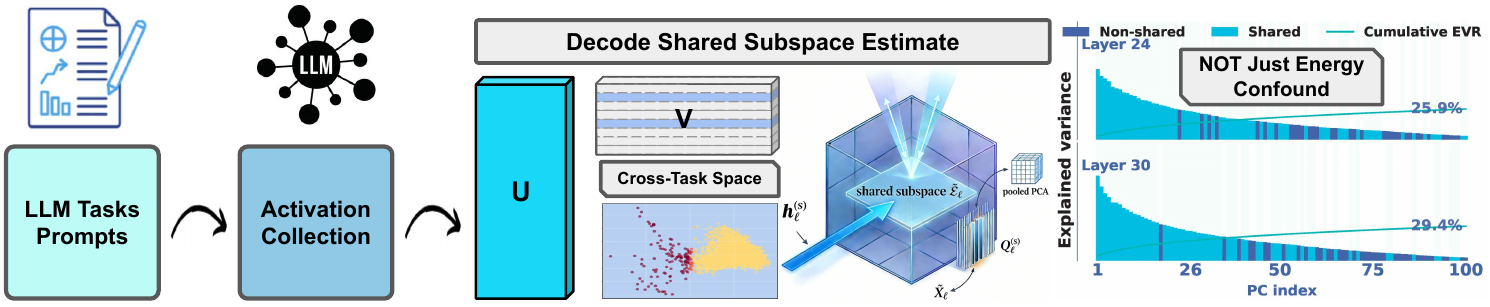}
\caption{\textbf{DecodeShare Pipeline.}
(1) We collect KV-cached \emph{decode-time hidden states} from prompts across multiple tasks and pool them into a matrix.
(2) A pooled PCA yields a set of orthonormal directions (columns of V matrix);  we then select a subset that is consistently shared across tasks ($Q_\ell^{(s)}$), rather than simply taking the top-variance components. 
(3) To test causality, we intervene \emph{only during decoding} by removing the selected component from the hidden state and compare against dimension- and \emph{energy}-matched controls to rule out a simple high-variance/energy confound.}
\label{fig:pipeline} 
\end{figure*}

Large language models (LLMs) achieve strong performance across a wide range of tasks using a single set of parameters. This multi-task generality motivates the idea of computation reuse: diverse behaviors may be mediated by a common set of internal features rather than entirely task-specific ones. We can view this reuse as inducing a shared subspace in the model’s decision-time representations--i.e., a set of directions that are consistently engaged across tasks and prompt variants. If so, it would provide a principled target for analysis and intervention that transfers across tasks and prompt variants.

Motivated by this, a growing line of work studies and steers LLM behavior by intervening on internal \emph{activations} using low-dimensional directions or subspaces~\cite{zou2023repe,turner2023activationengineering,rimsky-etal-2024-steering}.
In practice, however, activation-level interventions are often brittle: effects that appear strong under one prompt template, task setup, or implementation detail can weaken, flip, or disappear under small perturbations~\cite{tan2024steeringreliability,deng2025care}.
This brittleness limits reproducibility and makes negative results hard to interpret.

We argue that brittleness is amplified because interventions are often not aligned with the \emph{states} that drive next-token decisions under modern inference.
\textit{First, we rarely measure or protect task-general structure at decision time.}
Most activation-steering methods estimate a small set of directions for a specific behavior or attribute under a fixed prompt distribution~\cite{turner2023activationengineering,rimsky-etal-2024-steering,zou2023repe,konen2024stylevectors,arditi2024refusal,todd2024functionvectors}.
These methods typically do not control interactions between steering directions and task-general computation directions~\cite{merullocircuit,kaushik2025universal}.
If next-token accuracy relies on a shared decision-time channel, then perturbing it can cause large failures and template sensitivity, consistent with observed brittleness under prompt or template shifts~\cite{tan2024steeringreliability,deng2025care}.

\textit{Second, there is a mismatch between where directions are estimated and where decisions are made.}
In common activation-steering pipelines, steering directions are typically estimated from \emph{prefill} activations computed on the full prompt~\cite{turner2023activationengineering,rimsky-etal-2024-steering,zou2023repe,konen2024stylevectors,arditi2024refusal,todd2024functionvectors}.
Under KV-cached decoding, however, each next-token decision is computed from a single-token \emph{decode} pass. If decode-time hidden states differ from prefill activations, then prefill-estimated directions may not align with the states that actually produce logits. This mismatch matters for steering selection: a vector favored by prefill-based validation may not perform best under held-out KV-cached decoding. Since steering vectors can also overlap task-general decode structure, projection is best viewed as an interference diagnostic rather than a universal repair.

We introduce \textsc{DecodeShare}, a protocol that estimates candidate shared subspaces from decode-time hidden states and evaluates their relevance to decoding decisions via \emph{decode-only} projection removal. Figure~\ref{fig:pipeline} previews our \textsc{DecodeShare} pipeline: we collect decode-time hidden states across tasks, identify a shared subspace, and test its causal role with decode-only ablations.
Accordingly, we analyze the decode-time hidden state under KV-cached inference as the representation that drives next-token logits. (We introduce formal notations and definitions in Sec~\ref{sec:prelim}.)

\begin{tcolorbox}[colback=gray!5,colframe=gray!50,title={Hypothesis: shared decode-time decision channel}]
Under KV-cached decoding, next-token decisions causally depend on a \emph{small} subspace $S$ of the decode-time hidden state that is shared across tasks.
\end{tcolorbox}

\paragraph{Key findings and implications.}
Across models and benchmarks, removing the identified shared subspace causes much larger drops in decision accuracy than matched controls, supporting a high-leverage causal channel at decision time.
This effect is largely \emph{decode-specific}: under matched budgets, subspaces estimated from prefill often fail to reproduce the decode-time causal impact. 
Finally, we connect the shared decode-time channel to steering reliability. Common steering vectors overlap this channel; projecting out that overlap exposes task-dependent utility--robustness trade-offs rather than a universal repair, while decode-time evaluation provides a better ranking signal for held-out KV-cached decoding utility than prefill-based proxies. Together, this frames steering brittleness as partly an inference-regime alignment problem, not only a prompt-template artifact.


Our key contributions are: 
\vspace{-0.35cm}
\begin{enumerate}[leftmargin=*, itemsep=2pt]
\item We propose \textsc{DecodeShare}, a protocol that identifies and quantifies cross-task shared subspaces \emph{directly from KV-cached decode-time hidden states}.

\item Using decode-only interventions under matched budgets, we show the shared decode subspace is causally important for decisions and that prefill-estimated bases often fail to capture these decode-time causal effects.

\item We connect the decode-shared subspace to steering reliability; it overlaps steering vectors, and decode-time evaluation provides a better ranking signal for held-out KV-cached decoding utility than prefill-based proxies.

\end{enumerate}

We position this work as a \emph{protocol and evaluation} contribution: a practical way to find and causally test decision-relevant shared subspaces under real KV-cached decoding.

\section{Methodology}
\label{sec:method}

\subsection{Preliminary}
\label{sec:prelim}

\paragraph{KV-cached inference and decode-time hidden states.}
Let $f_\theta$ be a decoder-only Transformer with $L$ layers and hidden width $d$.
Under KV caching, inference consists of a \emph{prefill} pass over the full prompt, followed by iterative \emph{decode} steps. Each decode step processes a single current token while reusing cached keys/values from previous tokens.

Fix a single layer $\ell$. Let $h_\ell^{(s)} \in \mathbb{R}^d$ denote the layer-$\ell$ hidden state of the current token at decode step $s$ under KV-cached decoding.
We call $h_\ell^{(s)}$ the \emph{decode-time hidden state}. When emphasizing that it drives the next-token logits (and thus decision accuracy), we also refer to it as the \emph{decode-time decision state}.
Unless otherwise stated, we run KV-cached decoding for up to $K$ steps and collect decode-time hidden states across steps (treating steps uniformly when pooling).

\paragraph{Prefill vs.\ decode states.}
We distinguish hidden states obtained in \emph{prefill} pass (computed on the full prompt) from decode-time hidden states obtained during KV-cached decoding.
In all experiments, the prefill state is taken at the last prompt position to provide a single-token reference point for comparison (formal definitions in Appendix~\ref{appendix:dist_defn}).

\paragraph{Alignment principle.}
All causal interventions in this paper act on $h_\ell^{(s)}$ \emph{only during KV-cached decode steps} (we do not modify the prefill computation). 
Accordingly, we estimate candidate subspaces from decode-time hidden states and intervene on the same decode-time states, avoiding estimator-intervention mismatch.

\subsection{Key Hypotheses and Falsification Criteria}
Rather than introducing additional independent claims, we test the central hypothesis via three checks:
(i) \hyptag{\textbf{H1}} validates that shared decode-time structure exists beyond chance, (ii) \hyptag{\textbf{H2}} tests whether the shared subspace is causally relevant at decode time,
and (iii) \hyptag{\textbf{H3}} tests whether \emph{prefill}-estimated directions causally transfer to decode-time decisions under matched budgets.

\noindent\hyptag{\textbf{H1}}\ \textbf{Shared decode-time structure exists.}
From decode-time hidden states, our sharedness statistic (shared set size $|S_\ell(\tau,m)|$) exceeds what is expected under matched null baselines.

\noindent\hyptag{\textbf{H2}}\ \textbf{Decode-time causal relevance.}
Removing the estimated shared subspace from the decode-time hidden state $h_\ell^{(s)}$ \emph{only during KV-cached decoding} harms performance more than matched non-shared controls under the same intervention budget.

\noindent\hyptag{\textbf{H3}}\ \textbf{Prefill-to-decode causal transfer often fails.}
A basis estimated from prefill states does not reliably reproduce the decode-time causal effect of a basis estimated from decode-time hidden states under matched budgets.

\paragraph{Falsification criteria.} We reject \hyptag{\textbf{H1}} if the sharedness statistic is not significantly larger than matched null baselines. We reject \hyptag{\textbf{H2}}--\hyptag{\textbf{H3}} if the corresponding decode-time intervention effects are not distinguishable from matched controls under the same budget (and, for \hyptag{\textbf{H3}}, if prefill- and decode-estimated bases yield comparable decode-time effects).

\subsection{\textsc{DecodeShare}: Constructing a decode-aligned shared subspace}
\label{sec:method:shareness_tests}



\paragraph{Decode-only state collection.}
For each task $t\in\mathcal{T}$, we sample $N$ calibration prompts $x\sim\mathcal{D}_t$ and run KV-cached decoding for up to $K$ steps under policy $\pi$ (e.g., greedy).
At each decode step $s$ (single-token forward pass), we record the layer-$\ell$ hidden state $h_\ell^{(s)}\in\mathbb{R}^d$.
Stacking all recorded states yields $X_{\ell,t}\in\mathbb{R}^{n_{\ell,t}\times d}$.
To balance tasks, we subsample each to $n_\ell=\min_t n_{\ell,t}$, obtaining $X'_{\ell,t}\in\mathbb{R}^{n_\ell\times d}$, and task-center: $$
\tilde X_{\ell,t} \;=\; X'_{\ell,t} - \mathbf{1}\mu_{\ell,t}^\top,
\;
\mu_{\ell,t}=\frac{1}{n_\ell}\sum_{j=1}^{n_\ell} X'_{\ell,t}[j,:].$$

\begin{figure}[t]
\centering
\captionsetup[subfigure]{font=scriptsize}
\begin{subfigure}[t]{\linewidth}
  \centering
  \includegraphics[width=\linewidth]{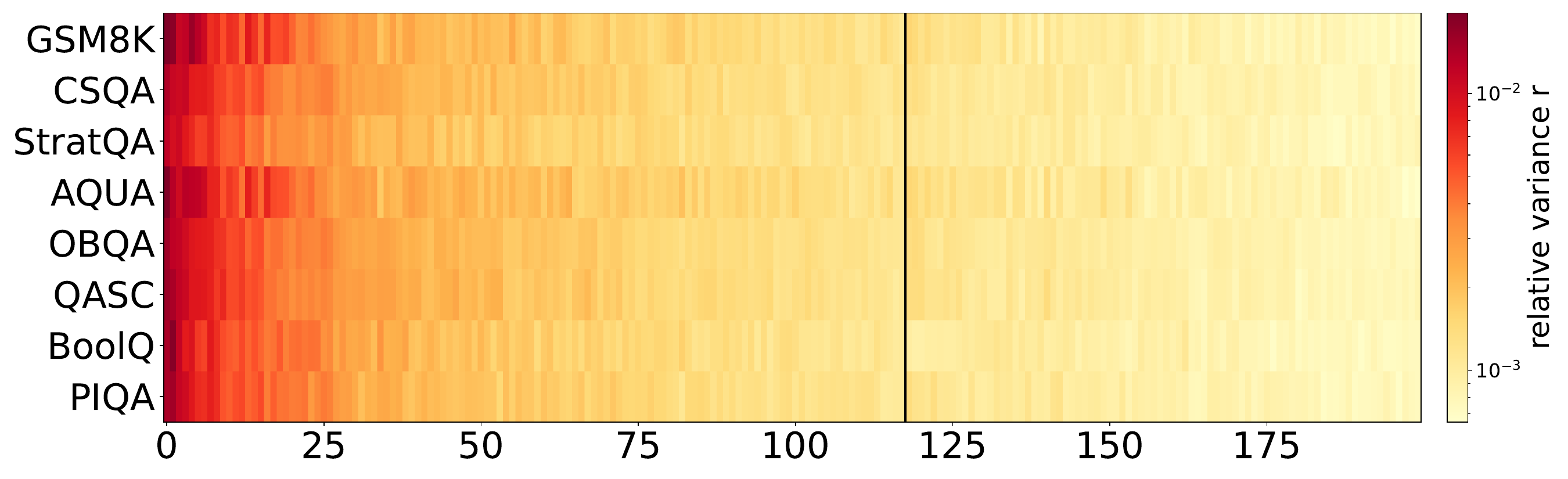}
  \vspace{-2em}
  \caption{Layer $4$}
  \label{fig:sharedness-heatmap-4}
\end{subfigure}

\vspace{-0.1em}

\begin{subfigure}[t]{\linewidth}
  \centering
  \includegraphics[width=\linewidth]{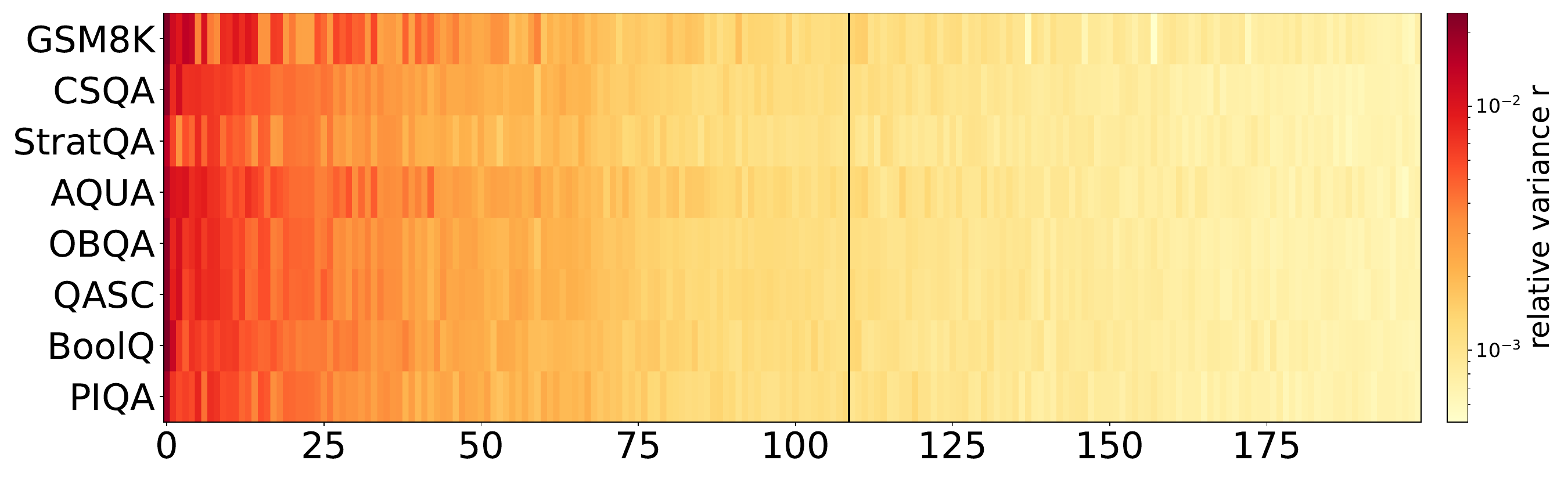}
  \vspace{-2em}
  \caption{Layer $10$}
  \label{fig:sharedness-heatmap-10}
\end{subfigure}

\vspace{-0.1em}
\begin{subfigure}[t]{\linewidth}
  \centering
  \includegraphics[width=\linewidth]{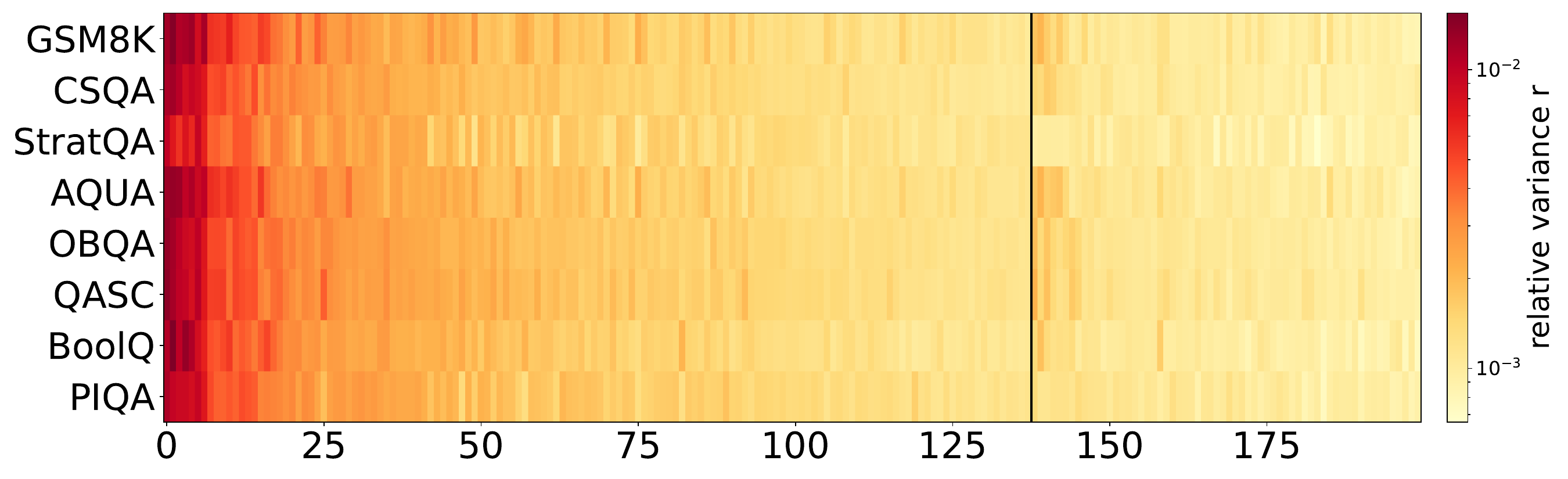}
  \vspace{-2em}
  \caption{Layer $24$}
  \label{fig:sharedness-heatmap-24}
\end{subfigure}

\vspace{-0.2em}

\caption{\textbf{Sharedness heatmaps across layers.} Rows are tasks and columns are pooled PCA components. Color shows the relative variance contribution $r_{\ell,t,i}$. Components are sorted by the number of tasks with $r_{\ell,t,i}\ge \tau$; the vertical line marks the boundary of the shared set $S_\ell(\tau,m)$.}

\label{fig:sharedness-heatmaps-4-24}
\end{figure}

\paragraph{Pooled PCA basis (computed via SVD).}
We pool (stack) all task-centered states:
\[
\tilde X_\ell=\mathrm{concat}_{t\in\mathcal{T}}\,\tilde X_{\ell,t}
\in\mathbb{R}^{(|\mathcal{T}|n_\ell)\times d}.
\]
We define a pooled PCA basis by running \emph{PCA} on this pooled matrix, i.e., taking the top eigenvectors
of its empirical covariance
$C_\ell=\frac{1}{|\mathcal{T}|n_\ell}\tilde X_\ell^\top \tilde X_\ell$.
In implementation, we obtain the same PCA directions by an SVD of the pooled matrix:
\[
\tilde X_\ell = U\,\mathrm{diag}(\sigma_1,\ldots,\sigma_d)\,V^\top,
\]
where $\sigma_1\ge \sigma_2\ge \cdots \ge 0$ are the singular values.
For centered data, the PCA directions are the columns of $V$, and the variance captured by the $i$-th
direction is proportional to $\sigma_i^2$. We choose the smallest $k$ that retains at least a fraction $\rho$ of total variance:
\[
k=\min\left\{m:\frac{\sum_{i=1}^{m}\sigma_i^2}{\sum_{i=1}^{d}\sigma_i^2}\ge\rho\right\},\qquad
Q_\ell = V_{:,1:k}\in\mathbb{R}^{d\times k}.
\]
This pooled PCA basis provides a single cross-task basis set, avoiding per-task PCA basis
misalignment. Empirically, the recovered shared core is stable across a wide range of $\rho$
(e.g., $\rho\in[0.9,0.99]$; Table~\ref{tab:tau_sweep}).


\paragraph{Identifying shared directions via usage scores and thresholding.}
For each task $t\in\mathcal{T}$, we quantify how strongly its decode-time states vary along each pooled PCA direction.
After task-centering, we project onto $Q_\ell\in\mathbb{R}^{d\times k}$,
\[
Z_{\ell,t} \;=\; \tilde X_{\ell,t} Q_\ell \in \mathbb{R}^{n_\ell \times k},
\]
and compute per-direction variance
$v_{\ell,t,i}=\mathrm{Var}\!\left(Z_{\ell,t}[:,i]\right)$ for $i\in[k]$.
To compare usage patterns across tasks and remove overall scale, we normalize within the pooled subspace and define the
\emph{relative variance contribution}
\[
r_{\ell,t,i} \;=\; \frac{v_{\ell,t,i}}{\sum_{j=1}^{k} v_{\ell,t,j}}\in[0,1],\qquad
\sum_{i=1}^{k} r_{\ell,t,i}=1.
\]



Because $r_{\ell,t,i}\ge 0$ and $\sum_{i=1}^k r_{\ell,t,i}=1$, $r_{\ell,t,i}$ is the fraction of task-$t$ variance (within the retained $k$-dimensional pooled subspace) attributed to direction $i$. We mark direction $i$ as \emph{shared} for task $t$ if $r_{\ell,t,i}\ge \tau$.
When PCA retention $\rho$ changes $k$, we report the normalized threshold $\tau k$ for comparability:
$\tau k = \mathcal O(1)$ means the threshold is roughly constant in diffuse baseline $1/k$ (i.e., $\tau=\mathcal O(1/k)$).


We aggregate activity across tasks by defining the shared index set:
\[
S_\ell(\tau,m)=\left\{ i\in[k] : \big|\{t\in\mathcal{T}: r_{\ell,t,i}\ge \tau\}\big|\ge m \right\}.
\]
We take the decode-aligned shared basis as
\[
Q_\ell^{(S)} = Q_\ell[:,S_\ell(\tau,m)]\in\mathbb{R}^{d\times |S_\ell(\tau,m)|}.
\]
We denote the corresponding shared subspace by $\mathcal{S}_\ell := \mathrm{span}(Q_\ell^{(S)})$ and its projector by
$\Pi_\ell := \Pi(Q_\ell^{(S)}) = Q_\ell^{(S)}(Q_\ell^{(S)})^\top$.

\subsection{Decode-only causal tests with matched controls}
\label{sec:method:intervention}

\paragraph{Intervene at the decode-only boundary.}
We test causal relevance by intervening on the decode-time decision state $h_\ell^{(s)}$ only during KV-cached decoding. This preserves the prefill pass and avoids conflating decision-time effects with prompt-processing dynamics.

\paragraph{Remove a subspace by projection.}
At decode step $s$, we ablate a subspace spanned by an orthonormal basis
$Q\in\mathbb{R}^{d\times k_Q}$ by subtracting its projection from the state:
\begin{equation}
\tilde h_\ell^{(s)}
=
h_\ell^{(s)} - \alpha\, QQ^\top h_\ell^{(s)},
\label{eq:decode_only_ablation}
\end{equation}
with intervention strength $\alpha\ge 0$.
We set $Q=Q_\ell^{(S)}$ for shared-subspace ablations, and use alternative $Q$ for controls below.
Crucially, Eq.~\eqref{eq:decode_only_ablation} is applied only on decode pass, leaving prefill untouched.

\paragraph{Size- and energy-matched controls.}
Decode-time representations are anisotropic, so removing high-energy subspace can be generically harmful.
To isolate the effect of \emph{which directions} are removed (sharedness) from \emph{how much} is removed, we compare to non-shared control subspaces matched in (i) dimension $k_Q$ and (ii) removed energy under the same decode-time distribution.
We estimate removed energy on calibration decode states as
\[
E(Q)=\mathbb{E}_{h\sim D_{\text{decode}}(\cdot,\ell)}\!\left[\|QQ^\top h\|_2^2\right],
\]
and construct controls so that either we tune $\alpha$ to match $E(Q)$ at fixed dimension, or we match $E(Q)$ by choosing $k_Q$ at fixed $\alpha$ (details in Appendix~\ref{app:qa}).

\paragraph{Leave-one-task-out (LOTO) to avoid leakage.}\label{sec:method:metric}
To mitigate pooled-estimation leakage concerns, we optionally estimate the shared basis using all but one task and evaluate causal effects on the held-out task using $Q_{\ell,\neg t}^{(S)}$.

\paragraph{Metrics.}
We report two complementary readouts.
For discrete-choice tasks, our primary metric is \emph{forced-choice accuracy} computed from teacher-forced conditional log probabilities under a fixed prompt, which isolates decision degradation from formatting/termination failures.
We also report task-appropriate native generation metrics (Table~\ref{tab:bench-taxonomy}), including exact match after task-specific extraction when applicable.
Since interventions can trigger output instability, we report generation scores with diagnostics: extraction success rate, EOS rate, and average decoded length.
We report paired bootstrap confidence intervals and conduct paired randomization (sign-flip) tests on per-example score differences.

\begin{figure}[tp]
\centering
\includegraphics[width=\linewidth]{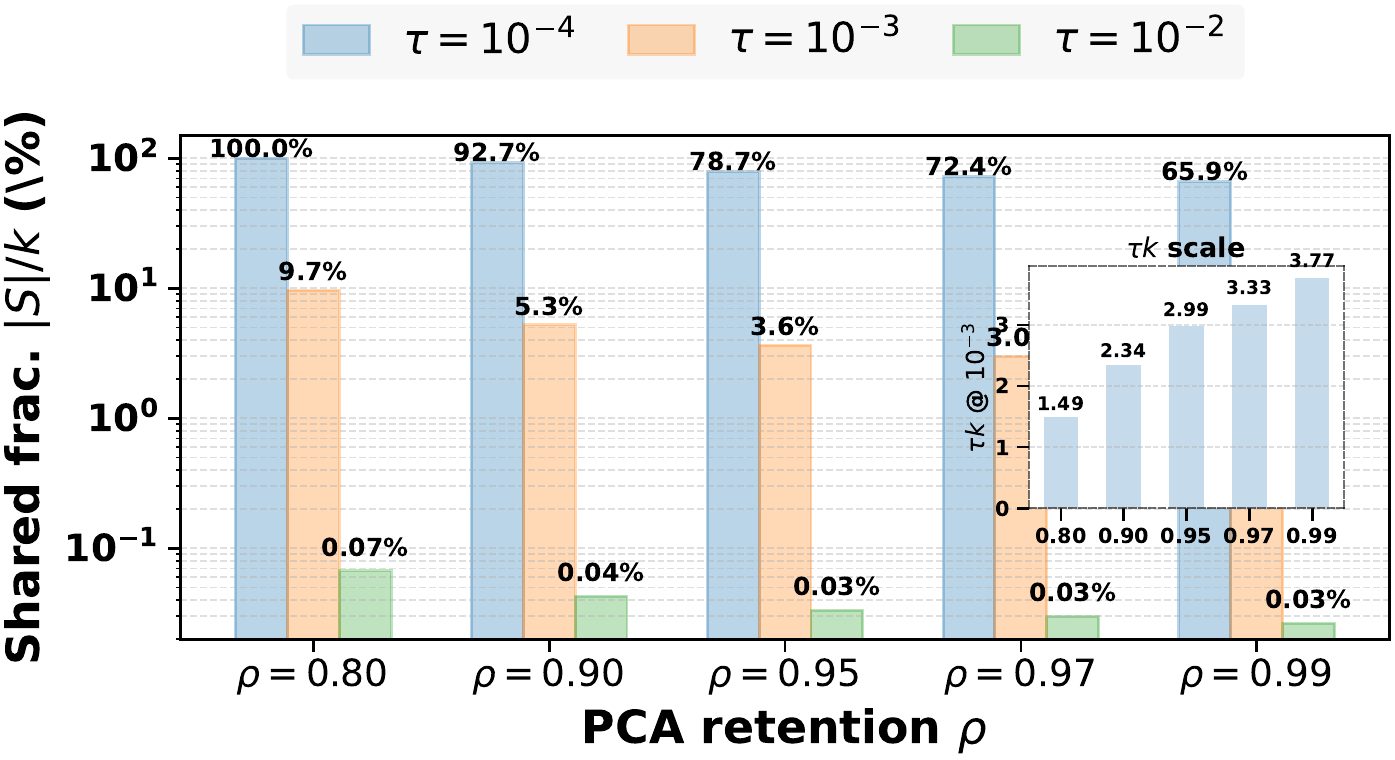}
\caption{\textbf{Sensitivity to $\tau$.} Shared ratio $|S_\ell(\tau,m)|/k$ as a function of the sharedness threshold $\tau$ for several PCA retention levels $\rho$.
The dashed line marks the default $\tau=10^{-3}$; see Table~\ref{tab:tau_sweep} and \S\ref{sec:exp:h1} for the corresponding shared-core sizes and the $\tau k$ interpretation.}
\label{fig:tau_sweep}
\end{figure}

\section{Experiments}
\label{sec:exp}



\paragraph{Setup.}
Our core experiments span a heterogeneous suite of benchmarks covering arithmetic, commonsense and knowledge-intensive QA, verification, and physical reasoning (Table~\ref{tab:bench-taxonomy}). We additionally run targeted robustness checks across multiple backbone models including
 \texttt{meta-llama/\allowbreak Llama-2-7b-\allowbreak chat-hf}, 
\texttt{Qwen2.5-\allowbreak 7B-\allowbreak Instruct}, and 
\texttt{Falcon-\allowbreak 7b-\allowbreak instruct} and at representative early/mid/late layers (e.g., $\ell\in\{4,10,24\}$). 
For each task, we estimate subspaces from 128 calibration prompts and evaluate on held-out examples under decoding, reporting paired bootstrap confidence intervals and paired sign-flip tests. 
Beyond the core suite, auxiliary experiments cover additional settings, including coding and language-understanding benchmarks (e.g., HumanEval, SST-2, RTE) and a lexicon-based style set. Unless otherwise noted, we report decision-level accuracy with forced-choice readout (see \S\ref{sec:method:metric} and Table~\ref{tab:bench-taxonomy}).

\begin{figure}[t]
  \centering
  \includegraphics[width=\linewidth]{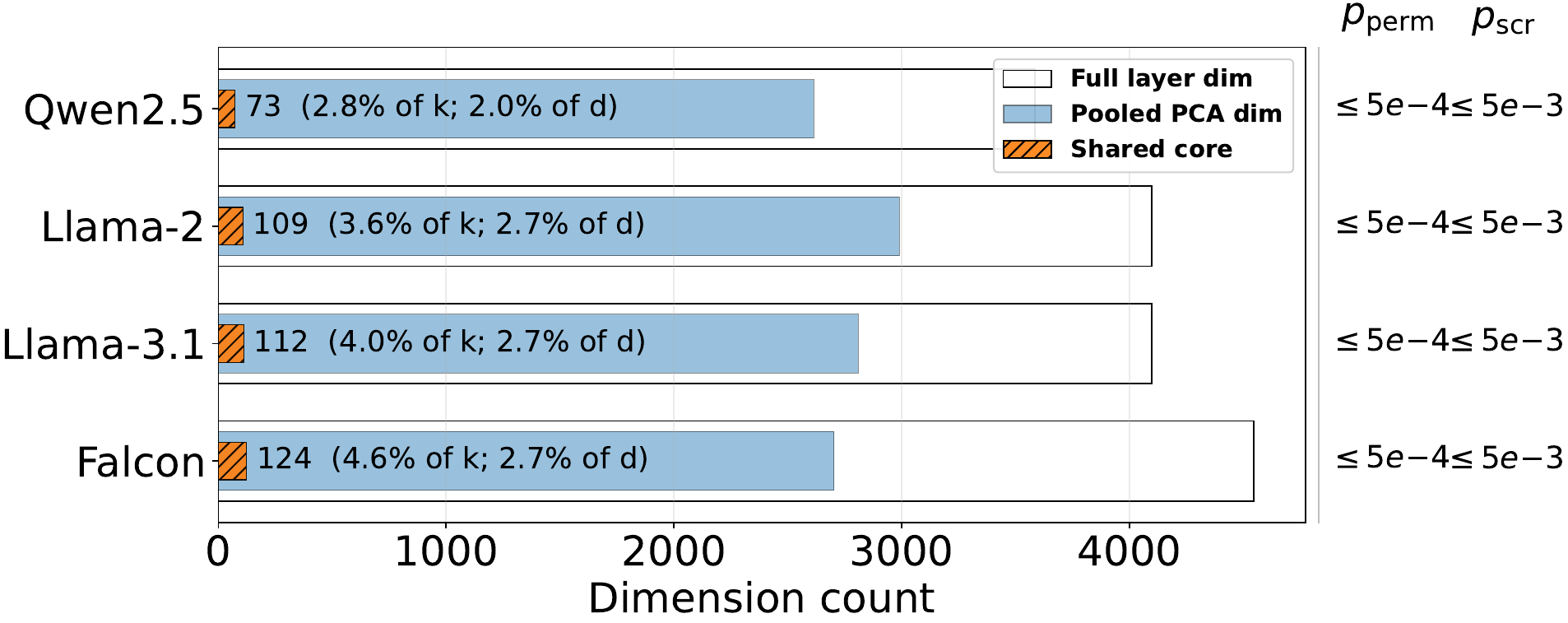}

  \caption{\textbf{H1: a small but statistically significant shared subspace (layer $\ell{=}10$).}
  Scale of the shared core $|\mathcal{S}_\ell(\tau,m)|$ relative to the full layer width $d$ and pooled PCA dimension $k$. Although $|\mathcal{S}_\ell|$ is only a few percent of $d$ (and of $k$), both permutation and scramble tests reject the null (Table~\ref{tab:exists_scramble_200}).}
  \label{fig:h1:null-hist}
\end{figure}

\subsection{\hyptag{H1} Shared structure exists at decision time}
\label{sec:exp:h1}


We begin by testing \hyptag{\textbf{H1}}: whether decode-time decision states exhibit cross-task shared structure beyond chance.
We compute the shared-set size $T_\ell \triangleq |S_\ell(\tau,m)|$ at layer $\ell$ and compare it against the two task-preserving nulls in \S\ref{sec:method:shareness_tests}.
We reject H1 if $T_\ell$ is not significantly larger than either null (or collapses to near-zero across layers/models).

\paragraph{H1.1 Existence and significance under strong nulls.}
Using the strict all-tasks sharing rule ($m_{\mathrm{shared}}=\texttt{all}$) at a fixed internal layer, we consistently recover a small but nontrivial shared set across models (Table~\ref{tab:exists_scramble_200}).
Against both task-preserving null baselines (within-task permutation and orthogonal-scramble; \S\ref{sec:method:shareness_tests}), the null shared counts remain near-zero and never match the observed $T_\ell$, so the Monte Carlo tests reach the finite-sampling upper-bound regime reported in Table~\ref{tab:exists_scramble_200}.
Together, this rules out chance-level explanations under strong nulls and supports that the decode-time shared subspace is a robust signal.
(Full per-layer sweeps are deferred to Appendix Table~\ref{tab:shareness_results_full}.)

\paragraph{H1.2 Threshold calibration and robustness.}
We next test sensitivity to PCA retention $\rho$ (which changes the retained dimension $k$) and the sharedness threshold $\tau$.
To compare thresholds across different $k$, we interpret $\tau$ relative to the diffuse baseline $1/k$ and summarize stringency by the dimensionless scale $\tau k$ (equivalently, $\tau=\mathcal O(1/k)$ when $\tau k=\mathcal O(1)$).
Sweeping $\rho$ and $\tau$ (Fig.~\ref{fig:tau_sweep}) reveals a stable intermediate regime where the shared set remains compact and varies smoothly with $\rho$, whereas overly permissive/strict thresholds lead to the expected degenerate behaviors (nearly-all shared vs.\ near-empty).
Our default $\tau=10^{-3}$ lies in this scale-compatible regime across the sweep (Fig.~\ref{fig:tau_sweep}, inset figure); additional calibrations and variants are in Appendix~\ref{app:subspace}.

\paragraph{Where is sharing stronger?}
As an auxiliary analysis, within-category pairs show notably stronger sharing for mathematical reasoning, while other coarse categories are close to the mixed-category baseline (Appendix~\ref{app:subspace}, Fig.~\ref{fig:within-mixed}).

\begin{figure}[t]
\centering
\label{fig:flip_case}
\begin{tcolorbox}[
  colback=white,
  colframe=black,
  colbacktitle=black!15,
  coltitle=black,
  title=\textbf{Flip Example (forced-choice; ex\_id=aqua-test-9)},
  boxrule=0.8pt,
  arc=1.2mm,
  left=1.2mm,right=1.2mm,top=1.0mm,bottom=1.0mm
]
\small
\textbf{Gold:} \texttt{B}\quad
\textbf{Intervention:} shared-removal at layer $\ell{=}10$ (decode-only).

\vspace{4pt}
\textbf{Prompt (truncated):}
\begin{quote}\ttfamily\small
Question: A newspaper costs \$4 on Sunday ... how many newspapers does it buy on Monday?
Choices: A)45 B)15 C)60 D)30 E)75
\end{quote}
\normalfont

\vspace{6pt}\hrule\vspace{6pt}

\textbf{\texttt{baseline}}: \texttt{B} (correct=true, margin=+1.08)\\
\textbf{\texttt{shared-removed}}: \texttt{A} (correct=false, margin=-3.30) \quad \emph{(flip)}
\end{tcolorbox}
\caption{\textbf{Flip case study (forced-choice).}
The baseline forced-choice prediction matches the gold answer.
Removing the shared decode subspace at layer $\ell=10$ during cached decoding flips the predicted choice demonstrating that the shared decode pathway can be \emph{causally necessary}.}
 
\end{figure}

\subsection{\hyptag{H2} Decode-time shared subspace is causal}
\label{sec:exp:h2}

We test whether the estimated shared subspace $\mathcal{S}_\ell$ is \emph{causally necessary} for decode-time decisions under KV-cached decoding.
We use three complementary checks: (i) decode-only ablation versus budget-matched non-shared controls, (ii) decision-level evaluation and leave-one-task-out (LOTO) re-estimation to rule out scoring/leakage confounds, and (iii) patchback to test sufficiency and specificity (\S\ref{sec:method:intervention}; Appendix~\ref{sec:appendix_H2_add_res}). The formal definition of patchback and flips are available at Appendix~\ref{appendix:patchback}.

\paragraph{H2.1 Decode-only ablation vs.\ matched controls}
\label{sec:exp:h2:main}
Removing $\mathcal{S}_\ell$ \emph{only during KV-cached decoding} causes a consistent performance collapse across tasks and models (Table~\ref{tab:h2-alltasks-gen}; Fig.~\ref{fig:stitch}),
whereas dimension/energy-matched non-shared controls stay near baseline. This separation indicates a direction-specific effect rather than a generic consequence of reducing activation energy or rank.

Because open-ended generation accuracy can be confounded by format/termination failures (e.g., extraction/EOS issues), we also evaluate a staged protocol that constrains only the final answer format.
The same shared--control gap persists under staging (Table~\ref{tab:disturb_cot_overview_structured}), motivating decision-level tests next.

\paragraph{H2.2 Decision-level degradation}
\label{sec:exp:h2:forcedchoice}
To isolate decision quality from generation formatting, we evaluate a forced-choice readout that ranks candidate completions (\S\ref{sec:method:intervention}).
Shared-subspace removal still produces a clear decision-level drop, while the matched non-shared control remains near baseline (Table~\ref{tab:loto-combined}, Part~B; Fig.~\ref{fig:stitch}).

We further rule out task leakage in subspace construction by re-estimating $\mathcal{S}_\ell$ in a leave-one-task-out manner (LOTO).
The qualitative signature is unchanged: LOTO shared removal remains harmful under generation (Table~\ref{tab:loto-combined}, Part~A) and under forced-choice (Table~\ref{tab:loto-combined}, Part~B),
while matched controls stay near baseline (Table~\ref{tab:loto-combined}).
Together, these results support that $\mathcal{S}_\ell$ is causally involved in decode-time \emph{decisions}, not merely output formatting.

\paragraph{Case study (flip).}
To make the decision-level effect concrete, \Cref{fig:flip_case} shows a baseline-correct AQuA example whose prediction flips
under decode-only removal of the shared decode-time subspace $\mathcal{S}_\ell$ (at $\ell=10$), illustrating causal necessity.

\begin{figure}[t]
\centering
\label{fig:patchback_case}
\begin{tcolorbox}[
  colback=white,
  colframe=black,
  colbacktitle=black!15,
  coltitle=black,
  title=\textbf{Patchback example (boolq; ex\_id=boolq-validation-3)},
  boxrule=0.8pt,
  arc=1.2mm,
  left=1.2mm,right=1.2mm,top=1.0mm,bottom=1.0mm
]
\small
\textbf{Dataset}: boolq\\
\textbf{Gold}: \texttt{B}\\
\textbf{Prompt} (truncated):
\begin{quote}\ttfamily
Question: Passage: Open carry is also legal throughout North Carolina. In the town of Chapel Hill, open carry is restricted to guns of a certain minimum size ... \textbackslash nQuestion do you need a permit to open carry in nc Choices: A) Yes B) No Reason step by step. At the end, write exactly one line: "Final answer: <A/B>".
\end{quote}
\vspace{6pt}\hrule\vspace{6pt}
\textbf{\texttt{baseline}}: \texttt{B} (correct=true, margin=+5.19)\\
\textbf{\texttt{patchback (W=\{0,1\})}}: \texttt{B} (correct=true, margin=+5.19)\\
\textbf{\texttt{rand subspace}}: \texttt{A} (correct=false, margin=-3.66)\\
\textbf{\texttt{time-shuf donor}}: \texttt{B} (correct=true, margin=+2.84)\\
\textbf{\texttt{non-shared patch}}: \texttt{A} (correct=false, margin=-1.96)\\
\end{tcolorbox}
\caption{\textbf{Patchback case study (forced-choice).}
Patching back only the removed \emph{shared}
component over a narrow decode window $W=\{0,1\}$ restores the baseline answer, while energy/dimension-matched
controls do not.}

\end{figure}

\begin{figure*}[t]
  \centering
  \begin{minipage}[t]{0.33\linewidth}
    \centering
    \includegraphics[width=\linewidth]{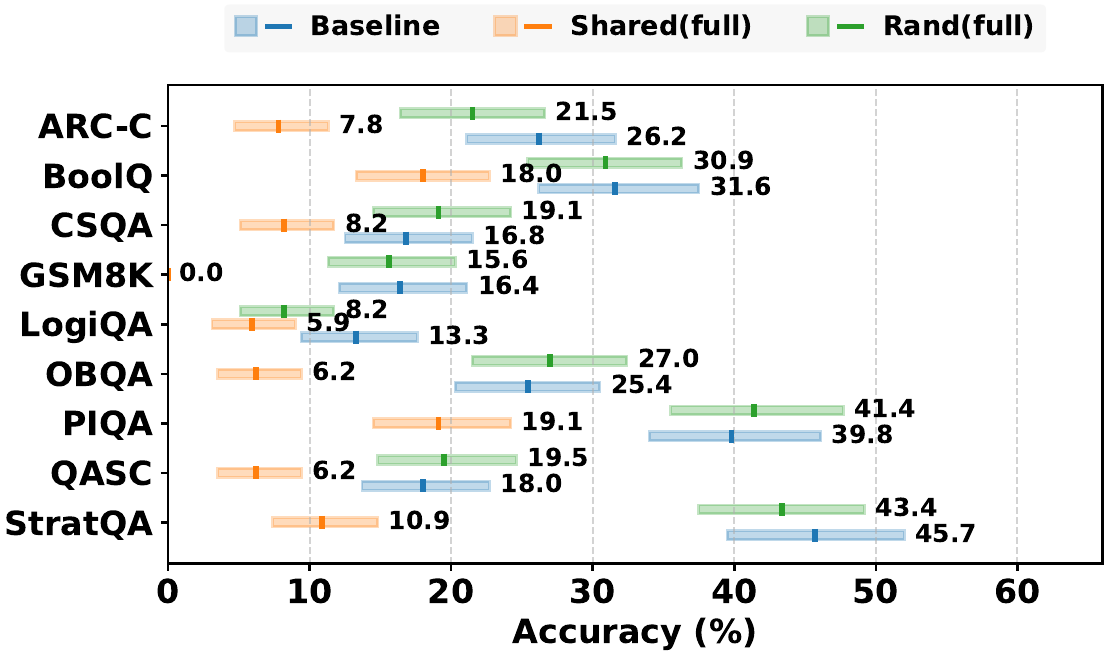}
    \vspace{-2mm}
  \end{minipage}\hfill
  \begin{minipage}[t]{0.33\linewidth}
    \centering
    \includegraphics[width=\linewidth]{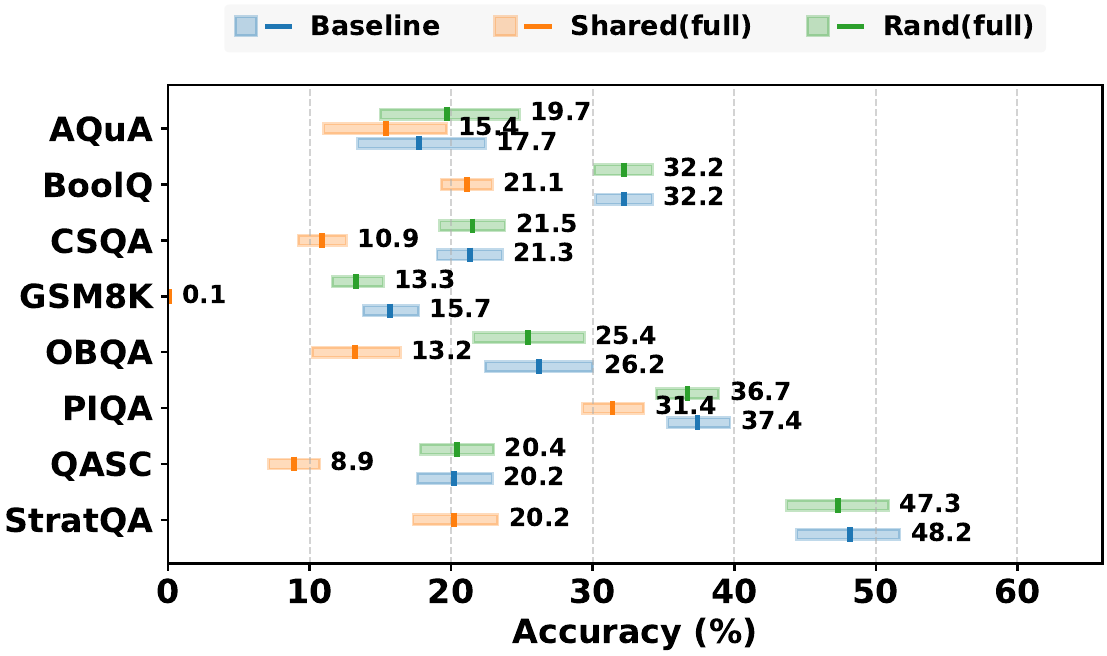}
    \vspace{-2mm}
  \end{minipage}\hfill
  \begin{minipage}[t]{0.33\linewidth}
    \centering
    \includegraphics[width=\linewidth]{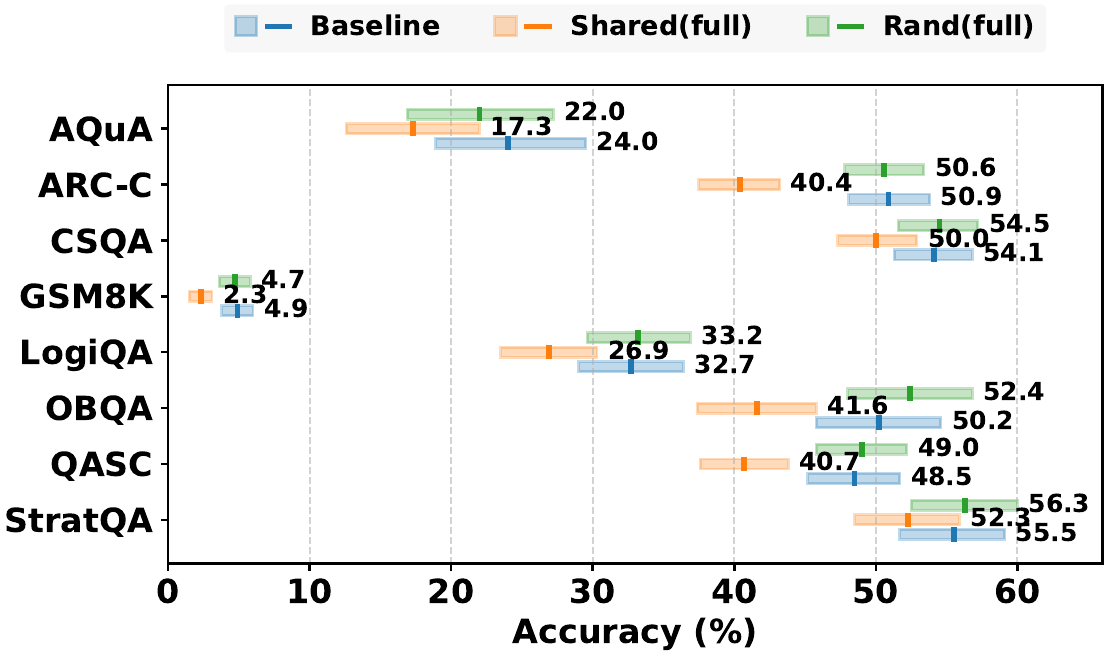}
    \vspace{-2mm}
  \end{minipage}
  \vspace{-0.2cm}
  \caption{\textbf{Subspace intervention results.} From left to right: (a) All-tasks Generation, (b) LOTO Generation, (c) LOTO Forced-Choice Generation. Rectangles show 95\% CIs; vertical ticks indicate means; numbers are mean accuracies. More experimental details are available in Appendix~\ref{sec:appendix_H2_add_res},~\S\ref{sec:method:intervention},and Table ~\ref{tab:loto-combined},~\ref{tab:disturb_cot_overview_structured}.}
  \label{fig:stitch}
\end{figure*}

\paragraph{H2.3 Patchback closes the causal loop.}
Decode-only ablation establishes necessity; patchback tests whether the removed shared component is also sufficient to \emph{rescue} the decision.
We record the shared component $\Pi_\ell h_\ell^{(s)}$ from the unmodified run, rerun the ablated model, and restore \emph{only} this shared component over either the full intervention window or a single decode step (details in \S\ref{sec:method:intervention}).


Across models/layers, targeted patching rescues a large fraction of ablation-induced flips, while energy-matched controls
(random vectors within $\mathcal{S}_\ell$, random subspaces of matched dimension, or patching outside $\mathcal{S}_\ell$) do not
(Table~\ref{tab:patchback_mc_agg}).
In particular, patching outside $\mathcal{S}_\ell$ yields no rescue throughout, demonstrating strong specificity.

\paragraph{Case study (patchback).}
\Cref{fig:patchback_case} illustrates a representative forced-choice example:
patching back only the removed \emph{shared} component over a narrow window $W=\{0,1\}$ restores the baseline answer, whereas energy/dimension-matched controls (random subspaces or non-shared patches) fail. 

Two additional diagnostics address common patchback confounds (Appendix~\ref{sec:appendix_H2_add_res}).
On the AQuA flip-set, we run \emph{transfer-donor} patching by taking the recorded shared component from a \emph{different} baseline-correct example (optionally from a different task) and patching it into the ablated run.
This rescues flips about as often as self-donor patching, so the rescue is not just copying a sample-specific answer.
For open-answer patchback, we run a \emph{time-shuffle} control by taking the shared component from a different decode step of the same example and using it as the donor.
This works almost as well as \texttt{Patched(self)}, which means the shared signal in $\mathcal{S}_\ell$ changes little across decode steps and a single-step patch is often enough.

\subsection{\hyptag{H3} Prefill-to-decode transfer often fails}
\label{sec:exp:h3}

\noindent\textbf{Setup.}
We test whether the shared-subspace estimate must be aligned to the \emph{decode-time} distribution.
At the same layer $\ell$, we estimate shared bases from \textbf{decode} states versus \textbf{prefill} states with matched rank $k_{\text{match}}$,
and evaluate the resulting $2\times2$ estimator--intervention grid under identical budgets and matched random controls (Table~\ref{tab:h3:2x2_main}). We fix the intervention locus to decode-only and then swap the estimator (decode-est vs.\ prefill-est), which directly tests prefill-decode mismatch.

\begin{table*}[t]
\centering
\scriptsize
\setlength{\tabcolsep}{6pt}
\renewcommand{\arraystretch}{1.05}
\begin{threeparttable}
\caption{\textbf{Patchback on flips (multiple-choice), aggregated over tasks.}
$|\flipset|$ counts baseline $\to$ ablated flips. Rescue rates are computed on $\flipset$ and aggregated across tasks by summing rescued/total counts.
Green columns are targeted patching along the shared decode subspace $S_\ell$; blue columns are controls.}
\label{tab:patchback_mc_agg}
\begin{tabular}{llr
    >{\columncolor{PatchGreen}}r
    >{\columncolor{PatchGreenStrong}}r
    >{\columncolor{CtrlBlue}}r
    >{\columncolor{CtrlBlue}}r
    >{\columncolor{CtrlBlue}}r}
\toprule
\rowcolor{HeaderGray}
Model & Layer & $|\flipset|$ &
\multicolumn{2}{c}{\cellcolor{PatchGreenStrong}\textbf{Targeted patch}} &
\multicolumn{3}{c}{\cellcolor{CtrlBlue}\textbf{Controls}} \\
\cmidrule(lr){4-5}\cmidrule(lr){6-8}
\rowcolor{HeaderGray}
& & &
\textbf{Patched@0} & \textbf{Patched@full} &
\makecell{\textbf{RandVec}\\\textbf{(shared)}} &
\makecell{\textbf{Rand}\\\textbf{Subspace}} &
\makecell{\textbf{Nonshared}\\\textbf{Patch}} \\
\midrule
Llama-2-7B-Chat & 4  & 202 & \resc{84.2}{170}{202} & \textbf{\resc{100.0}{202}{202}} & \resc{19.3}{39}{202} & \resc{16.3}{33}{202} & \resc{0.0}{0}{202} \\
Llama-2-7B-Chat & 10 & 449 & \resc{85.7}{385}{449} & \textbf{\resc{100.0}{449}{449}} & \resc{7.6}{34}{449} & \resc{5.3}{24}{449} & \resc{0.0}{0}{449} \\
Llama-2-7B-Chat & 24 & 297 & \resc{65.0}{193}{297} & \textbf{\resc{100.0}{297}{297}} & \resc{8.8}{26}{297} & \resc{6.1}{18}{297} & \resc{0.0}{0}{297} \\
\addlinespace[2pt]
Falcon-7B-Instruct & 4  & 91  & \resc{100.0}{91}{91} & \textbf{\resc{100.0}{91}{91}} & \resc{9.9}{9}{91} & \resc{0.0}{0}{91} & \resc{0.0}{0}{91} \\
Falcon-7B-Instruct & 10 & 97  & \resc{100.0}{97}{97} & \textbf{\resc{100.0}{97}{97}} & \resc{0.0}{0}{97} & \resc{1.0}{1}{97} & \resc{0.0}{0}{97} \\
Falcon-7B-Instruct & 24 & 192 & \resc{100.0}{192}{192} & \textbf{\resc{100.0}{192}{192}} & \resc{19.3}{37}{192} & \resc{19.8}{38}{192} & \resc{0.0}{0}{192} \\
\addlinespace[2pt]
Qwen2.5-7B-Instruct & 4  & 325 & \resc{100.0}{325}{325} & \textbf{\resc{100.0}{325}{325}} & \resc{32.6}{106}{325} & \resc{32.9}{107}{325} & \resc{0.0}{0}{325} \\
Qwen2.5-7B-Instruct & 10 & 548 & \resc{100.0}{548}{548} & \textbf{\resc{100.0}{548}{548}} & \resc{36.1}{198}{548} & \resc{49.8}{273}{548} & \resc{0.0}{0}{548} \\
Qwen2.5-7B-Instruct & 24 & 453 & \resc{100.0}{453}{453} & \textbf{\resc{100.0}{453}{453}} & \resc{37.5}{170}{453} & \resc{39.3}{178}{453} & \resc{0.0}{0}{453} \\
\bottomrule
\end{tabular}
\end{threeparttable}
\end{table*}

\noindent\textbf{Geometric mismatch.}
The prefill- and decode-estimated shared subspaces are strongly misaligned even after rank matching (Table~\ref{tab:h3:subspace_stats}; layer sweep in Appendix~\ref{sec:h3_prefill_decode_mismatch}),
indicating that prefill activations do not provide a reliable geometric proxy for the decode-time shared workspace.

\noindent\textbf{Causal consequence at decode.}
This mismatch matters causally: under decode-only intervention, ablating the \emph{decode-estimated} shared basis produces a large and consistent accuracy drop,
while ablating the \emph{prefill-estimated} basis largely removes the effect and behaves like matched random controls (Table~\ref{tab:h3:2x2_main}).
Interventions at prefill are negligible across conditions.
Together, these results support a distribution-aligned view: to reproduce decode-locus causal effects under matched budgets, the shared subspace must be estimated from decode-time states.

\begin{table}[t]
\centering
\caption{
\textbf{The shared channel is not a narrow readout slice.}
A 32D Llama-2-7B layer-10 workspace is split into a 3D readout slice $Q_{\rm out}$ and a 29D core $Q_{\rm core}$. 
Panels A and B report forced-choice accuracy (baseline: 44.1\%) and held-out probe AP, respectively.
}
\label{tab:shared_core_probe}
\small
\setlength{\tabcolsep}{6pt}
\renewcommand{\arraystretch}{1.15} 

\rowcolors{3}{}{gray!8} 

\begin{tabular}{lrrr}
\toprule
\rowcolor{gray!15} \multicolumn{4}{l}{\textbf{A. Causal ablation}} \\
\midrule
Ablated subspace & Dim. & Acc. & $\Delta$Acc. \\
\midrule
Full shared        & 32 & 28.5 & $-15.6$ \\
$Q_{\rm out}$      & 3  & 44.9 & $+0.8$ \\
$Q_{\rm core}$     & 29 & 27.3 & $-16.8$ \\
\midrule
\rowcolor{gray!15} \multicolumn{4}{l}{\textbf{B. Held-out probe AP}} \\
\midrule
Probe tag & \multicolumn{1}{c}{$Q_{\rm core}$} & \multicolumn{1}{c}{$Q_{\rm out}$} & \multicolumn{1}{c}{$\Delta$} \\
\midrule
Reasoning markers & 0.564 & 0.041 & +0.523 \\
Step markers      & 0.169 & 0.029 & +0.140 \\
Digits            & 0.966 & 0.132 & +0.834 \\
Equation symbols  & 0.673 & 0.055 & +0.618 \\
\bottomrule
\end{tabular}
\vspace{-0.5cm}
\end{table}



\begin{table}[t]
\centering
\caption{
\textbf{Decode-shared directions are enriched for decision-scaffold tokens.}
Vocab-alignment scores (mean overlaps) for Llama-2-7B layer 28.
}
\label{tab:unembedding_probe}
\small
\setlength{\tabcolsep}{8pt} 
\renewcommand{\arraystretch}{1.15} 

\rowcolors{2}{}{gray!8}
\resizebox{\linewidth}{!}{
\begin{tabular}{lccc}
\toprule
\rowcolor{gray!15} Token family & Shared &  Nonshared & Prefill shared \\
\midrule
Answer scaffold     & \textbf{0.283} & 0.213 & 0.208 \\
Correctness markers & \textbf{0.207} & 0.189 & 0.182 \\
Confidence markers  & \textbf{0.234} & 0.195 & 0.221 \\
Newline             & \textbf{0.374} & 0.204 & 0.190 \\
Digits              & \textbf{0.314} & 0.261 & 0.159 \\
Sentiment markers   & 0.171 & 0.176 & \textbf{0.182} \\
\bottomrule
\end{tabular}
}
\end{table}

\paragraph{What does the shared channel encode?}
While our causal interventions establish the decision-relevance of the decode-shared subspace, they do not specify its computational vocabulary. To characterize its contents, we decouple the 32D shared workspace into a 3D readout slice $Q_{\rm out}$ and its 29D residual core $Q_{\rm core}$ (Table~\ref{tab:shared_core_probe}). 

Causal ablation reveals a clear functional divergence between these two components. Suppressing $Q_{\rm core}$ reproduces the massive performance drop of the full subspace (over 15\% loss), whereas removing $Q_{\rm out}$ leaves the model performance intact (Table~\ref{tab:shared_core_probe}A). Linear probing results in Panel B explain this divergence: $Q_{\rm core}$ maintains high predictive accuracy for procedural and symbolic tokens, while $Q_{\rm out}$ remains near baseline. Therefore, the geometric split matches the functional split: the residual core stores structured task information, while the smaller slice serves as a clean interface for reading out tokens.

Second, a logit-lens analysis shows that these decode-shared directions are uniquely enriched for answer-scaffold and formatting tokens, but not for generic sentiment markers (Table~\ref{tab:unembedding_probe}). This selectivity confirms that the channel does not process all semantic information indiscriminately. Instead, it operates specifically as a decode-time decision scaffold for procedural reasoning and verification.

\paragraph{Scale Diagnostics.} Targeted non-7B checks further show the same qualitative pattern at 3B, 13B, and 70B; we report these robustness checks in Appendix~\ref{appendix:scale_checks}.


\begin{table}[t]
\centering
\caption{
\textbf{Decode-aligned ranking better matches held-out decode utility.} The pools contain 32 CAA contrastive vectors, 64 instruction-derived vectors, 64 SAE feature directions, and 100 diagnostic directions.
}
\label{tab:ranking_alignment}
\small
\setlength{\tabcolsep}{7pt}
\renewcommand{\arraystretch}{1.12}
\begin{tabular}{lc>{\columncolor{blue!7}}cc}
\toprule
Pool & Prefill $\rho$ & Decode $\rho$ & $\Delta$ \\
\midrule
CAA contrastive & -0.370 & \textbf{0.700} & +1.070 \\
Instruction     &  0.172 & \textbf{0.767} & +0.595 \\
SAE features    & -0.064 & \textbf{0.594} & +0.659 \\
Diagnostic      &  0.065 & \textbf{0.700} & +0.635 \\
\bottomrule
\end{tabular}
\end{table}


\begin{table}[t]
\centering
\caption{
\textbf{Decode-aligned validation selects better deployed vectors.}
Metrics are held-out accuracy (utility) evaluated on the real-world downstream task (REAL) for selected vectors.
}
\label{tab:selection_deployment}
\small
\setlength{\tabcolsep}{6pt}
\renewcommand{\arraystretch}{1.12}
\resizebox{\linewidth}{!}{%
\begin{tabular}{lcccc}
\toprule
Proxy & REAL mean & REAL worst & Flip rate & Regret@1 \\
\midrule
Prefill-aligned & -0.002 & -0.003 & 0.750 & 0.016 \\
Mixed Stages  & -0.002 & -0.003 & 0.750 & 0.016 \\
\rowcolor{blue!7} Decode-aligned  & \textbf{+0.011} & \textbf{+0.010} & \textbf{0.083} & \textbf{0.003} \\
\bottomrule
\end{tabular}
}
\end{table}

\sisetup{
  table-number-alignment = center,
  detect-weight = true,
  detect-inline-weight = math,
  group-digits = false,
  input-signs = +-,              
  retain-explicit-plus = true,   
  add-integer-zero = false       
}

\begin{table*}[t]
\centering
\scriptsize
\setlength{\tabcolsep}{1.8pt}
\renewcommand{\arraystretch}{1.06}
\setlength{\abovecaptionskip}{2pt}
\setlength{\belowcaptionskip}{0pt}

\caption{\textbf{Multibench behavior under KV-cache decode.}
Mean signed margin shift ($\mu$), template std ($\sigma_{\text{tmpl}}$), worst-case template mean (worst; primary), and worst-case anti-steer rate (anti$_{\text{worst}}$; diagnostic, lower is better).
$\mathrm{sh}(v)$ is overlap with the decode-time shared basis.
Partial projection: $v_\alpha = v - \alpha QQ^\top v$; \textsc{R} is energy-matched random control.
Rightmost columns report $\Delta$ between $\alpha{=}1$ and $0$.}
\label{tab:multibench_main}

\begingroup
\rowcolors{3}{black!3}{white}

\resizebox{\textwidth}{!}{%
\begin{tabular}{@{}
  l l
  S[table-figures-integer=0, table-figures-decimal=3]
  *{4}{S[table-figures-integer=0, table-figures-decimal=4, table-sign-mantissa]}
  *{2}{S[table-figures-integer=0, table-figures-decimal=4]}
  *{4}{S[table-figures-integer=0, table-figures-decimal=4, table-sign-mantissa]}
  *{3}{S[table-figures-integer=0, table-figures-decimal=4]}
  >{\columncolor{black!5}}S[table-figures-integer=0, table-figures-decimal=4, table-sign-mantissa]
  >{\columncolor{black!5}}S[table-figures-integer=0, table-figures-decimal=4, table-sign-mantissa]
@{}}
\toprule

\rowcolor{black!8}
\multicolumn{2}{c}{} & \multicolumn{1}{c}{} &
\multicolumn{4}{c}{$\mu \uparrow$} &
\multicolumn{2}{c}{$\sigma_{\text{tmpl}} \downarrow$} &
\multicolumn{4}{c}{worst $\uparrow$} &
\multicolumn{3}{c}{anti$_{\text{worst}} \downarrow$} &
\multicolumn{2}{c}{$\Delta$ (1$-$0)} \\
\cmidrule(lr){4-7}\cmidrule(lr){8-9}\cmidrule(lr){10-13}\cmidrule(lr){14-16}\cmidrule(lr){17-18}

\rowcolor{black!6}
\textbf{Task} & \textbf{Cand.} & \multicolumn{1}{c}{$\mathrm{sh}(v)$} &
\multicolumn{1}{c}{0} & \multicolumn{1}{c}{.5} & \multicolumn{1}{c}{1} & \multicolumn{1}{c}{\textsc{R}} &
\multicolumn{1}{c}{0} & \multicolumn{1}{c}{1} &
\multicolumn{1}{c}{0} & \multicolumn{1}{c}{.5} & \multicolumn{1}{c}{1} & \multicolumn{1}{c}{\textsc{R}} &
\multicolumn{1}{c}{0} & \multicolumn{1}{c}{1} & \multicolumn{1}{c}{\textsc{R}} &
\multicolumn{1}{c}{$\Delta_{\text{worst}}$} & \multicolumn{1}{c}{$\Delta_{\text{anti}}$} \\
\midrule

BoolQ & Y/N & .405
& .0383 & .0357 & .0312 & -.0100
& .0019 & .0034
& .0357 & .0321 & .0267 & -.0118
& .3633 & .3633 & .5695
& -.0090 & .0000 \\
RTE & T/F & .386
& .0173 & .0171 & .0162 & -.0005
& .0095 & .0085
& .0104 & .0101 & .0089 & -.0033
& .4844 & .4648 & .5195
& -.0015 & -.0196 \\
SST-2 & G/B & .391
& .0099 & .0130 & .0154 & .0012
& .0126 & .0134
& -.0007 & .0011 & .0028 & -.0082
& .4336 & .4648 & .5594
& +.0035 & +.0312 \\
\bottomrule
\end{tabular}%
}
\endgroup
\end{table*}

\sisetup{
  table-number-alignment = center,
  detect-weight = true,
  detect-inline-weight = math
}

\newcommand{\mincell}[1]{%
  \multicolumn{1}{>{\columncolor{black!8}\bfseries}S}{#1}%
}

\begin{table}[t]
\centering
\footnotesize
\caption{\textbf{H3 $2\times2$ estimation-intervention grid.}
Accuracy (\%) under matched $k$ (here $k{=}48$) at layer $\ell{=}10$ with the same removal strength ($\alpha{=}1$) across conditions.
Bold indicates the lowest accuracy within each row (ties broken left-to-right).}
\label{tab:h3:2x2_main}
\scriptsize
\setlength{\tabcolsep}{4pt}
\renewcommand{\arraystretch}{1.15}

\begingroup
\rowcolors{3}{black!3}{white}

\begin{tabular}{
  l
  S[table-format=2.1]
  *{3}{S[table-format=2.1]}
  *{3}{S[table-format=2.1]}
}
\toprule
& & \multicolumn{3}{c}{\textbf{Intervene: Decode}} & \multicolumn{3}{c}{\textbf{Intervene: Prefill}} \\
\cmidrule(lr){3-5}\cmidrule(lr){6-8}
\textbf{Task} & \textbf{Base} &
\textbf{Dec-est} & \textbf{Pre-est} & \textbf{Rand} &
\textbf{Dec-est} & \textbf{Pre-est} & \textbf{Rand} \\
\midrule
CSQA    & 53.3 & \mincell{23.1} & 54.3 & 53.5 & 53.2 & 53.3 & 53.3 \\
StratQA & 49.6 & 52.8 & \mincell{49.6} & 49.6 & 49.8 & 49.6 & 49.6 \\
PIQA    & 69.1 & \mincell{52.7} & 67.6 & 69.4 & 69.5 & 69.2 & 69.2 \\
ARC-C   & 51.5 & \mincell{26.5} & 50.9 & 51.9 & 52.6 & 52.6 & 51.6 \\
OBQA    & 51.2 & \mincell{27.2} & 51.8 & 51.8 & 53.8 & 52.8 & 51.2 \\
QASC    & 48.9 & \mincell{13.7} & 50.3 & 49.5 & 49.0 & 49.7 & 48.8 \\
LogiQA  & 32.1 & \mincell{23.2} & 32.7 & 32.1 & 32.4 & 32.9 & 32.1 \\
\midrule
\makecell[l]{\textbf{Mean $\Delta$}\\\textbf{vs.\ base}} &
\multicolumn{1}{c}{--} &
\mincell{-19.5} & 0.2 & 0.3 &
0.7 & 0.6 & 0.0 \\
\bottomrule
\end{tabular}
\endgroup
\end{table}

\section{Downstream Utility}
\label{sec:downstream}

\paragraph{Steering Robustness.}
We study a drop-in repair for arbitrary steering directions $v$ under true KV-cached decoding.
From neutral prompts, we estimate an orthonormal decode-time shared basis $Q$ and measure shared overlap
$\mathrm{sh}(v)=\|Q^\top v\|_2/\|v\|_2$, which is consistently nontrivial across tasks (Table~\ref{tab:multibench_main}).
We then repair $v$ by removing its shared component:
\[
v_\alpha = (I-\alpha QQ^\top)v,\qquad \alpha\in[0,1],
\]
where $\alpha=1$ yields the fully repaired $v_{\text{fixed}}=(I-QQ^\top)v$.

Across tasks, the repaired direction remains clearly non-random under KV-cached decoding (positive mean signed shift and separation from energy-matched random controls),
while often reducing template sensitivity; the magnitude of the robustness gain is task-dependent (Table~\ref{tab:multibench_main}).
Overall, this supports shared-subspace interference as a concrete source of template brittleness, and motivates $v_\alpha$ as a simple offline knob that trades a small amount of average effect for robustness under KV-cached decoding.

\paragraph{Stage-aligned vector selection.}
Across 260 contrastive, steering, and feature vectors, decode-stage ranking aligns better with downstream decode utility than prefill-stage ranking and mixed prefill-decode ranking (Table~\ref{tab:ranking_alignment}). This difference directly improves deployment: evaluated on our downstream benchmark, decode-stage validation selects vectors with positive accuracy gains (utility), lower flip rates, and lower regret than prefill-stage or mixed always-on validation (Table~\ref{tab:selection_deployment}). This trend occurs because prefill and mixed signals fail to isolate decode-specific token dynamics, often leading to sub-optimal vector choices. In contrast, decode-stage validation precisely identifies vectors that maintain stable steering effects during generation, thereby maximizing downstream deployment utility.

\paragraph{Style case study.}
We illustrate the same knob on a lightweight pirate-lexicon steering task (Appendix~\ref{appendix:A4}).
As the retained rank $k$ grows, the shared span captures more of $v$, while the repaired direction $v_{\mathrm{fixed},k}$ becomes numerically orthogonal to it (Table~\ref{tab:app_sharedness_full}).
Behaviorally, moderate projection can improve both average effect and worst-case template performance, whereas overly aggressive removal can suppress the steering signal (Table~\ref{tab:main_sharedspace}), consistent with a denoise--excise tradeoff.


\section{Related Works}
\label{sec:relatedwork}

\vspace{-0.15cm}
\paragraph{Causal interventions \& patching.}
Activation-level causal interventions, including activation patching/causal tracing, causal mediation, and interchange interventions, are widely used to localize and validate mechanisms in LLMs
\citep{vig2020causalmediation,meng2022locating,wanginterpretability,geiger2022inducing,geiger2024alignments}.
At the same time, recent analyses emphasize that patching results can be sensitive to metrics and implementation details, and that subspace patching can create interpretability illusions without appropriate controls
\citep{zhang2024activationpatchingbestpractices,makelov2023subspaceillusion}.
Motivated by these practices, we intervene \emph{only} on KV-cached decode-time decision states and pair shared-subspace ablations with matched nulls and budget-matched controls, then close the loop with patchback specificity tests (\hyptag{\textbf{H2}}).

\vspace{-0.15cm}
\paragraph{KV-cached decoding and cache management.}
KV caching is a first-class serving regime for modern LLM inference (e.g., paging-based decoding) and is also central to streaming/long-context systems that explicitly manage rolling caches
\citep{kwon2023pagedattention,xiao2023streamingllm}.
A growing body of work studies what to retain and how to compress KV caches---including eviction/heavy-hitters, low-rank or reconstruction-based compression, and cross-layer schemes
\citep{zhang2023h2o,ge2024fastgen,saxena2024eigen,li2024snapkv,cai2024pyramidkv,kim2025kvzip,chang2025xkv,khalaf2025qkv}.
These efforts motivate our emphasis on \emph{decode-time} measurements and interventions under true KV-cached decoding; we defer additional system background to Appendix~\ref{sec:discussion_related}.

\vspace{-0.15cm}
\paragraph{Representation engineering and decision-level readouts.}
Representation engineering and activation steering show that low-dimensional directions can reliably influence generations, but their behavior can degrade under context or regime shifts
\citep{turner2023activationengineering,rimsky-etal-2024-steering,zou2023repe,tan2024steeringreliability,deng2025care}.
To separate decision quality from formatting/termination artifacts in open-ended generation, we adopt decision-level probes based on layer-wise vocabulary readouts (lens-style methods)
\citep{belrose2023tunedlens,geva2022ffnpromote},
and use them to quantify regime-specific degradation under KV-cached decode-time interventions.

\vspace{-0.15cm}
\paragraph{Shared computation and low-dimensional structure.}
Circuit-style mechanistic interpretability argues that Transformers can reuse compact computational motifs across diverse behaviors (e.g., induction-like mechanisms), motivating the search for shared structure across tasks
\citep{elhage2021mathematical,olsson2022context,merullocircuit,bhaskar2024finding}.
Complementarily, representation similarity tools (SVCCA/CKA) and ``shared subspace'' hypotheses suggest that important computation may concentrate in low-dimensional subspaces that persist across settings \citep{raghu2017svcca,kornblith2019cka,kaushik2025universal,wang2025attention}.
\textsc{DecodeShare} operationalizes this perspective at the \emph{decode-locus}: we estimate a cross-task shared workspace from KV-cached decision states and validate it with decode-only causal tests (\hyptag{\textbf{H2}}), while explicitly probing estimator--deployment mismatch (\hyptag{\textbf{H3}}).

\vspace{-0.15cm}
\paragraph{Differences from closely related subspace work.}
Prior work studies shared/orthogonal subspaces mainly at the \emph{parameter} level (e.g., task-update geometry or orthogonal LoRA/merging) \citep{arturi2025shared,zhang2025unraveling}, or compares shared semantics across models/scales or modalities \citep{son2025semantic,Whitaker2025CommonSemantic}.
We instead identify an \emph{activation-level} workspace that appears in KV-cached \emph{decode-time decision states} within a fixed model and test it causally with decode-only interventions, matched controls, and patchback.
Related projection/subspace-learning methods remove nuisance directions or learn transfer structure during tuning \citep{xie-etal-2022-discovering,falissard-etal-2023-improving}; our target is a task-shared \emph{decode-time} workspace, and we explicitly study when prefill-estimated bases fail to transfer to decode-time causal effects (\hyptag{\textbf{H3}}).
MSRS composes multiple subspaces for attribute control \citep{jiang2025msrs}; our focus is diagnostic rather than steering.

\vspace{-0.25cm}
\section{Conclusion}
\label{sec:conclusion}


\vspace{-0.1cm}
We propose \textsc{DecodeShare}, a protocol that estimates a cross-task shared subspace from pooled decode-time activations and tests its causal role via projection interventions with budget-matched controls. The shared subspace is low-dimensional yet consistently affects decoding decisions. We also observe a regime gap: prefill-estimated directions can misalign with decode-time behavior. Crucially, our stage-aligned protocol exploits this gap to successfully optimize downstream vector selection over prefill or mixed baselines. More broadly, our results highlight decode-time state geometry as a first-class object for mechanistic analysis under KV caching, suggesting that intervention design should be evaluated in the serving regime rather than inferred from prefill statistics alone. We hope this encourages decode-centric evaluations in future LLM interpretability works.

\newpage

\section*{Acknowledgment}


This work was supported by the National Science Foundation (NSF) under Grant No. IIS-2451480 and Grant No. 2112562, and the Army Research Office (ARO) under Grant No. W911NF-23-2-0224.

\section*{Impact Statement}

DecodeShare provides a protocol for exploring shared structure in model decisions during decoding. It estimates a low-dimensional subspace from KV-cached decode-time activations across tasks, then projects it out during decoding and measures output changes using matched random and energy controls. This supports causal tests of whether shared decode-time structure influences decisions and how it overlaps with steering directions studied in current works on steering vectors and style control. Potential positive impacts are primarily methodological and practical: clearer cross-task comparisons of decode-time representations, more reproducible intervention studies via matched controls, and the ability to separate the functional roles of steering directions from task-general decode channels. By shifting focus from prefill-based proxies to decode-time evaluations, the protocol also provides a more reliable signal for downstream deployment. Potential negative impacts appear limited. The protocol requires white-box access and per-model estimation, so it is not directly applicable to black-box settings. However, because the identified subspace generalizes robustly across tasks, actors with internal access could potentially exploit these universal directions for broader behavioral manipulation; we recommend conservative intervention budgets and reporting safety and stability metrics.

\bibliography{refs}
\bibliographystyle{icml2026}

\newpage
\appendix
\onecolumn

\appendix

\begin{table}[t]
  \centering
  \scriptsize
   \caption{
    Benchmark taxonomy used in our experiments. For discrete-choice tasks, we use decision-level forced-choice accuracy
    (FC-Acc; teacher-forced conditional logprob) as the primary readout; we also report generation-based scores (gen-acc/EM)
    with collapse diagnostics (extraction success rate, EOS rate, and average decoded length). For open-answer settings,
    GSM8K is scored by EM after extraction (gen-math) and HumanEval uses a compile-based code-generation metric; both also
    include pair-logprob as a decision proxy where reported.
  }
  \label{tab:bench-taxonomy}
  \setlength{\tabcolsep}{3pt}
  \renewcommand{\arraystretch}{1.05}
  \begin{tabular}{@{}l l l l@{}}
    \toprule
    Category & Benchmark & I/O format & Metrics reported in paper \\
    \midrule
    Arithmetic & GSM8K & open (numeric) & EM (gen-math) + Extr/EOS/Len; pair-logprob \\
    Arithmetic & AQuA & MCQ & FC-Acc; gen-acc (+ Extr/EOS/Len) \\
    Commonsense QA & CommonsenseQA (CSQA) & MCQ & FC-Acc; gen-acc (+ Extr/EOS/Len) \\
    Physical reasoning & PIQA & 2-way MCQ & FC-Acc; gen-acc (+ Extr/EOS/Len) \\
    Knowledge QA & OpenBookQA (OBQA) & MCQ & FC-Acc; gen-acc (+ Extr/EOS/Len) \\
    Knowledge QA & QASC & MCQ & FC-Acc; gen-acc (+ Extr/EOS/Len) \\
    Verification & BoolQ & binary & FC-Acc; gen-acc (+ Extr/EOS/Len) \\
    Verification & StrategyQA & binary & FC-Acc; gen-acc (+ Extr/EOS/Len) \\
    Logical/QA & ARC-Challenge (ARC-C) & MCQ & FC-Acc; gen-acc (+ Extr/EOS/Len) \\
    Logical/QA & LogiQA & MCQ & FC-Acc; gen-acc (+ Extr/EOS/Len) \\
    \midrule
    Coding (aux) & HumanEval & open (code) & gen-code-compile; pair-logprob \\
    NLU (aux) & SST-2 & binary & steering utility metrics (margins); FC readout when used \\
    NLU (aux) & RTE & binary & steering utility metrics (margins); FC readout when used \\
    Style (aux) & Pirate lexicon set & open (style) & pirate-hit success / intensity \\
    \bottomrule
  \end{tabular}
\end{table}

\paragraph{Additional Definitions.} \label{appendix:dist_defn} Let $f_\theta$ be a decoder-only Transformer with $L$ layers and hidden width $d$. Under KV caching, each step is a cached decode forward pass with sequence length $=1$. Fix a layer $\ell$ and let $h_\ell^{(s)} \in \mathbb{R}^d$ be the layer-$\ell$ hidden state of the (only) token during cached decode step $s$. We call $h_\ell^{(s)}$ the \emph{decode-time decision state}, i.e., the layer-$\ell$ hidden state of the current token at cached decode step $s$. Unless stated otherwise, we sample $S \sim \mathrm{Unif}\{1,\ldots,K\}$ and use $h_\ell^{(S)}$. 
For each task $t\in\mathcal{T}$, let $D_{\text{prefill}}(t,\ell)$ be the distribution of layer-$\ell$ hidden states during the prefill pass, and let
$D_{\text{decode}}(t,\ell;\pi,K)$ be the distribution of $\{h_\ell^{(s)}\}_{s=1}^K$ induced by cached decoding up to horizon $K$ under policy $\pi$. In all experiments, $D_{\text{prefill}}(t,\ell)$ is formed from the hidden state at the last prompt position $i=n(u)$ to align with the sequence-length-one decode states.

\paragraph{Definition of Patchback Experiments.}\label{appendix:patchback}
For a discrete-choice task, define the flip set at layer $\ell$ as
\[
\flipset_\ell
=
\left\{x:\ \hat{y}_{\mathrm{base}}(x)=y(x)\ \wedge\ \hat{y}_{\mathrm{abl}}(x)\neq y(x)\right\}.
\]

Let $\mathcal{S}_\ell$ be the shared decode-time subspace at layer $\ell$ with orthonormal basis
$Q^{(S)}_\ell \in \mathbb{R}^{d\times k_S}$, and let $\Pi_\ell \;:=\; Q^{(S)}_\ell \bigl(Q^{(S)}_\ell\bigr)^\top$ denote the orthogonal projector onto $\mathcal{S}_\ell$.
For each $x\in \flipset_\ell$ and decode step $s$, record the baseline shared component
$p_s(x) \;=\; \Pi_\ell\, h^{(s)}_{\ell,\mathrm{base}}(x)$.

We then re-run the ablated model and patch back only the shared component over a decode-step window
$W \subseteq \{1,\ldots,K\}$ by setting
\[
h^{(s)}_{\ell,\mathrm{patch}}(x)
=
h^{(s)}_{\ell,\mathrm{abl}}(x)
+
\mathbf{1}[s\in W]\Bigl(p_s(x)-\Pi_\ell h^{(s)}_{\ell,\mathrm{abl}}(x)\Bigr).
\]
Equivalently, for $s\in W$ we replace the shared component $\Pi_\ell h^{(s)}_{\ell,\mathrm{abl}}(x)$ with the donor component $p_s(x)$, while keeping the orthogonal complement $(I-\Pi_\ell)h^{(s)}_{\ell,\mathrm{abl}}(x)$ unchanged.

We report the rescue rate on flips:
\[
\mathrm{Rescue}(\flipset_\ell)=\frac{1}{|\flipset_\ell|}\sum_{x\in\flipset_\ell}\mathbf{1}\!\left[\hat{y}_{\mathrm{patch}}(x)=y(x)\right],
\]
and aggregate across tasks by summing rescued/total counts.

\section{Additional Controls and Robustness Experiments}
\label{app:qa}

\noindent\textbf{Appendix scope.}
This section reports controls and robustness checks for three confounds: (i) \emph{distribution alignment} (all estimation, interventions, and evaluation are aligned to KV-cached, seq-len-$1$ decision states rather than prefill), (ii) \emph{energy confounds} (we include energy diagnostics and energy-matched non-shared controls to rule out ``stronger perturbation'' explanations), and (iii) \emph{decision-level evaluation} (we use forced-choice log-probability scoring to isolate decision degradation from EOS/format/extraction failures). Concretely, \S\ref{app:exp_energy_diag} analyzes removed-energy statistics, \S\ref{app:exp_energy_controls} presents two complementary energy-matched causal controls (k-match and $\alpha$-scaling mean-match) under strict decode-only interventions, and \S\ref{sec:h3_prefill_decode_mismatch} reports the prefill--decode intervention grid supporting decode-specificity (H3). All appendix experiments intervene at the same decode-only boundary: for each prompt we prefill to build the KV cache, then run a seq-len-$1$ cached decode call at the prompt boundary, collect the layer-$\ell$ decision state, and apply the intervention at this locus, which matches both the hook site and cache-advanced forced-choice evaluation.




\subsection{Experiment Group 1: Why High Shared-Space Energy is Expected}
\label{app:exp_energy_diag}

\noindent\textbf{Motivation (energy confound as a diagnosis problem).}
A natural concern about decode-time subspace removal is an \emph{energy confound}:
if the identified shared subspace carries much higher projection energy than a same-dimensional baseline,
then removing it can appear disproportionately destructive simply because it removes more signal under \eqref{eq:removal_app}.
Before making any causal claim, we therefore ask a prior diagnostic question:

\begin{center}
    \textit{\textbf{Question:} Why is high decode-time energy in the estimated shared subspace expected in the first place?}
\end{center}

\noindent Our answer is: high shared-space energy is not, by itself, evidence of an estimator artifact (e.g., ``PCA artifacts'').
It is an expected consequence of (i) anisotropic decode-time activations, and (ii) the way ``shared'' directions are defined in our protocol
(directions that consistently explain non-trivial variance across tasks). Group~2 then addresses the causal question using energy-matched controls.

\vspace{0.5em}
\noindent\textbf{Setup and diagnostics.}
We intervene on KV-cached decode last-token states $h_\ell$ via projection removal
\begin{equation}
\tilde h \;=\; h - \alpha QQ^\top h,
\label{eq:removal_app}
\end{equation}
where $Q\in\R^{d\times k}$ has orthonormal columns and $\alpha\ge 0$.
For any orthonormal basis $Q$ and state $h$, define projection energy and its per-sample fraction
\begin{equation}
E(h;Q) \;:=\; \|Q^\top h\|_2^2,
\qquad
r(h;Q) \;:=\; \frac{\|Q^\top h\|_2^2}{\|h\|_2^2}\in[0,1].
\end{equation}
Empirically we report $\E[r(h;Q)]$ on the prompt-boundary decode-last distribution.
For theory it is also convenient to use the ratio of expectations
\begin{equation}
\bar r(Q) \;:=\; \frac{\E[\|Q^\top h\|_2^2]}{\E[\|h\|_2^2]}.
\end{equation}
These coincide for Haar-random subspaces; see Lemma~\ref{lem:rand-subspace-koverd}.
We report $\E[r(h;Q)]$ in plots because it is an interpretable per-example fraction, but the energy matching controls use the numerator $\E[\|Q^\top h\|_2^2]$, since interventions scale with that quantity. 

\vspace{0.5em}
\noindent\textbf{Why high shared-space energy is expected.}
Let $\Sigma := \E[h h^\top]$ be the second moment (with the same centering convention as in \S\ref{sec:method:shareness_tests} when applicable).
Then for any orthonormal $Q\in\R^{d\times k}$,
\begin{equation}
\E[E(h;Q)] \;=\; \E[\|Q^\top h\|_2^2] \;=\; \tr(Q^\top \Sigma Q).
\label{eq:trace_energy}
\end{equation}
Thus high-energy subspaces are exactly those aligned with large-eigenvalue directions of $\Sigma$.
The following three results formalize the expected behavior: (1) the diffuse baseline,
(2) anisotropy guarantees the existence of intrinsically high-energy low-dimensional subspaces (and PCA recovers one),
and (3) our \emph{shared} selection criterion further biases toward above-diffuse energy \emph{within the pooled span}.

\begin{figure*}[t]
  \centering
  \begin{subfigure}[t]{0.49\textwidth}
    \centering
    \includegraphics[width=\linewidth]{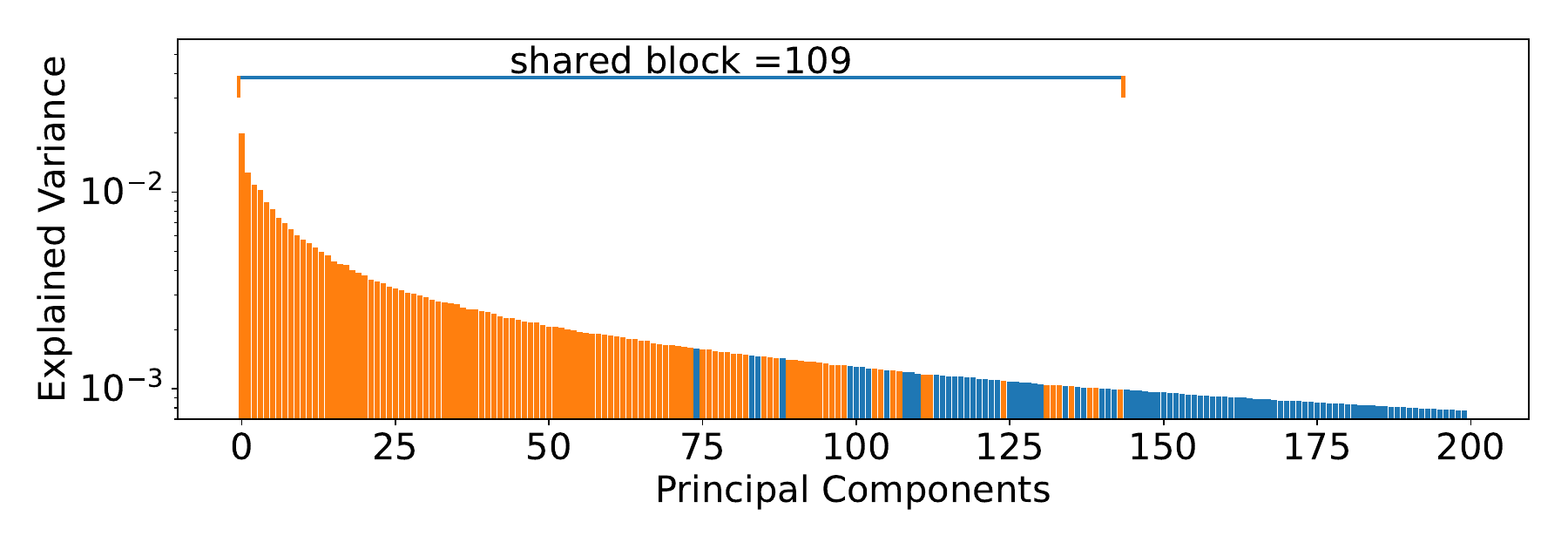}
    \vspace{-2em}
    \caption{Layer 10}
    \label{fig:pca-spectrum-l10}
  \end{subfigure}\hfill
  \begin{subfigure}[t]{0.49\textwidth}
    \centering
    \includegraphics[width=\linewidth]{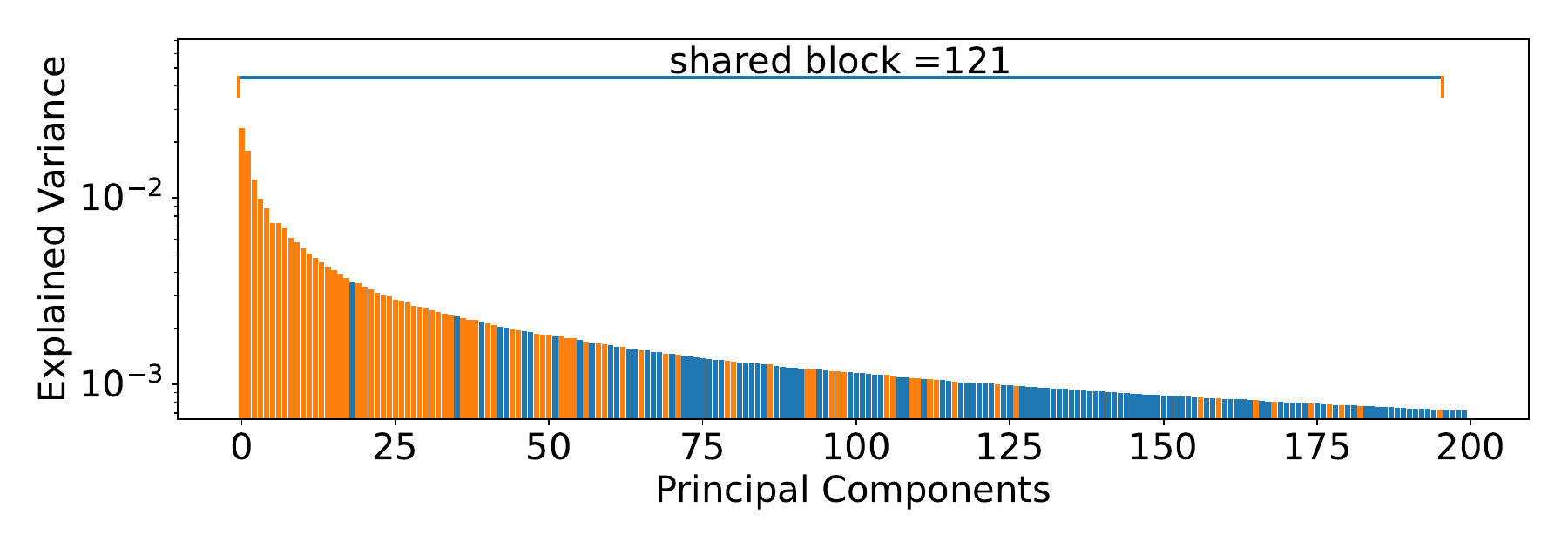}
    \vspace{-2em}
    \caption{Layer 30}
    \label{fig:pca-spectrum-l30}
  \end{subfigure}

  \caption{\textbf{Pooled PCA spectrum with shared components marked.}
  Explained-variance ratios (log scale) for pooled PCA components at two layers.
  Components belonging to the identified shared set are visually distinguished, indicating where the shared directions sit
  in the overall variance spectrum.}
  \label{fig:pca-spectrum}
\end{figure*}

\begin{lemma}[Haar-random subspaces have exactly $k/d$ expected energy fraction]
\label{lem:rand-subspace-koverd}
Let $Q_{\mathrm{rand}}$ be Haar-uniform on the Grassmannian (a random $k$-dimensional subspace), independent of $h\in\R^d$.
Then
\[
\E_{Q_{\mathrm{rand}}}\!\big[r(h;Q_{\mathrm{rand}})\mid h\big] = \frac{k}{d},
\qquad\text{and hence} \;
\E[r(h;Q_{\mathrm{rand}})] = \frac{k}{d}.
\]
Moreover, $\bar r(Q_{\mathrm{rand}})=k/d$ as well.
\end{lemma}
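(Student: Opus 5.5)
The plan is to condition on $h$ and use rotation invariance of the Haar measure. For a fixed $h\neq 0$, write $P_{\mathrm{rand}}:=Q_{\mathrm{rand}}Q_{\mathrm{rand}}^\top$ for the orthogonal projector onto the random subspace. Then $\|Q_{\mathrm{rand}}^\top h\|_2^2 = h^\top P_{\mathrm{rand}} h$, which depends only on the subspace and not on the choice of orthonormal basis, so $r(h;Q_{\mathrm{rand}})$ is well defined as a function on the Grassmannian. The key claim is that $\E[P_{\mathrm{rand}}] = \tfrac{k}{d} I_d$.

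To prove this, let $M:=\E[P_{\mathrm{rand}}]$. For any orthogonal $U\in O(d)$, the subspace $U\cdot\mathrm{span}(Q_{\mathrm{rand}})$ has the same Haar law, so $U P_{\mathrm{rand}} U^\top \overset{d}{=} P_{\mathrm{rand}}$. Hence $UMU^\top = M$ for all $U\in O(d)$. A symmetric matrix commuting with every orthogonal matrix is a scalar multiple of the identity: diagonalize it, and use a rotation mixing two eigenvectors to force their eigenvalues to be equal. So $M=cI$. Taking traces gives $\tr M = \E[\tr P_{\mathrm{rand}}] = k$, since a rank-$k$ projector has trace $k$. Therefore $c=k/d$.

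With this in hand, independence of $Q_{\mathrm{rand}}$ and $h$ gives
\[
\E\!\left[\|Q_{\mathrm{rand}}^\top h\|_2^2 \mid h\right] = h^\top M h = \frac{k}{d}\|h\|_2^2 .
\]
Dividing by the $h$-measurable quantity $\|h\|_2^2$ yields $\E[r(h;Q_{\mathrm{rand}})\mid h] = k/d$. (If $h=0$, adopt any convention such as $r:=k/d$, or assume $h\neq0$ almost surely.) The tower property then gives $\E[r(h;Q_{\mathrm{rand}})] = k/d$.

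For $\bar r$, I interpret the expectations jointly over $h$ and $Q_{\mathrm{rand}}$. Taking the unconditional expectation of the display above gives $\E\|Q_{\mathrm{rand}}^\top h\|_2^2 = \tfrac{k}{d}\E\|h\|_2^2$, so $\bar r(Q_{\mathrm{rand}})=k/d$ provided $0<\E\|h\|_2^2<\infty$. Equivalently, via \eqref{eq:trace_energy}, $\E_Q\tr(Q^\top\Sigma Q) = \tr(M\Sigma) = \tfrac{k}{d}\tr\Sigma$.

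The only mildly delicate step is the invariance argument $M=cI$. It relies on the stated invariance of the Haar measure on the Grassmannian under $O(d)$, which is the defining property. An alternative is to realize $Q_{\mathrm{rand}}$ as the orthonormalized columns of a $d\times k$ Gaussian matrix, but the invariance route is cleaner.
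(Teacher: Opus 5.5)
Your proposal is correct and follows the paper's own argument. The paper uses rotational symmetry to get $\E[Q_{\mathrm{rand}}Q_{\mathrm{rand}}^\top]=\tfrac{k}{d}I$, conditions on $h$, applies the tower property, and reads $\bar r$ as a joint expectation over $h$ and $Q_{\mathrm{rand}}$. Your version adds the invariance-plus-trace justification of the scalar-identity step and handles the $h=0$ edge case and the interpretation of $\bar r(Q_{\mathrm{rand}})$ explicitly, which the paper leaves implicit.
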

\begin{proof}
Rotational symmetry gives $\E[Q_{\mathrm{rand}}Q_{\mathrm{rand}}^\top]=\tfrac{k}{d}I$.
Thus $\E[\|Q_{\mathrm{rand}}^\top h\|_2^2\mid h]=h^\top \E[Q_{\mathrm{rand}}Q_{\mathrm{rand}}^\top]h=\tfrac{k}{d}\|h\|_2^2$,
so $\E[r(h;Q_{\mathrm{rand}})\mid h]=k/d$. Taking expectation over $h$ yields $\E[r(h;Q_{\mathrm{rand}})]=k/d$.
Finally, $\bar r(Q_{\mathrm{rand}})=\E[\|Q_{\mathrm{rand}}^\top h\|_2^2]/\E[\|h\|_2^2]=k/d$.
\end{proof}

\begin{theorem}[Top-variance subspaces exceed random energy under anisotropy]
\label{thm:pca-beats-rand}
Let $h\in\mathbb{R}^d$ be zero-mean with second moment $\Sigma=\E[hh^\top]\succeq 0$ and eigenvalues
$\lambda_1\ge\cdots\ge \lambda_d$. For any orthonormal $Q\in\mathbb{R}^{d\times k}$,
\[
\E\big[\|Q^\top h\|_2^2\big] \;=\; \tr(Q^\top\Sigma Q).
\]
Let $Q_{\mathrm{top}}$ span the top-$k$ eigenspace of $\Sigma$. Then
\[
\E\big[\|Q_{\mathrm{top}}^\top h\|_2^2\big]
=
\sum_{i=1}^k \lambda_i
\;\ge\;
\frac{k}{d}\sum_{i=1}^d \lambda_i
=
\frac{k}{d}\,\E[\|h\|_2^2],
\]
with equality iff $\lambda_1=\cdots=\lambda_d$ (isotropy). Equivalently, $\bar r(Q_{\mathrm{top}})\ge k/d$,
strictly if $\Sigma$ is anisotropic.
\end{theorem}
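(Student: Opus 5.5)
The plan is to prove the three pieces of the statement in order: the trace identity, the value of the energy on the top eigenspace, and the averaging inequality with its equality case.

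\emph{Trace identity.} I would use linearity of expectation and the cyclic property of the trace:
\[
\E\|Q^\top h\|_2^2=\E[\tr(Q^\top h h^\top Q)]=\tr(Q^\top \E[hh^\top] Q)=\tr(Q^\top\Sigma Q).
\]
This is the same computation as \eqref{eq:trace_energy}, so it needs no new ideas.

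\emph{Energy on the top eigenspace.} Next I would evaluate the identity at $Q_{\mathrm{top}}$. Take $Q_{\mathrm{top}}=[u_1,\dots,u_k]$, where the $u_i$ are orthonormal eigenvectors of $\Sigma$ for $\lambda_1,\dots,\lambda_k$. Then $Q_{\mathrm{top}}^\top\Sigma Q_{\mathrm{top}}=\mathrm{diag}(\lambda_1,\dots,\lambda_k)$, whose trace is $\sum_{i\le k}\lambda_i$. The trace is basis-invariant within the span, so any orthonormal basis of the same eigenspace gives the same value. Similarly, $\E\|h\|_2^2=\tr\Sigma=\sum_{i=1}^d\lambda_i$.

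\emph{The inequality.} The core claim is an averaging fact for a nonincreasing sequence: the mean of the top $k$ terms is at least the overall mean, i.e.
\[
\frac1k\sum_{i\le k}\lambda_i\ \ge\ \frac1d\sum_{i\le d}\lambda_i .
\]
I would prove it by cross-multiplying and showing
\[
d\sum_{i\le k}\lambda_i-k\sum_{i\le d}\lambda_i=(d-k)\sum_{i\le k}\lambda_i-k\sum_{j>k}\lambda_j\ \ge\ 0 .
\]
This holds because each of the $k$ terms in the first sum is at least $\lambda_k$, and each of the $d-k$ terms in the second sum is at most $\lambda_{k+1}\le\lambda_k$. Hence the expression is at least $k(d-k)(\lambda_k-\lambda_{k+1})\ge0$. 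An equivalent route writes it as $\sum_{i\le k}\sum_{j>k}(\lambda_i-\lambda_j)\ge 0$. Dividing by $\E\|h\|^2$, whenever it is positive, gives $\bar r(Q_{\mathrm{top}})\ge k/d$.

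\emph{Equality case (main obstacle).} Here the precise statement needs care. From the double-sum form, equality holds iff $\lambda_i=\lambda_j$ for every $i\le k<j$. Combined with monotonicity, this forces all eigenvalues to be equal, provided $1\le k<d$. If $k=d$, equality holds trivially for every $\Sigma$, so I would state the "iff" under the assumption $k<d$, with $k\ge1$. The "if" direction is immediate. The "strictly if anisotropic" clause is the contrapositive: if some $\lambda_i\ne\lambda_j$, then $\lambda_1>\lambda_d$, and the pair $(1,d)$ contributes a strictly positive term to the double sum.
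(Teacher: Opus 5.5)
Your proposal is correct and follows the paper's route: the trace identity, then the energy on the top eigenspace, then ``the mean of the top $k$ eigenvalues is at least the overall mean.'' You compute $\tr(Q_{\mathrm{top}}^\top\Sigma Q_{\mathrm{top}})$ directly instead of citing Ky Fan's maximum principle, which the stated claim does not need, and you actually prove the averaging inequality that the paper only asserts. Your restriction of the equality and strictness clauses to $1\le k<d$ (and to $\E\|h\|_2^2>0$ for $\bar r$) is a genuine correction. For $k=d$ equality holds for every $\Sigma$, so the paper's ``iff'' is false in that case.
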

\begin{proof}
The identity is \eqref{eq:trace_energy}.

Ky Fan's maximum principle yields $\max_{Q^\top Q=I_k}\tr(Q^\top\Sigma Q)=\sum_{i=1}^k \lambda_i$.
Since the mean of the top-$k$ eigenvalues is at least the overall mean,
$\frac{1}{k}\sum_{i=1}^k\lambda_i \ge \frac{1}{d}\sum_{i=1}^d\lambda_i$, with equality iff all $\lambda_i$ are equal.
\end{proof}

\noindent A concrete formalization of ``low-rank but high-energy'' is the factor-plus-noise model
$h = U z + \varepsilon$ with $U^\top U=I_r$, $\Cov(z)=\Lambda\succeq 0$, $\Cov(\varepsilon)=\sigma^2 I$,
for which $\Sigma = U\Lambda U^\top + \sigma^2 I$ and $\bar r(U) \;=\; \frac{\tr(\Lambda)+r\sigma^2}{\tr(\Lambda)+d\sigma^2}.$ Thus a small $r$ can still capture a large energy fraction when $\tr(\Lambda)\gg d\sigma^2$.

\begin{lemma}[Shareness threshold implies above-diffuse energy \emph{within} the pooled PCA span]
\label{lem:shared-energy-above-avg}
Fix a layer $\ell$ and the pooled PCA basis $Q_\ell=[q_{\ell,1},\dots,q_{\ell,k}]\in\mathbb{R}^{d\times k}$ from \S3.3.
For each task $t\in\mathcal{T}$, let $h\sim D_{\mathrm{decode}}(t,\ell)$ denote a task-centered decode-last state (as in \S3.3), and define
\[
v_{\ell,t,i}\;:=\;\Var\!\big(q_{\ell,i}^\top h\big)\;=\;\E\big[(q_{\ell,i}^\top h)^2\big],\qquad
V_{\ell,t}\;:=\;\sum_{j=1}^k v_{\ell,t,j}\;=\;\E\!\left[\|Q_\ell^\top h\|_2^2\right].
\]
Define the within-span variance share $ r_{\ell,t,i}\;:=\;\frac{v_{\ell,t,i}}{V_{\ell,t}}\in[0,1]$, so that $\sum_{i=1}^k r_{\ell,t,i}=1$. Let $S_\ell(\tau,m)$ be the set of indices $i$ such that $r_{\ell,t,i}\ge \tau$ holds for at least $m$ tasks, and let
$Q_\ell^{(S)}=Q_\ell[:,S_\ell(\tau,m)]$.

\textbf{(i)} For every task $t$,
\[
\frac{\E\!\left[\|(Q_\ell^{(S)})^\top h\|_2^2\right]}{\E\!\left[\|Q_\ell^\top h\|_2^2\right]}
\;=\;
\sum_{i\in S_\ell(\tau,m)} r_{\ell,t,i}.
\]
\textbf{(ii)} Averaging over tasks,
\[
\frac{1}{|\mathcal{T}|}\sum_{t\in\mathcal{T}}\sum_{i\in S_\ell(\tau,m)} r_{\ell,t,i}
\;\ge\;
\frac{m\tau}{|\mathcal{T}|}\,|S_\ell(\tau,m)|.
\]
(Consequently $|S_\ell(\tau,m)|\le |\mathcal{T}|/(m\tau)$.)

\textbf{(iii)} If one selects a uniformly random index subset $S_{\mathrm{rand}}\subseteq [k]$ with $|S_{\mathrm{rand}}|=|S_\ell|$, then for every task $t$, $$\E_{S_{\mathrm{rand}}}\Big[\sum_{i\in S_{\mathrm{rand}}} r_{\ell,t,i}\Big]
=
\frac{|S_\ell|}{k}.$$ Therefore the shared set's average within-span energy share is lower-bounded by a factor
$\big(\frac{m\tau k}{|\mathcal{T}|}\big)$ relative to the diffuse in-span baseline (clipped at $1$).
\end{lemma}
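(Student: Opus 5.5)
The plan treats the three parts separately. Each part is a short calculation with the per-task quantities $v_{\ell,t,i}$ and $r_{\ell,t,i}$.

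For \textbf{(i)}, I expand the squared norm coordinatewise. The columns of $Q_\ell^{(S)}$ are the orthonormal vectors $q_{\ell,i}$ for $i\in S_\ell(\tau,m)$, so
\[
\|(Q_\ell^{(S)})^\top h\|_2^2=\sum_{i\in S_\ell}(q_{\ell,i}^\top h)^2 .
\]
Taking expectations under the task-centered distribution gives $\sum_{i\in S_\ell}v_{\ell,t,i}$. Dividing by $V_{\ell,t}=\E[\|Q_\ell^\top h\|_2^2]$ gives $\sum_{i\in S_\ell} r_{\ell,t,i}$. Because $h$ is task-centered, the variance and the second moment coincide, which justifies the identity $v_{\ell,t,i}=\E[(q_{\ell,i}^\top h)^2]$ in the statement. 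I assume $V_{\ell,t}>0$ so that the ratios are defined.

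For \textbf{(ii)}, I swap the order of summation and write
\[
\sum_t\sum_{i\in S_\ell} r_{\ell,t,i}=\sum_{i\in S_\ell}\sum_t r_{\ell,t,i}.
\]
Fix $i\in S_\ell$. By definition at least $m$ tasks satisfy $r_{\ell,t,i}\ge\tau$, and every other term is nonnegative. So the inner sum is at least $m\tau$, the double sum is at least $m\tau|S_\ell|$, and dividing by $|\mathcal T|$ gives the bound. For the consequence, note that by (i) the left-hand side is an average of numbers in $[0,1]$, since each $\sum_{i\in S_\ell} r_{\ell,t,i}\le\sum_{i=1}^k r_{\ell,t,i}=1$. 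Hence $m\tau|S_\ell|/|\mathcal T|\le 1$, which is $|S_\ell|\le |\mathcal T|/(m\tau)$.

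For \textbf{(iii)}, I use linearity over a uniformly random $|S_\ell|$-subset of $[k]$. Each index lies in the subset with probability $|S_\ell|/k$, so
\[
\E_{S_{\rm rand}}\Big[\sum_{i\in S_{\rm rand}} r_{\ell,t,i}\Big]=\frac{|S_\ell|}{k}\sum_{i=1}^k r_{\ell,t,i}=\frac{|S_\ell|}{k}.
\]
The task-averaged shared share is at least $m\tau|S_\ell|/|\mathcal T|$ by (ii). The ratio of this lower bound to the baseline $|S_\ell|/k$ is $m\tau k/|\mathcal T|$. The clipping at $1$ reflects that the bound only beats the diffuse baseline when this factor exceeds $1$; otherwise the trivial comparison applies. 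No step is a genuine obstacle. The one point needing care is this interpretive sentence: the claim is a ratio of a lower bound to the random-subset expectation, not a pointwise statement about every task, and I will state it that way. The only degenerate case is $S_\ell=\emptyset$, where both sides vanish.
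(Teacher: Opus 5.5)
Your proposal is correct and matches the paper's proof step for step. For (i) you expand the projected energy coordinatewise, for (ii) you swap the sums and use the at-least-$m$-tasks condition, with $\sum_{i\in S_\ell} r_{\ell,t,i}\le 1$ giving the size bound, and for (iii) you use linearity of expectation over a uniform $|S_\ell|$-subset. Your extra care about $V_{\ell,t}>0$, about centering making variance equal to second moment, and about reading the last sentence of (iii) as a ratio of a lower bound to the baseline is consistent with the paper's argument and slightly more explicit.
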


\begin{proof}
(i) Orthonormality gives
$\E[\|(Q_\ell^{(S)})^\top h\|_2^2]=\sum_{i\in S_\ell}\E[(q_{\ell,i}^\top h)^2]=\sum_{i\in S_\ell} v_{\ell,t,i}$ and
$\E[\|Q_\ell^\top h\|_2^2]=\sum_{j=1}^k v_{\ell,t,j}=V_{\ell,t}$, hence the ratio equals $\sum_{i\in S_\ell}r_{\ell,t,i}$.
(ii) For each $i\in S_\ell(\tau,m)$, there are at least $m$ tasks with $r_{\ell,t,i}\ge \tau$, so
$\sum_{t\in\mathcal{T}}r_{\ell,t,i}\ge m\tau$. Summing over $i\in S_\ell$ and dividing by $|\mathcal{T}|$ yields (ii).
Also, since $\sum_{t}\sum_{i\in S_\ell}r_{\ell,t,i}\le \sum_t \sum_{i=1}^k r_{\ell,t,i}=|\mathcal{T}|$,
we must have $|S_\ell|\le |\mathcal{T}|/(m\tau)$.
(iii) Since $\sum_{i=1}^k r_{\ell,t,i}=1$, the expected sum over a uniformly random size-$|S_\ell|$ subset is $|S_\ell|/k$.
\end{proof}

\noindent\textbf{Why Group-1 diagnostics alone do not resolve causality.}
Group~1 explains why large decode-time projection energy in $Q_{\mathrm{shared}}$ is \emph{expected}:
anisotropy implies intrinsically high-energy subspaces (Theorem~\ref{thm:pca-beats-rand}),
and our “shareness” selection criterion further focuses the analysis on directions with measurable overlap with the decode-time shared basis, enabling a cleaner test of shared-subspace interference (Lemma~\ref{lem:shared-energy-above-avg}). However, this diagnosis alone does not resolve causality: a skeptic can still argue that shared ablation hurts more simply because it removes more energy.
Group~2 therefore uses energy-matched controls to isolate causal effects from energy removal.

\begin{figure*}[t]
    \centering
    \begin{subfigure}[t]{0.45\textwidth}
        \centering
        \includegraphics[width=\linewidth]{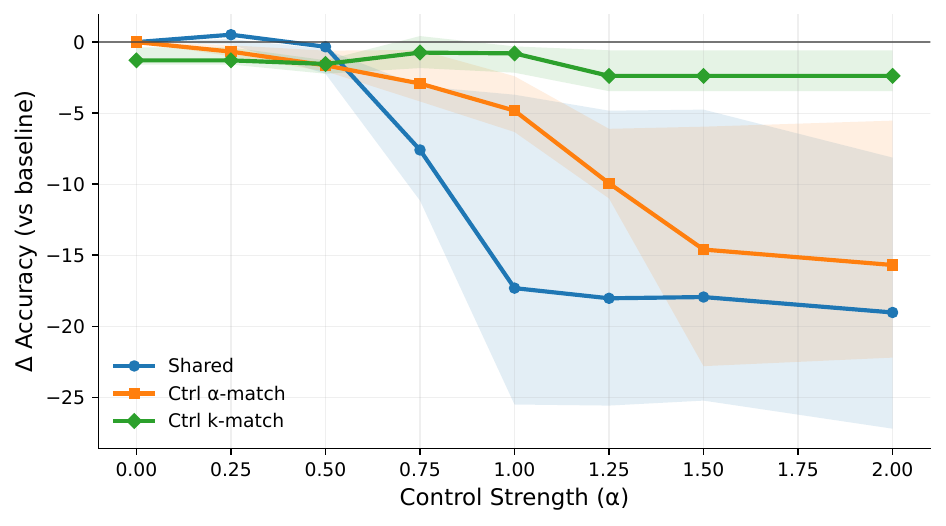}
        \caption{\textbf{Task-aggregate $\Delta$ accuracy vs.\ $\alpha$.} Shared removal diverges sharply for $\alpha\gtrsim0.75$ while energy-matched controls stay closer to baseline.}
        \label{fig:alpha-kmatch:main-delta}
    \end{subfigure}\hfill
    \begin{subfigure}[t]{0.54\textwidth}
        \centering
        \includegraphics[width=\linewidth]{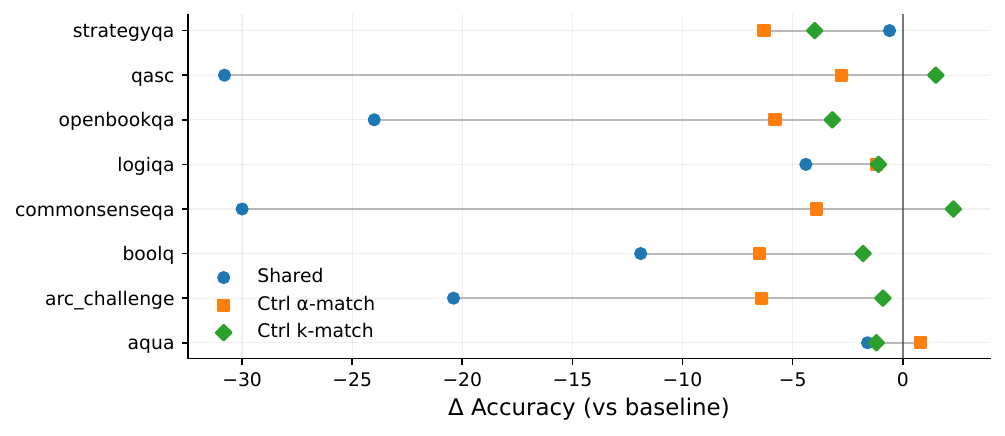}
        \caption{\textbf{Per-task $\Delta$ accuracy at $\alpha=1$.} Shared is consistently most harmful; k-match remains near baseline.}
        \label{fig:alpha-kmatch:alpha1-snapshot}
    \end{subfigure}
    \caption{\textbf{Energy-matched causal controls under strict decode-only interventions (Llama-2-7B-Chat, layer 10).}
We compare removing the shared subspace against two energy-matched controls (fixed-dimension $\alpha$-match and expanded-dimension k-match),
evaluated by cache-advanced forced-choice logprob scoring on KV-cached decode steps (seq len $=1$).}
    \label{fig:alpha-kmatch:summary}
\end{figure*}

\begin{figure*}[t]
    \centering
    \includegraphics[width=\textwidth]{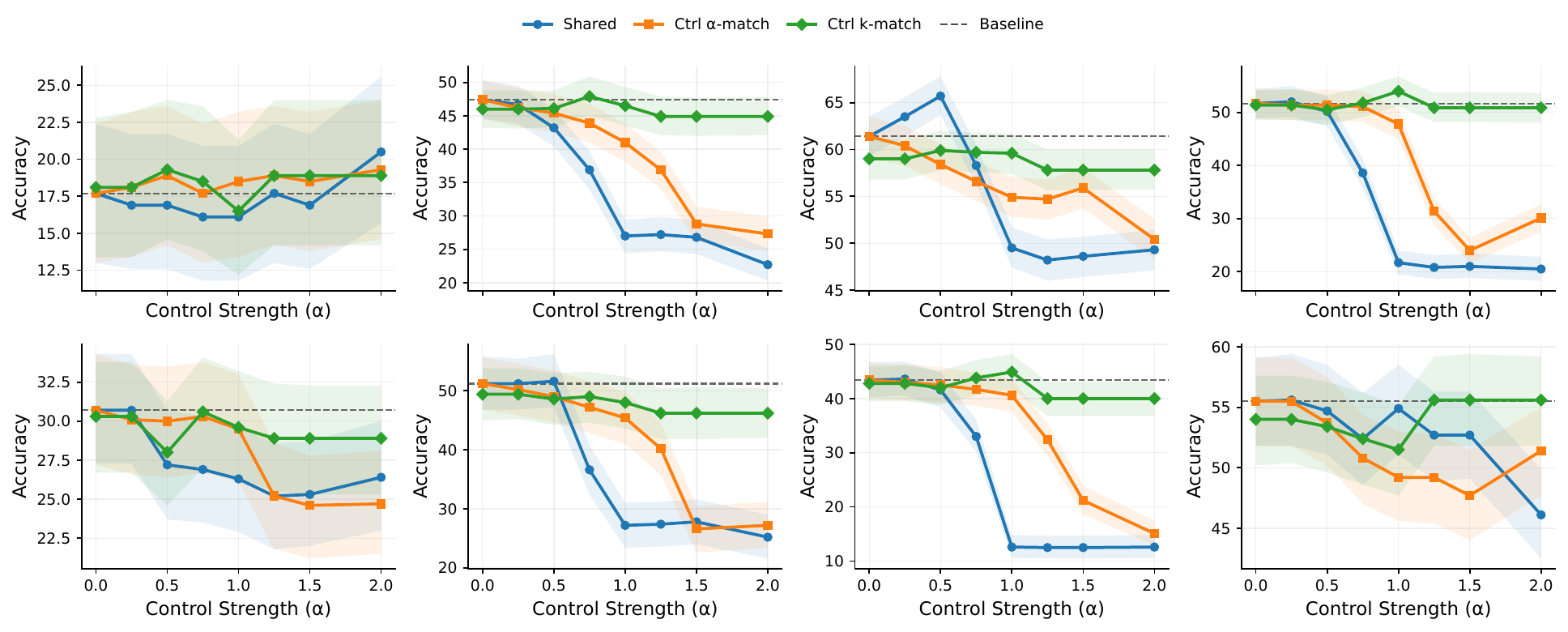}
    \caption{\textbf{Per-task $\alpha$ sweep (decode-only; energy-matched controls).}
Panels are ordered left-to-right, top-to-bottom as:
AQuA, ARC-Challenge, BoolQ, CommonsenseQA, LogiQA, OpenBookQA, QASC, StrategyQA.
Dashed line: baseline; shaded regions: 95\% CIs.}
    \label{fig:alpha-kmatch:per-task-grid}
\end{figure*}

\begin{table*}[t]
\centering
\caption{Forced-choice accuracy (\%). Each cell reports \textbf{Shared / Ctrl-$\alpha$-match / Ctrl-$k$-match}. ($k$ for $k$-match is shown in the column header; $k$-match uses $\alpha{=}1$.)}
\label{tab:energy_alpha_sweep}
\small
\setlength{\tabcolsep}{3.2pt}
\renewcommand{\arraystretch}{1.05}

\begin{tabular}{l r r c c c c}
\toprule
Task & $N$ & Base & \multicolumn{4}{c}{$\alpha$ (and $k$ for $k$-match)} \\
\cmidrule(lr){4-7}
 & & & $0$ ($k{=}126$) & $0.25$ ($k{=}126$) & $0.5$ ($k{=}197$) & $0.75$ ($k{=}649$) \\
\midrule
commonsenseqa  & 1221 & 51.7 & 51.7 / 51.7 / 51.4 & 52.0 / 51.4 / 51.4 & 50.2 / 51.4 / 50.5 & 38.6 / 51.1 / 51.8 \\
strategyqa     &  687 & 55.5 & 55.5 / 55.5 / 54.0 & 55.6 / 55.5 / 54.0 & 54.7 / 53.7 / 53.4 & 52.4 / 50.8 / 52.4 \\
aqua           &  254 & 17.7 & 17.7 / 17.7 / 18.1 & 16.9 / 18.1 / 18.1 & 16.9 / 18.9 / 19.3 & 16.1 / 17.7 / 18.5 \\
arc\_challenge & 1172 & 47.4 & 47.4 / 47.4 / 46.0 & 46.7 / 46.2 / 46.0 & 43.2 / 45.4 / 46.1 & 36.9 / 43.9 / 47.9 \\
openbookqa     &  500 & 51.2 & 51.2 / 51.2 / 49.4 & 51.2 / 50.2 / 49.4 & 51.6 / 49.0 / 48.6 & 36.6 / 47.2 / 49.0 \\
qasc           &  926 & 43.4 & 43.4 / 43.4 / 42.8 & 43.6 / 43.0 / 42.8 & 41.6 / 42.5 / 42.0 & 33.0 / 41.7 / 43.8 \\
logiqa         &  651 & 30.7 & 30.7 / 30.7 / 30.3 & 30.7 / 30.1 / 30.3 & 27.2 / 30.0 / 28.0 & 26.9 / 30.3 / 30.6 \\
boolq          & 2048 & 61.4 & 61.4 / 61.4 / 59.0 & 63.5 / 60.4 / 59.0 & 65.7 / 58.4 / 59.9 & 58.3 / 56.6 / 59.7 \\
\bottomrule
\end{tabular}

\vspace{4pt}

\begin{tabular}{l r r c c c c}
\toprule
Task & $N$ & Base & \multicolumn{4}{c}{$\alpha$ (and $k$ for $k$-match)} \\
\cmidrule(lr){4-7}
 & & & $1.0$ ($k{=}1843$) & $1.25$ ($k{=}2705$) & $1.5$ ($k{=}2705$) & $2.0$ ($k{=}2705$) \\
\midrule
commonsenseqa  & 1221 & 51.7 & 21.7 / 47.8 / 54.0 & 20.8 / 31.4 / 50.9 & 21.0 / 24.0 / 50.9 & 20.5 / 30.1 / 50.9 \\
strategyqa     &  687 & 55.5 & 54.9 / 49.2 / 51.5 & 52.7 / 49.2 / 55.6 & 52.7 / 47.7 / 55.6 & 46.1 / 51.4 / 55.6 \\
aqua           &  254 & 17.7 & 16.1 / 18.5 / 16.5 & 17.7 / 18.9 / 18.9 & 16.9 / 18.5 / 18.9 & 20.5 / 19.3 / 18.9 \\
arc\_challenge & 1172 & 47.4 & 27.0 / 41.0 / 46.5 & 27.2 / 36.9 / 44.9 & 26.8 / 28.8 / 44.9 & 22.7 / 27.3 / 44.9 \\
openbookqa     &  500 & 51.2 & 27.2 / 45.4 / 48.0 & 27.4 / 40.2 / 46.2 & 27.8 / 26.6 / 46.2 & 25.2 / 27.2 / 46.2 \\
qasc           &  926 & 43.4 & 12.6 / 40.6 / 44.9 & 12.5 / 32.4 / 40.0 & 12.5 / 21.2 / 40.0 & 12.6 / 15.1 / 40.0 \\
logiqa         &  651 & 30.7 & 26.3 / 29.5 / 29.6 & 25.2 / 25.2 / 28.9 & 25.3 / 24.6 / 28.9 & 26.4 / 24.7 / 28.9 \\
boolq          & 2048 & 61.4 & 49.5 / 54.9 / 59.6 & 48.2 / 54.7 / 57.8 & 48.6 / 55.9 / 57.8 & 49.3 / 50.4 / 57.8 \\
\bottomrule
\end{tabular}
\end{table*}

\subsection{Experiment Group 2: Energy-matched causal controls under strict decode-only interventions}
\label{app:exp_energy_controls}

\paragraph{Motivation.}
A key empirical fact from Group~1 is that the decode-time \emph{shared} subspace is \emph{intrinsically high-energy}:
its projection captures a disproportionate share of the activation norm (larger $\E[\|Q^\top h\|_2^2]$).
This is informative, but not yet causal: a high-energy subspace can \emph{mechanically} amplify any decode-only intervention,
so an apparent decision-level effect could arise simply because we removed ``more signal,'' not because we removed the \emph{right} signal.

Concretely, under our strict decode-only removal operator (Eq.~\eqref{eq:removal_app}), with $\delta(h;Q,\alpha)\;:=\;\tilde h - h \;=\; -\alpha QQ^\top h$, the perturbation energy satisfies
\[
\|\delta(h;Q,\alpha)\|_2^2
= \alpha^2 \|QQ^\top h\|_2^2
= \alpha^2 h^\top (QQ^\top)^\top (QQ^\top) h.
\]
Since $Q^\top Q = I_k$, we have $(QQ^\top)^\top = QQ^\top$ and $(QQ^\top)^2 = QQ^\top$, so
\[
\|QQ^\top h\|_2^2
= h^\top QQ^\top h
= (Q^\top h)^\top (Q^\top h)
= \|Q^\top h\|_2^2.
\]
Therefore,
\[
\|\delta(h;Q,\alpha)\|_2^2 \;=\; \alpha^2 \|Q^\top h\|_2^2.
\]
In particular, even when the dimensionality $k_Q$ is fixed, a higher-energy subspace (larger $\E[\|Q^\top h\|_2^2]$)
induces a larger expected perturbation at the same $\alpha$. This creates a natural skeptic's alternative to H2:
the shared ablation could look uniquely harmful \emph{only because} it causes a larger-energy perturbation.

To test whether decision degradation can be explained by energy alone,
we introduce \emph{energy-matched causal controls}:
controls chosen (and/or scaled) so that the expected perturbation energy matches that of the shared ablation.
If shared ablations remain more harmful under this match, the residual gap must be attributed to
\emph{which directions} are removed (direction-specific causal structure), rather than \emph{how much energy} is removed.

\begin{center}
    \textit{\textbf{Question:} After matching the expected perturbation energy, do shared ablations remain uniquely harmful?}
\end{center}




If performance degradation is driven primarily by perturbation magnitude,
then after matching $\E[\|\delta(h;Q,\alpha)\|_2^2]$ between the shared \emph{removal subspace}
and an energy-matched \emph{control subspace},
the corresponding performance curves should be comparable as a function of $\alpha$ (up to noise).
In particular, at a fixed layer and intervention locus, energy-matched controls should reproduce any sharp drop
that would otherwise be attributed to ``shareness.''

\paragraph{Setup.}
We prefill once to build the KV cache, and intervene \emph{only} on cached decode forwards (sequence length $=1$),
leaving the prefill pass untouched.
Evaluation uses our decision-level scorer (forced-choice; see main text): we advance the cache under teacher forcing and
accumulate conditional log-probabilities, avoiding confounds from decode-time output instability (formatting/EOS, etc.).
As a sanity check, we log hook calls to confirm that prefill forwards (seq len $>1$) are never modified.

\paragraph{Energy matching target.}
All energy matching is computed on the \emph{same decode-last calibration states} used in the causal pipeline,
so the matched quantity corresponds to the actual intervention locus.
Specifically, for a basis $Q$ we define its mean projection energy on calibration states as
\[
\mathcal{E}(Q)\;\triangleq\; \E\big[\|Q^\top h\|_2^2\big],
\]
so the mean perturbation energy at strength $\alpha$ is $\alpha^2 \mathcal{E}(Q)$.
We then compare $Q_S$, set of basis that span the shared subspace, against two complementary controls that match this quantity. We construct controls from a \emph{non-shared pool of basis} $Q_{\mathrm{pool}}$ (directions explicitly outside the identified shared set).
Intuitively, these are nearby directions in the same representation/activation space but \emph{not} labeled shared by our criterion.
We then match the perturbation energy in two different ways:

\begin{itemize}
    \item \textbf{$\alpha$-match (Control-1, fixed dimension):}
    keep dimension fixed (typically $k_C=k_S$), but rescale the control strength $\alpha_C$
    so that the \emph{mean perturbation energy} matches:
    \[
        \alpha_C^2 \E[\|Q_C^\top h\|_2^2]
        \;=\;
        \alpha_S^2 \E[\|Q_S^\top h\|_2^2]
        \quad\Rightarrow\quad
        \alpha_C \;=\; \alpha_S \sqrt{\frac{\E[\|Q_S^\top h\|_2^2]}{\E[\|Q_C^\top h\|_2^2]}}.
    \]
    We sweep the shared strength $\alpha_S$ and apply the corresponding $\alpha$-match scaling.
    This directly answers ``are we just perturbing more?'' while holding $k$ fixed.

    \item \textbf{$k$-match (Control-2, $\alpha=1$ pure-removal reference):}
    to avoid $\alpha_C>1$ (over-subtraction), we also match energy with \emph{unit strength}
    by expanding control dimension.
    For each swept shared strength $\alpha_S$, we set $\alpha_C=1$ and choose the smallest $k_C$ such that the control's mean projection energy matches that of the shared subspace:
    \[
    Q_C := Q_{\mathrm{pool}}[:,1:k_C]
    \quad\text{with}\quad
    \alpha_S^2\E\!\left[\|Q_S^\top h\|_2^2\right] \approx \E\!\left[\|Q_C^\top h\|_2^2\right],
    \]
    and then intervene with $\alpha=1$.
    In practice this may require large $k_C$ (sometimes thousands of directions), which makes this control conservative:
    it removes \emph{comparable energy} from a much larger set of explicitly non-shared directions.
\end{itemize}

\paragraph{Results: shared directions remain uniquely causal beyond energy.}
Figure~\ref{fig:alpha-kmatch:summary} and Table~\ref{tab:energy_alpha_sweep} show a consistent signature:
for small strengths, all methods remain close to baseline,
but once $\alpha$ approaches the regime where shared removal excises a large fraction of the shared component,
shared performance drops sharply while energy-matched controls stay substantially closer to baseline.
This rejects the energy-only explanation in the strongest form:
\emph{even when the expected perturbation energy is matched on the same decode-last states,
removing shared directions is systematically more damaging than removing non-shared directions.}

The separation is visible both in the task-aggregate curve (Fig.~\ref{fig:alpha-kmatch:main-delta})
and per-task snapshots at $\alpha=1$ (Fig.~\ref{fig:alpha-kmatch:alpha1-snapshot}).
Moreover, the per-task sweep (Fig.~\ref{fig:alpha-kmatch:per-task-grid}) highlights that the effect is direction-specific
and task-dependent rather than a generic brittleness of decode-only editing:
the largest shared-only collapses occur on knowledge- and reasoning-heavy benchmarks (QASC/CSQA/OBQA/ARC-Challenge),
while several tasks remain comparatively stable under matched controls.

\begin{figure*}[ht!]
\centering

\begin{subfigure}{\textwidth}
  \centering
  \includegraphics[width=\linewidth]{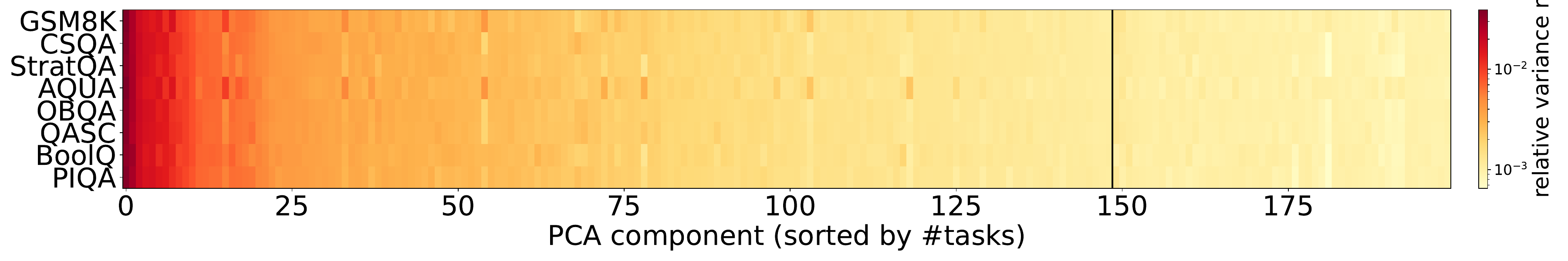}
  \vspace{-2em}
  \caption{Layer 0}
  \label{fig:heatmap-layer0}
\end{subfigure}
\begin{subfigure}{\textwidth}
  \centering
  \includegraphics[width=\linewidth]{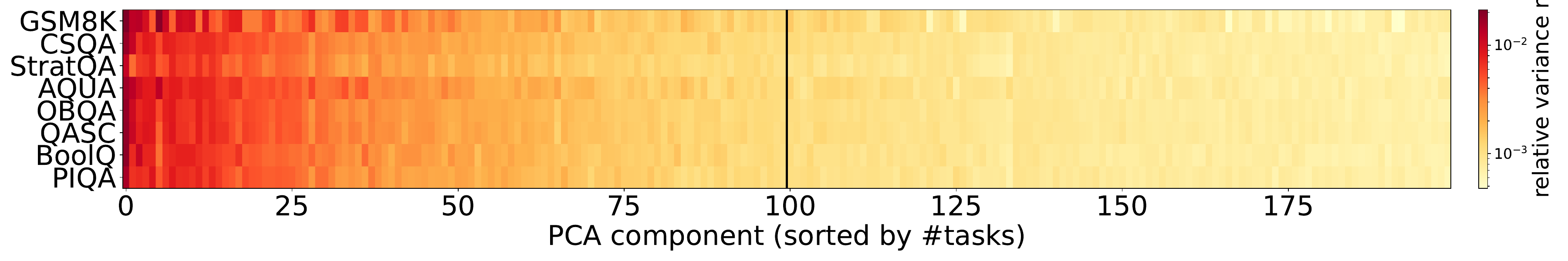}
  \vspace{-2em}
  \caption{Layer 8}
  \label{fig:heatmaplayer8}
\end{subfigure}
\begin{subfigure}{\textwidth}
  \centering
  \includegraphics[width=\linewidth]{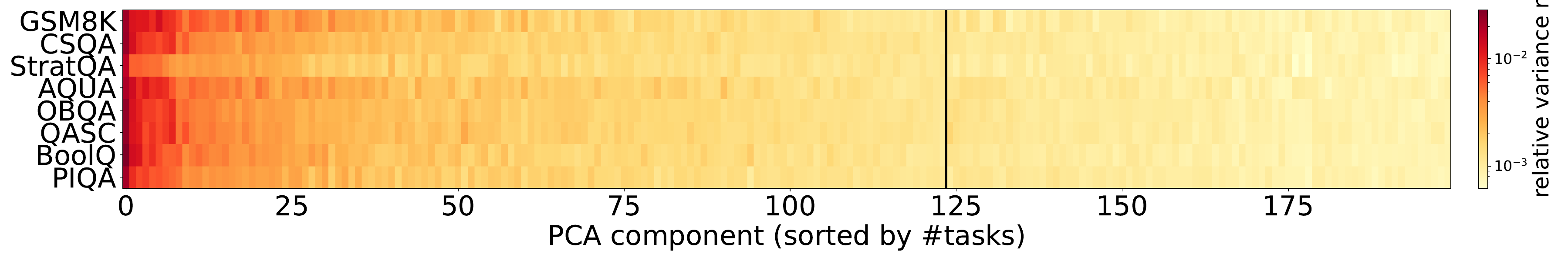}
  \vspace{-2em}
  \caption{Layer 16}
  \label{fig:heatmap-layer16}
\end{subfigure}
\begin{subfigure}[t]{\linewidth}
  \centering
  \includegraphics[width=\linewidth]{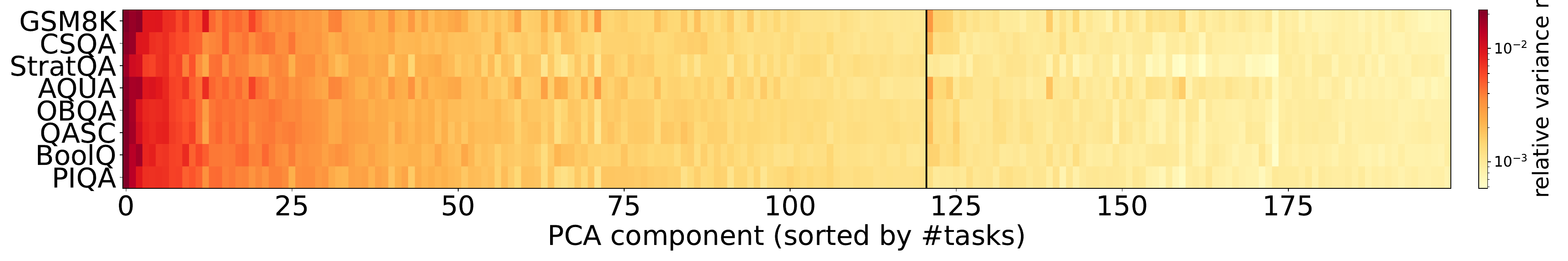}
  \vspace{-2em}
  \caption{Layer 30}
  \label{fig:heatmap-layer30}
\end{subfigure}

\caption{\textbf{shareness heatmaps across layers.}
Rows are tasks and columns are pooled PCA components. Color shows $r_{\ell,t,i}$; columns are sorted by \#tasks with $r_{\ell,t,i}\ge\tau$; vertical line marks $S_\ell(\tau,m)$.}
\label{fig:appendix-heatmaps-1}
\end{figure*}
\begin{figure*}[t]
  \centering
  \includegraphics[width=\textwidth]{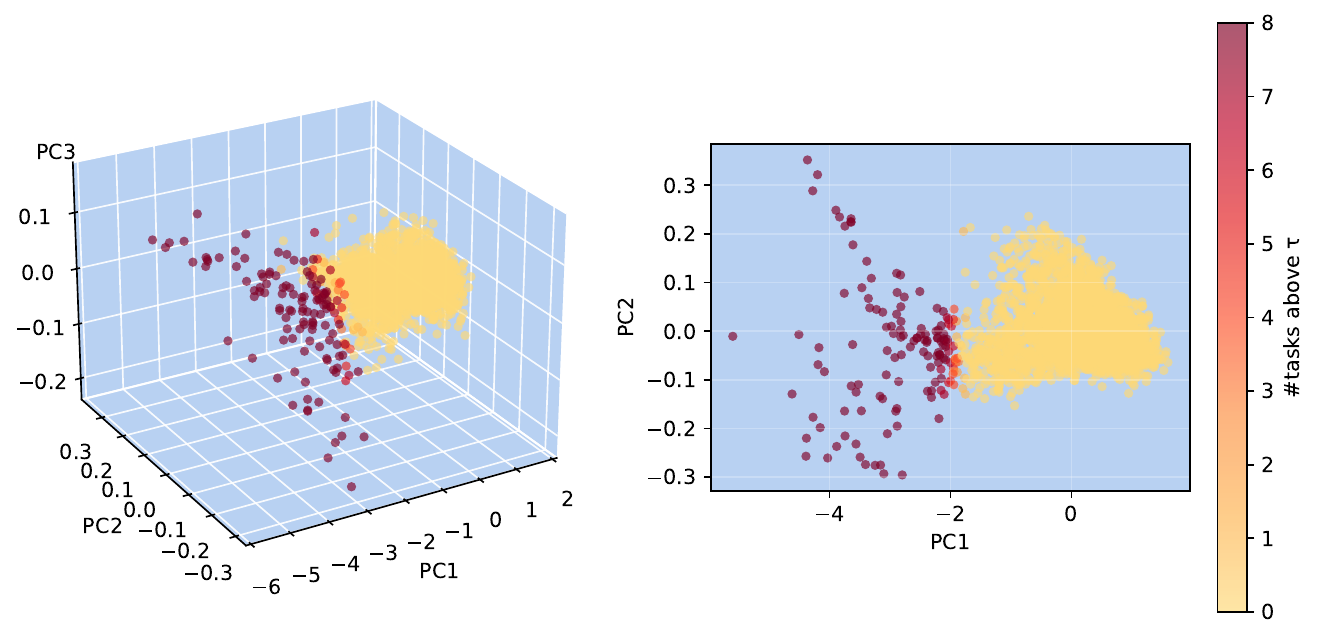}
  \caption{\textbf{PCA visualization of component signatures for layer 10.}
  \emph{Left:} 3D projection onto the first three principal components (PC1--PC3).
  \emph{Right:} top-down projection onto PC1--PC2.
  Each point corresponds to a component; color indicates the number of tasks whose component score exceeds the threshold $\tau$ (colorbar).}
  \label{fig:component-signature-pca3d}
\end{figure*}

\begin{algorithm}[t]
\caption{Decode-aligned shared basis estimation at layer $\ell$ (DecodeShare)}
\label{alg:decodeshare}
\textbf{Require:} tasks $\mathcal{T}$, calibration prompts $\{\mathcal{C}_t\}$, decoding policy $\pi$, horizon $K$, PCA ratio $\rho$, threshold $(\tau,m)$ \\
\textbf{Ensure:} joint basis $Q_\ell$ and shared basis $Q^{(S)}_\ell$
\begin{enumerate}
\item For each $t\in\mathcal{T}$, run cached decoding on prompts in $\mathcal{C}_t$ under policy $\pi$ up to $K$ steps.
\item Collect decode last-token states $\{h_\ell^{(s)}\}$ where $\text{seq\_len}=1$ into $X_{t,\ell}\in\mathbb{R}^{n_{t,\ell}\times d}$.
\item Task-center: $\tilde X_{t,\ell}\leftarrow X_{t,\ell}-\mathbf{1}\mu_{t,\ell}^\top$.
\item Balance to $n_\ell=\min_t n_{t,\ell}$ by subsampling each $\tilde X_{t,\ell}$.
\item Pool: $\tilde X_\ell\leftarrow \mathrm{concat}(\tilde X_{t,\ell}:t\in\mathcal{T})
      \in\mathbb{R}^{(|\mathcal{T}|\,n_\ell)\times d}$.
\item PCA/SVD: $\tilde X_\ell=U\Sigma V^\top$; choose smallest $k$ s.t.\ explained variance $\ge\rho$; set $Q_\ell\leftarrow V_{:,1:k}$.
\item For each $t$, project $Z_{t,\ell}\leftarrow \tilde X_{t,\ell}Q_\ell$, compute
      $r_{t,i}\leftarrow \operatorname{Var}(Z_{t,\ell}[:,i])\big/\sum_j\operatorname{Var}(Z_{t,\ell}[:,j])$.
\item Shared set: $S_\ell(\tau,m)\leftarrow \{i: |\{t:r_{t,i}\ge\tau\}|\ge m\}$.
\item Return $Q_\ell$ and $Q_\ell^{(S)}\leftarrow Q_\ell[:,S_\ell]$.
\end{enumerate}
\end{algorithm}

\begin{algorithm}[t]
\caption{Decode-time last-token subspace removal}
\label{alg:removal}
\textbf{Require:} basis $Q_\ell$ (shared or control), strength $\alpha$, stage length $K_{\mathrm{stage}}$ (optional)
\begin{enumerate}
\item For decode step $s=1,2,\dots$: run cached forward pass with $\mathrm{seq\_len}=1$ to obtain $h_\ell^{(s)}$.
\item If full removal: $\tilde h_\ell^{(s)}\leftarrow h_\ell^{(s)}-\alpha Q_\ell Q_\ell^\top h_\ell^{(s)}$.
\item If staged: apply the same update only when the generated-token count is $<K_{\mathrm{stage}}$.
\item Continue to logits; choose next token (greedy/sampling); update cache.
\end{enumerate}
\end{algorithm}

\section{Additional Main Results.}
\subsection{Additional Results for \hyptag{H1}: Diagnostics for Shared Structure}
\label{app:subspace}





\begin{table}[t]
  \centering
  \caption{H1 summary: sensitivity to $\tau$ (left) and existence test across models (right).}
  \label{tab:h1_side_by_side}
  \begin{subtable}[t]{0.48\linewidth}
    \centering
    \scriptsize
    \vspace{0pt}
    \caption{Sensitivity of $|S_\ell(\tau,m)|$ to PCA retention $\rho$ (thus $k$) and threshold $\tau$. Entries show $|S|$ and $\%(|S|/k)$.}
    \label{tab:tau_sweep}

\setlength{\tabcolsep}{4pt}
\renewcommand{\arraystretch}{1.08}
\begin{tabular}{c r c c c r}
\toprule
\multicolumn{2}{c}{PCA} & \multicolumn{3}{c}{Shared set $|S_\ell(\tau,m)|$: $|S|$ (\% of $k$)} & \\
\cmidrule(lr){1-2}\cmidrule(lr){3-5}
$\rho$ & $k$ & $\tau{=}10^{-4}$ & $\tau{=}10^{-3}$ & $\tau{=}10^{-2}$ & $\tau k$ at $10^{-3}$ \\
\midrule
0.80 & 1490 & 1490 (100.0\%) & 144 (9.7\%) & 1 (0.07\%) & 1.49 \\
0.90 & 2340 & 2169 (92.7\%)  & 123 (5.3\%) & 1 (0.04\%) & 2.34 \\
0.95 & 2989 & 2353 (78.7\%)  & 109 (3.6\%) & 1 (0.03\%) & 2.99 \\
0.97 & 3332 & 2413 (72.4\%)  & 100 (3.0\%) & 1 (0.03\%) & 3.33 \\
0.99 & 3769 & 2484 (65.9\%)  &  89 (2.4\%) & 1 (0.03\%) & 3.77 \\
\bottomrule
\end{tabular}

  \end{subtable}\hfill
  \begin{subtable}[t]{0.48\linewidth}
    \centering
    \scriptsize
    \vspace{0pt}
    \caption{\textbf{Sharedness existence test} (layer $\ell{=}10$, $\tau{=}10^{-3}$; see Methodology). $k$: pooled cross-task PCA dim (95\% var, capped by $d$). Shared: \#/\% of $d$. $p_{\text{perm}}$, $p_{\text{scr}}$: permutation/scramble tests (see Methodology).}
    \label{tab:shareness_results}
    
\setlength{\tabcolsep}{4pt}
\renewcommand{\arraystretch}{1.12}
\begin{tabular}{l r r r c c}
\toprule
Model & $k$ & $d$ & \makecell[c]{Shared\\(\# / \%$d$)} & $p_{\text{perm}}$ & $p_{\text{scr}}$ \\
\midrule
Qwen2.5-7B-Instruct      & 2614 & 3584 &  73 / 2.04 & $<10^{-3}\,^{***}$ & $0.0196^{*}$ \\
Llama-2-7B-Chat          & 2957 & 4096 & 107 / 2.61 & $<10^{-3}\,^{***}$ & $0.0196^{*}$ \\
Llama-3.1-8B-Instruct    & 2809 & 4096 & 112 / 2.73 & $<10^{-3}\,^{***}$ & $0.0196^{*}$ \\
Falcon-7b-instruct       & 2702 & 4544 & 124 / 2.73 & $<10^{-3}\,^{***}$ & $0.0196^{*}$ \\
\bottomrule
\end{tabular}

  \end{subtable}
\end{table}

\noindent This appendix provides additional diagnostics that complement the main H1 existence test.
Throughout, we use the same notations as in \S\ref{sec:method:shareness_tests}:
a \emph{shared set} $S_\ell(\tau,m)$ is defined in the pooled PCA coordinate system by thresholding per-task
relative variance contributions, and the \emph{shared core size} is $T_\ell=|S_\ell(\tau,m)|$.
These analyses do not repeat the null-test procedure (reported in the main paper); instead, they characterize
(i) how sharing varies across coarse task groupings and (ii) how pooled subspaces behave as we vary the task pool and the inference phase.


\begin{table*}[t]
\scriptsize
\caption{Sharedness test at layer $\ell=10$ with $\tau=10^{-3}$ and 8-task $m_{\mathrm{shared}}= 8$.
$k$ is the cross-task PCA dimension (95\% variance), $d$ is the hidden dimension.
Permutation null uses $B_{\text{perm}}=2000$ and scramble null uses $B_{\text{scr}}=200$; p-values at the floor are reported as upper bounds.}
\label{tab:exists_scramble_200}
\setlength{\tabcolsep}{4pt}
\centering
\renewcommand{\arraystretch}{1.12}
\begin{tabular}{l r r r c c}
\toprule
Model & $k$ & $d$ & \makecell[c]{Shared\\(\# / \%$d$)} & $p_{\text{perm}}$ & $p_{\text{scr}}$ \\
\midrule
Qwen2.5-7B-Instruct      & 2614 & 3584 &  73 / 2.04 & $\le 5.0{\times}10^{-4}\,^{***}$ & $\le 5.0{\times}10^{-3}\,^{**}$ \\
Llama-2-7B-Chat          & 2989 & 4096 & 109 / 2.66 & $\le 5.0{\times}10^{-4}\,^{***}$ & $\le 5.0{\times}10^{-3}\,^{**}$ \\
Llama-3.1-8B-Instruct    & 2809 & 4096 & 112 / 2.73 & $\le 5.0{\times}10^{-4}\,^{***}$ & $\le 5.0{\times}10^{-3}\,^{**}$ \\
Falcon-7b-instruct       & 2702 & 4544 & 124 / 2.73 & $\le 5.0{\times}10^{-4}\,^{***}$ & $\le 5.0{\times}10^{-3}\,^{**}$ \\
\bottomrule
\end{tabular}
\end{table*}

\begin{table*}[t]
\centering
\caption{\textbf{Full sharedness results across layers.}
Same setting as Table~\ref{tab:shareness_results}, but reporting layers $\ell\in\{4,10,20,24\}$ for each model.}
\label{tab:shareness_results_full}
\scriptsize
\setlength{\tabcolsep}{3.2pt}
\begin{tabular}{llrrrrrrr}
\toprule
Model & Layer & $d$ & $d_{\mathrm{cross}}$ & $\frac{d_{\mathrm{cross}}}{d}$ &
$T_\ell$ & $\frac{T_\ell}{d_{\mathrm{cross}}}$ &
states & $p_{\mathrm{perm}}$ / $p_{\mathrm{scr}}$ \\
\midrule
Falcon-7B         & 4  & 4544 & 2679 & 0.590 & 125 & 0.0467 &  6023 & $5.00{\times}10^{-4}$ / $4.98{\times}10^{-3}$ \\
Falcon-7B         & 10 & 4544 & 2702 & 0.595 & 124 & 0.0459 &  6023 & $5.00{\times}10^{-4}$ / $4.98{\times}10^{-3}$ \\
Falcon-7B         & 20 & 4544 & 2742 & 0.603 & 111 & 0.0405 &  6023 & $5.00{\times}10^{-4}$ / $4.98{\times}10^{-3}$ \\
Falcon-7B         & 24 & 4544 & 2817 & 0.620 & 109 & 0.0387 &  6023 & $5.00{\times}10^{-4}$ / $4.98{\times}10^{-3}$ \\
\midrule
Llama-2-7B-chat   & 4  & 4096 & 2970 & 0.725 & 118 & 0.0397 & 18560 & $5.00{\times}10^{-4}$ / $4.98{\times}10^{-3}$ \\
Llama-2-7B-chat   & 10 & 4096 & 2989 & 0.730 & 109 & 0.0365 & 18560 & $5.00{\times}10^{-4}$ / $4.98{\times}10^{-3}$ \\
Llama-2-7B-chat   & 20 & 4096 & 3066 & 0.749 & 140 & 0.0457 & 18560 & $5.00{\times}10^{-4}$ / $4.98{\times}10^{-3}$ \\
Llama-2-7B-chat   & 24 & 4096 & 3184 & 0.777 & 138 & 0.0433 & 18560 & $5.00{\times}10^{-4}$ / $4.98{\times}10^{-3}$ \\
\midrule
Llama-3.1-8B      & 4  & 4096 & 2739 & 0.669 & 107 & 0.0391 & 20000 & $5.00{\times}10^{-4}$ / $4.98{\times}10^{-3}$ \\
Llama-3.1-8B      & 10 & 4096 & 2809 & 0.686 & 112 & 0.0399 & 20000 & $5.00{\times}10^{-4}$ / $4.98{\times}10^{-3}$ \\
Llama-3.1-8B      & 20 & 4096 & 2795 & 0.682 &  94 & 0.0336 & 20000 & $5.00{\times}10^{-4}$ / $4.98{\times}10^{-3}$ \\
Llama-3.1-8B      & 24 & 4096 & 2861 & 0.698 &  98 & 0.0343 & 20000 & $5.00{\times}10^{-4}$ / $4.98{\times}10^{-3}$ \\
\midrule
Qwen2.5-7B        & 4  & 3584 & 2621 & 0.731 &  92 & 0.0351 & 20000 & $5.00{\times}10^{-4}$ / $4.98{\times}10^{-3}$ \\
Qwen2.5-7B        & 10 & 3584 & 2614 & 0.729 &  73 & 0.0279 & 20000 & $5.00{\times}10^{-4}$ / $4.98{\times}10^{-3}$ \\
Qwen2.5-7B        & 20 & 3584 & 2631 & 0.734 &  87 & 0.0331 & 20000 & $5.00{\times}10^{-4}$ / $4.98{\times}10^{-3}$ \\
Qwen2.5-7B        & 24 & 3584 & 2678 & 0.747 & 111 & 0.0414 & 20000 & $5.00{\times}10^{-4}$ / $4.98{\times}10^{-3}$ \\
\bottomrule
\end{tabular}
\end{table*}

\paragraph{B.1.1 Within-category versus mixed-category sharing.}
To probe whether sharing is merely a byproduct of coarse dataset categories, we compare \emph{within-category} task pairs
to a \emph{mixed-category} baseline formed by cross-category pairs.
For each pair, we compute the \emph{shared ratio} $\mathrm{SR}=|S_\ell(\tau,m)|/k$, where $k$ is the pooled PCA dimensionality
selected by the variance retention target (default $\rho=0.95$) and we use the same sharedness threshold as in the main paper (default $\tau=10^{-3}$).
Figure~\ref{fig:within-mixed} summarizes the results and Table~\ref{tab:within-mixed} reports the corresponding numbers.
The math pair (GSM8K+AQuA) exhibits noticeably higher within-pair sharing than the mixed baseline,
suggesting stronger reuse within mathematical reasoning.
By contrast, commonsense, reasoning, and science show within-category sharing that is similar to (or slightly below) the mixed baseline,
indicating that coarse category labels do not uniformly imply higher shareness under our metric.
Overall, this diagnostic suggests that the decode-time shared core identified by DecodeShare is not simply a trivial artifact
of coarse category membership.

\begin{figure}[t]
\centering
\begin{subfigure}[t]{0.49\columnwidth}
  \centering
  \includegraphics[width=\linewidth]{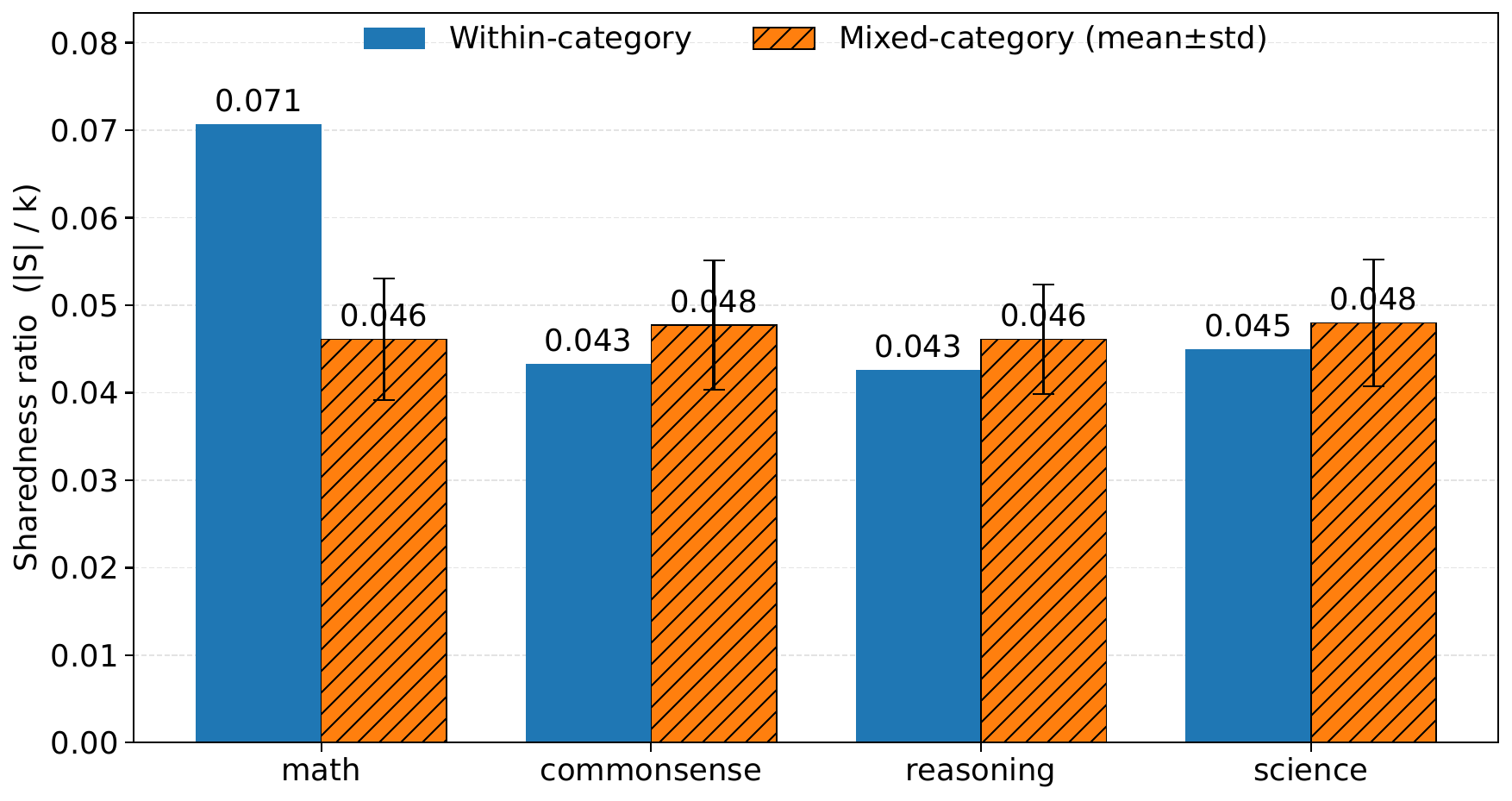}
  \caption{Within-category shareness compared to a mixed-category baseline (mean$\pm$std) across four coarse categories.}
  \label{fig:within-mixed}
\end{subfigure}\hfill
\begin{subfigure}[t]{0.49\columnwidth}
  \centering
  \includegraphics[width=\linewidth]{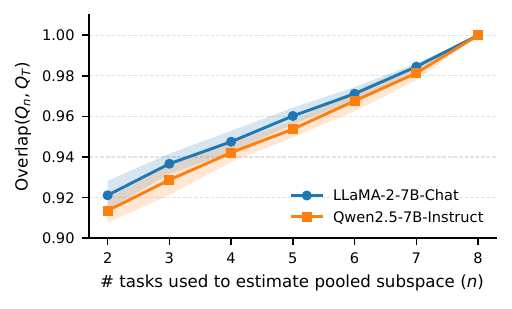}
  \caption{Subspace overlap to the full-task pooled subspace as a function of the number of tasks used to estimate the pooled PCA basis.}
  \label{fig:exp2_models_placeholder}
\end{subfigure}
\caption{(Left) Within-category shareness vs. mixed-category baseline. (Right) Overlap to the full-task pooled subspace vs. number of tasks.}
\label{fig:within-mixed-and-overlap}
\end{figure}

\paragraph{B.1.2. Pooled subspace convergence as we add tasks.}
Next, we study whether the \emph{pooled PCA subspace} used to define shared directions is stable as the task pool grows.
Fix a layer $\ell$ and retention target (default $\rho=0.95$).
For each $n\in\{2,\dots,T\}$, we sample $n$ tasks, estimate a pooled PCA basis $Q_n$, and compare it to the full-task reference basis $Q_T$.
To quantify similarity between two orthonormal bases $Q_a\in\mathbb{R}^{d\times k_a}$ and $Q_b\in\mathbb{R}^{d\times k_b}$,
we use a normalized subspace overlap score
\[
\mathrm{Overlap}(Q_a,Q_b)\;\triangleq\; \frac{\|Q_a^\top Q_b\|_F^2}{\min(k_a,k_b)}\in[0,1],
\]
which equals the mean squared cosine of principal angles (1 indicates identical subspaces).
Table~\ref{tab:exp2_models} and Fig.~\ref{fig:exp2_models_placeholder} show that the pooled subspace converges quickly:
even with a small number of tasks, the overlap to the full-task reference is already high and increases monotonically as tasks are added.
Meanwhile the pooled dimension $k$ (\texttt{cross\_dim} in the table) grows with $n$, consistent with additional tasks contributing
extra directions needed to maintain the same explained-variance target.
This supports a practical point: DecodeShare’s pooled coordinate system is not fragile to the exact composition of the task pool,
even though the \emph{shared set} itself depends on the sharedness threshold and the ``nontrivial for many tasks'' criterion.

\begin{table}[t]
\centering
\small
\caption{Within/mixed-category diagnostic for H1. SR denotes shared ratio $|S_\ell(\tau,m)|/k$ under the default settings.}
\label{tab:within-mixed}
\begin{tabular}{l l r r r r}
\toprule
Category & Within tasks & $k$ & Within SR & Mixed SR ($\mu\pm\sigma$) & Within / Mixed count \\
\midrule
math        & GSM8K, AQuA                 & 2221 & 0.0707 & 0.0461$\pm$0.0070 & 157 / 123.1$\pm$14.0 \\
commonsense & CSQA, PIQA                  & 2723 & 0.0433 & 0.0477$\pm$0.0074 & 118 / 126.8$\pm$14.8 \\
reasoning   & StratQA, BoolQ              & 2815 & 0.0426 & 0.0461$\pm$0.0062 & 120 / 123.5$\pm$12.7 \\
science     & OBQA, QASC                  & 2600 & 0.0450 & 0.0480$\pm$0.0072 & 117 / 126.7$\pm$14.7 \\
\bottomrule
\end{tabular}
\end{table}

\begin{table}[t]
\centering
\caption{Pooled-subspace convergence diagnostic (decode-time states). We report mean$\pm$std over repeats.}
\label{tab:exp2_models}
\small
\begin{tabular}{c
                cc
                cc}
\toprule
& \multicolumn{2}{c}{\textbf{LLaMA}} & \multicolumn{2}{c}{\textbf{Qwen}} \\
\cmidrule(lr){2-3}\cmidrule(lr){4-5}
$n$ tasks
& Overlap $\uparrow$ & cross\_dim
& Overlap $\uparrow$ & cross\_dim \\
\midrule
2 & 0.921$\pm$0.007 & 2636.6$\pm$136.8 & 0.914$\pm$0.006 & 2258.0$\pm$80.1 \\
3 & 0.937$\pm$0.005 & 2788.9$\pm$83.4  & 0.929$\pm$0.007 & 2419.5$\pm$48.0 \\
4 & 0.948$\pm$0.005 & 2885.1$\pm$56.4  & 0.942$\pm$0.005 & 2490.1$\pm$38.8 \\
5 & 0.960$\pm$0.004 & 2904.3$\pm$59.4  & 0.954$\pm$0.004 & 2521.1$\pm$38.5 \\
6 & 0.971$\pm$0.003 & 2951.5$\pm$33.9  & 0.968$\pm$0.005 & 2569.1$\pm$24.0 \\
7 & 0.985$\pm$0.002 & 2967.2$\pm$24.0  & 0.981$\pm$0.004 & 2592.4$\pm$17.0 \\
8 & 1.000$\pm$0.000 & 2989.0$\pm$0.0   & 1.000$\pm$0.000 & 2614.0$\pm$0.0 \\
\bottomrule
\end{tabular}
\end{table}

\paragraph{B.1.3. Phase-distinguished pooled geometry and the fully-shared core.}
Finally, we compare pooled subspaces estimated from different inference phases:
\texttt{decode-last} (KV-cached, seq len $=1$), \texttt{prefill-last} (full-sequence prefill), and \texttt{decode-step-$t$} (a fixed decode step).
Table~\ref{tab:exp25_modes} shows that \emph{all} phases yield stable pooled subspaces under the overlap diagnostic, but with markedly different pooled dimensions:
\texttt{decode-last} produces a much larger pooled subspace (thousands of dimensions at $\rho=0.95$), whereas \texttt{prefill-last} produces a much smaller pooled subspace (hundreds of dimensions).
This highlights a key nuance: \emph{subspace stability alone is not decode-specific}---prefill can also yield a stable low-rank pooled geometry.

To connect stability to shareness, we evaluate the \emph{fully-shared core size} within each estimated basis by applying the same sharedness criterion
at the all-task setting $m=|T|$ (using $\tau=10^{-3}$) and counting how many pooled directions are nontrivial for \emph{every} task.
Table~\ref{tab:exp25_modes} shows that the \emph{absolute} fully-shared core size remains on the order of $10^2$ directions across phases and across $n$,
while the \emph{shared ratio} can vary widely because the denominator $k$ differs substantially across phases (and typically increases with $n$).
In particular, \texttt{decode-last} yields a modest shared core in absolute size but a smaller shared ratio due to a much larger pooled dimension,
whereas \texttt{prefill-last} yields a higher shared ratio largely because its pooled dimension is much smaller.
These diagnostics sharpen the main paper’s message: recovering a stable pooled geometry is not sufficient to identify a decision-time shared channel,
and phase alignment matters for whether an estimated basis transfers to decode-time causal tests (see H3).


\begin{table*}[t]
\centering
\caption{Phase-distinguished pooled-subspace convergence (LLaMA). Mean$\pm$std over repeats.}
\label{tab:exp25_modes}
\scriptsize
\setlength{\tabcolsep}{3.2pt}
\renewcommand{\arraystretch}{1.05}
\begin{tabular}{l ccccccc}
\toprule
& \multicolumn{7}{c}{$n$ tasks} \\
\cmidrule(lr){2-8}
Metric / Mode & 2 & 3 & 4 & 5 & 6 & 7 & 8 \\
\midrule
Overlap $\uparrow$ (decode-last) &
$0.920\pm0.006$ & $0.933\pm0.005$ & $0.946\pm0.004$ & $0.958\pm0.004$ & $0.970\pm0.001$ & $0.984\pm0.002$ & $1.000\pm0.000$ \\
cross\_dim (decode-last) &
$2571.0\pm87.0$ & $2745.7\pm52.2$ & $2818.9\pm69.3$ & $2869.9\pm54.3$ & $2890.2\pm33.7$ & $2942.9\pm11.6$ & $2961.0\pm0.0$ \\
\addlinespace[2pt]
Overlap $\uparrow$ (prefill-last) &
$0.884\pm0.039$ & $0.896\pm0.029$ & $0.910\pm0.026$ & $0.922\pm0.023$ & $0.948\pm0.012$ & $0.959\pm0.016$ & $1.000\pm0.000$ \\
cross\_dim (prefill-last) &
$160.4\pm3.9$ & $223.5\pm3.8$ & $278.1\pm6.1$ & $327.8\pm6.6$ & $377.5\pm5.7$ & $415.7\pm4.7$ & $456.0\pm0.0$ \\
\addlinespace[2pt]
Overlap $\uparrow$ (decode-step-$t$) &
$0.946\pm0.011$ & $0.946\pm0.007$ & $0.948\pm0.007$ & $0.954\pm0.004$ & $0.961\pm0.004$ & $0.976\pm0.004$ & $1.000\pm0.000$ \\
cross\_dim (decode-step-$t$) &
$181.2\pm14.7$ & $271.8\pm15.6$ & $345.6\pm17.0$ & $426.2\pm9.0$ & $498.2\pm13.4$ & $577.0\pm8.8$ & $644.0\pm0.0$ \\
\bottomrule
\end{tabular}
\end{table*}

\begin{figure}[t]
\centering
\includegraphics[width=0.95\linewidth]{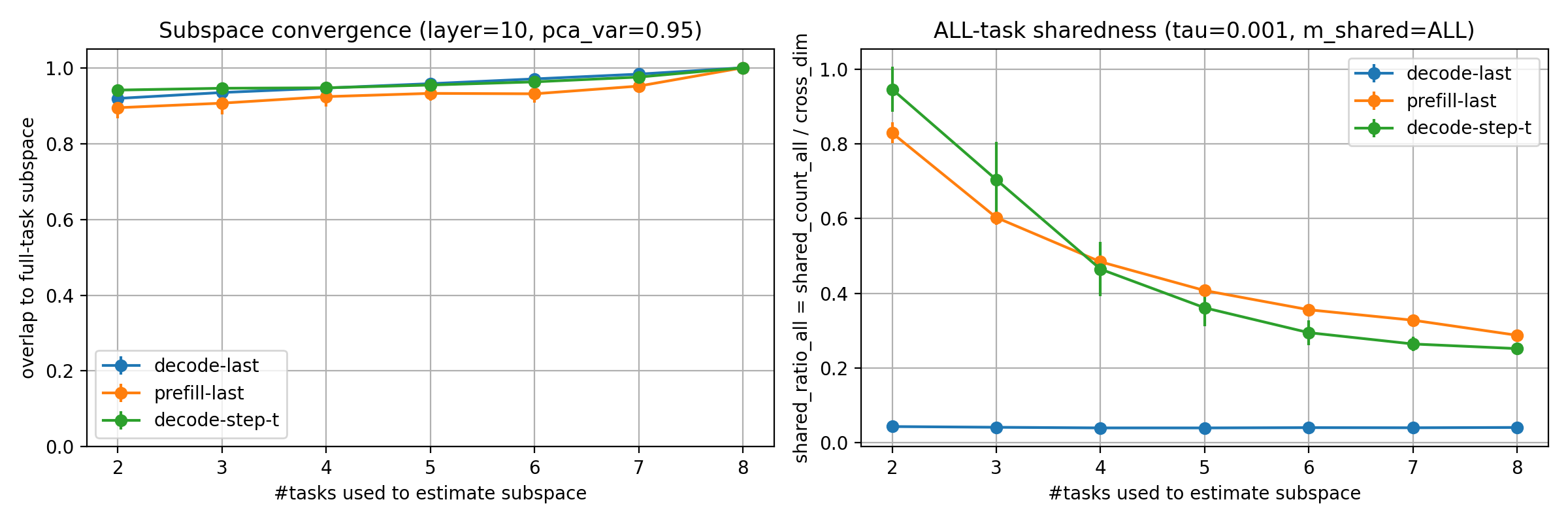}
\caption{Phase-distinguished convergence trends across modes. Left: overlap to the full-task pooled subspace; right: cross dimension as the number of tasks increases.}
\label{fig:exp25_modes_placeholder}
\end{figure}

\paragraph{B.1.4. All-tasks vs.\ majority-shared (pooled) shareness.} \label{sec:appendix_share_full_pooled}
We distinguish two \emph{thresholding regimes} for the same shared-set definition computed in a \emph{single, task-aligned} coordinate system.
Concretely, we first obtain a shared basis $Q_\ell$ via pooled PCA (to remove the permutation/rotation non-identifiability of per-task PCA components), and then define the shared set
\[
S_\ell(\tau,m)=\Bigl\{ i \;:\; \bigl|\{t \in \mathcal{T} : r_{\ell,t,i}\ge \tau\}\bigr| \ge m \Bigr\},
\]
where $r_{\ell,t,i}$ is the task-$t$ relative-variance score of direction $i$ at layer $\ell$.
The \textbf{All-tasks} setting ($m=|\mathcal{T}|$) is the strict \emph{intersection} criterion: a direction is declared shared only if it exceeds $\tau$ for \emph{every} task.
The \textbf{majority-shared} setting (our default; e.g., $m=8$ out of $|\mathcal{T}|=13$) operationalizes the claim as a \emph{compact core shared by many tasks} rather than by all tasks.
Importantly, both settings are evaluated under the \emph{same} two task-preserving nulls (permutation Null-1 and orthogonal-scramble+re-estimation Null-2), so switching from $m=|\mathcal{T}|$ to $m<|\mathcal{T}|$ does not relax statistical control: a shared set must still lie in the tail of \emph{both} null distributions, and Null-2 in particular is designed to reject spuriously inflated shareness that can arise from coordinate artifacts or over-permissive thresholds.

\paragraph{Why cross all-tasks shared subspace can be empty without contradicting H1.}
All-tasks shareness is a \emph{lower bound} on cross-task sharing: by construction $S_\ell(\tau,|\mathcal{T}|)\subseteq S_\ell(\tau,m)$ for any $m<|\mathcal{T}|$.
With heterogeneous tasks (here, 13 diverse benchmarks), the all-tasks intersection can become empty even when there exists a genuine shared mechanism, because it is extremely sensitive to (i) task-specific idiosyncrasies, (ii) finite-sample variability in $r_{\ell,t,i}$, and (iii) any single ``outlier'' task that falls just below $\tau$ on an otherwise consistently-used direction.
Thus, observing a non-empty and significant majority-shared core alongside an empty all-tasks intersection should be interpreted as: there exists a statistically validated shared decision subspace used by \emph{most} tasks, but not necessarily in a uniform-above-threshold way for \emph{every} task.
This is exactly the regime our claim targets: a \emph{compact, decode-time shared workspace} that is broadly reused across tasks, while allowing that some tasks may rely on alternative or additional directions.
Finally, we guard against the concern that pooling could induce task leakage by using the LOTO control (fit $Q_\ell$ excluding a held-out task and then evaluate shareness on that held-out task), ensuring the reported majority-shared core reflects genuine cross-task structure rather than pooled overfitting.


\begin{table}[htp]
\centering
\small
\caption{H1 shareness test on the 13-task full\_benchmark at layer $\ell=10$: comparison between the strict all-tasks definition ($m=|T|$, shown as $m=13$ or ``all'' depending on the run) and the majority-shared core ($m=8$). We report cross-task pooled PCA dimension (cross), shared set size $|S|$, ratio $|S|/cross$, and null p-values $p_1$ (permutation) and $p_2$ (orthogonal scramble).}
\label{tab:app_h1_all_vs_pooled_4models}
\setlength{\tabcolsep}{4.5pt}
\begin{tabular}{llrrrrrrr}
\toprule
Model & Setting & $\tau$ & $m$ & cross & $|S|$ & $|S|/cross$ & $p_1$ & $p_2$ \\
\midrule
Llama-3.1-8B-Instruct & All-tasks & 0.0010 & all & 2812 & 110 & 0.039 & 0.000500 & 0.009901 \\
Llama-3.1-8B-Instruct & Pooled (majority) & 0.0010 & 8 & 2812 & 143 & 0.051 & 0.000500 & 0.009901 \\
\addlinespace

Qwen2.5-7B-Instruct & All-tasks & 0.0010 & 13 & 2549 & 56 & 0.022 & 0.000500 & 0.009901 \\
Qwen2.5-7B-Instruct & Pooled (majority) & 0.0010 & 8 & 2549 & 85 & 0.033 & 0.000500 & 0.009901 \\
\addlinespace

Falcon-7B-Instruct & All-tasks & 0.0010 & all & 2479 & 122 & 0.049 & 0.000500 & 0.009901 \\
Falcon-7B-Instruct & Pooled (majority) & 0.0010 & 8 & 2479 & 154 & 0.062 & 0.000500 & 0.009901 \\
\addlinespace

Llama-2-7b-chat-hf & All-tasks & 0.0010 & 13 & 2423 & 0 & 0.000 & 1.000000 & 1.000000 \\
Llama-2-7b-chat-hf & Pooled (majority) & 0.0010 & 8 & 2423 & 35 & 0.014 & 0.000500 & 0.009901 \\
\bottomrule
\end{tabular}
\end{table}


\begin{table}[t]
\centering
\small
\caption{Loosened shareness setting on the same 13-task full\_benchmark (layer $\ell=10$): $\tau=3\times10^{-4}$ and $m=8$. We report cross, $|S|$, $|S|/cross$, and null p-values $p_1$/$p_2$ using the same criteria as Table~\ref{tab:app_h1_all_vs_pooled_4models}.}
\label{tab:app_h1_loosened_4models}
\setlength{\tabcolsep}{4.5pt}
\begin{tabular}{lrrrrrrr}
\toprule
Model & $\tau$ & $m$ & cross & $|S|$ & $|S|/cross$ & $p_1$ & $p_2$ \\
\midrule
meta-llama/Llama-3.1-8B-Instruct & 0.0003 & 8 & 2812 & 670 & 0.238 & 0.000500 & 1.000000 \\
Qwen/Qwen2.5-7B-Instruct & 0.0003 & 8 & 2549 & 1163 & 0.456 & 0.000500 & 0.009901 \\
tiiuae/falcon-7b-instruct & 0.0003 & 8 & 2479 & 530 & 0.214 & 0.000500 & 1.000000 \\
meta-llama/Llama-2-7b-chat-hf & 0.0003 & 8 & 2423 & 1389 & 0.573 & 0.000500 & 0.009901 \\
\bottomrule
\end{tabular}
\end{table}

\paragraph{B.1.5. Loosened threshold sensitivity (four-model subset).} \label{sec:appendix_share_full_loosened}
We further test a permissive threshold setting (\texttt{loosened}: $\tau=3\times 10^{-4}$, $m=8$) on the same 13-task pool to probe the predicted degeneracy regime.
For Llama-3.1-8B and Falcon-7B, the shared set expands substantially (e.g., $|S|/\texttt{cross}=0.238$ and $0.214$ respectively), but fails the stronger orthogonal-scramble null (Null-2; $p_2=1.0$), indicating that the inflated shared size is not distinguishable from scrambled structure.
For Qwen2.5-7B and Llama-2-7B, the loosened setting remains significant under both nulls ($p_1<0.05$, $p_2<0.05$) yet yields very large shared ratios ($|S|/\texttt{cross}=0.456$ and $0.573$), suggesting that overly permissive thresholds can recover a broad low-rank decision subspace rather than a compact shared core.
Overall, these results are consistent with the $\tau$-stringency interpretation: when $\tau$ becomes too small relative to the diffuse-usage scale $1/k$, many low-concentration directions are labeled nontrivial, producing a large and less discriminative shared set; therefore we use $\tau=10^{-3}$ as the conservative default and treat \texttt{loosened} primarily as a sensitivity/degeneracy check.

\subsection{Additional Results for \hyptag{H2}: Experiments for More Models.}
\label{sec:appendix_H2_add_res}

\paragraph{B.2.1 Patchback results across models and layers.}
\label{sec:patchback_discussion}

Tables~\ref{tab:patchback_mc_agg},\ref{tab:patchback_open}, and \ref{tab:patchback_flipset_agg} summarize patchback behavior across three model families (Llama, Falcon, Qwen) and three representative depths (layers 4/10/24).
Throughout, rescue rates are computed on the corresponding flip set $\flipset$ (examples that flip under the specified decode-only ablation), and aggregated by summing rescued/total across tasks (and across metrics where applicable).

\paragraph{A consistent qualitative pattern: patching along $S_\ell$ is highly effective; nonshared controls are negligible.}
Across models and layers, patching along the identified shared decode subspace $S_\ell$ is consistently effective.
In multiple-choice (Table~\ref{tab:patchback_mc_agg}), the full patch (Patched@full) rescues essentially all flips, while patching a nonshared direction is exactly zero across all settings (NonsharedPatch).
In open-answer (Table~\ref{tab:patchback_open}), NonsharedPatch remains small (typically $\sim$1--7\%) and far below Patched(self), arguing against a naive ``any patch works'' explanation and supporting the view that a small shared decode pathway is a high-leverage causal route for last-token decision-making.

\paragraph{Model- and layer-dependent separation from random controls.}
While targeted patching is consistently strong, the strength of random controls varies by model and depth.
In Llama, random shared-vector and random-subspace controls are comparatively small, yielding a clean mechanistic gap.
In contrast, Qwen exhibits unusually strong random-subspace baselines at multiple layers, indicating that generic low-rank perturbations can often intersect decision-relevant directions.
This suggests shared-subspace interference is a primary contributor to failures, but some models/layers also exhibit broader low-rank decision structure that makes random controls stronger.

\paragraph{Open-answer: strong time-shuffle indicates temporal stationarity of the shared decode signal.}
In open-answer patchback (Table~\ref{tab:patchback_open}), the within-example \textbf{Time-shuffle} intervention can approach Patched(self.)
This is expected under our setup: the shared decode subspace $S_\ell$ is estimated from the decode-time activation distribution, so donor components drawn from other decode steps of the same example remain on-manifold for $S_\ell$.
The competitiveness of Time-shuffle therefore suggests that the decision-relevant signal carried by $S_\ell$ is approximately \emph{stationary across decode steps} (i.e., not tied to a single precise time index).
Importantly, this does not collapse specificity: patching directions outside $S_\ell$ (NonsharedPatch) remains small, and random low-rank controls remain substantially below targeted patching in most settings. Nevertheless, Patched(self) remains the strongest intervention overall, and NonsharedPatch stays comparatively small, preserving a clear separation between targeted repair and naive patching.

\paragraph{Transfer patching can be depth-sensitive.}
Flipset transfer (Table~\ref{tab:patchback_flipset_agg}) reveals that robustness does not necessarily transfer uniformly across layers.
For example, Qwen maintains high transfer rescue at layer10 but degrades sharply at layer24, even though self patching remains at 100\%.
This layer sensitivity suggests that the shared decode pathway is not monolithic across depth: deeper layers may require different donor selection or richer basis estimation to support cross-task transfer reliably.

\begin{table*}[t]
\centering
\small
\caption{\textbf{Open-answer patchback, aggregated over tasks/metrics.}
Aggregated over GSM8K (gen-math + pair-logprob) and HumanEval (pair-logprob + gen-code-compile).
$|\flipset_{\mathrm{agg}}|$ is the aggregated denominator (summed across included tasks/metrics).
Green column is targeted patching along $S_\ell$; blue columns are controls. Time-shuffle is a within-example temporal diagnostic (donor taken from a different decode step of the same example), preserving the decode-time distribution used to estimate $S_\ell$.}
\label{tab:patchback_open}
\setlength{\tabcolsep}{6pt}
\renewcommand{\arraystretch}{1.15}
\begin{threeparttable}
\begin{tabular}{llr
    >{\columncolor{PatchGreenStrong}}r
    >{\columncolor{CtrlBlue}}r
    >{\columncolor{CtrlBlue}}r
    >{\columncolor{CtrlBlue}}r
    >{\columncolor{CtrlBlue}}r}
\toprule
\rowcolor{HeaderGray}
Model & Layer & $|\flipset_{\mathrm{agg}}|$ &
\multicolumn{1}{c}{\cellcolor{PatchGreenStrong}\textbf{Targeted patch}} &
\multicolumn{4}{c}{\cellcolor{CtrlBlue}\textbf{Controls}} \\
\cmidrule(lr){4-4}\cmidrule(lr){5-8}
\rowcolor{HeaderGray}
& & &
\textbf{Patched(self)} &
\textbf{Time-shuffle} &
\makecell{\textbf{RandVec}\\\textbf{(shared)}} &
\makecell{\textbf{Rand}\\\textbf{Subspace}} &
\makecell{\textbf{Nonshared}\\\textbf{Patch}} \\
\midrule
Llama-2-7B-Chat & 4  & 60 & \resc{61.7}{37}{60} & \resc{53.3}{32}{60} & \resc{20.0}{12}{60} & \resc{11.7}{7}{60} & \resc{0.0}{0}{60} \\
Llama-2-7B-Chat & 10 & 86 & \resc{38.4}{33}{86} & \resc{36.0}{31}{86} & \resc{12.8}{11}{86} & \resc{9.3}{8}{86} & \resc{3.5}{3}{86} \\
Llama-2-7B-Chat & 24 & 71 & \resc{52.1}{37}{71} & \resc{42.3}{30}{71} & \resc{4.2}{3}{71} & \resc{2.8}{2}{71} & \resc{1.4}{1}{71} \\
\addlinespace[2pt]
Falcon-7B-Instruct & 4  & 107 & \resc{55.1}{59}{107} & \resc{48.6}{52}{107} & \resc{8.4}{9}{107} & \resc{1.9}{2}{107} & \resc{0.9}{1}{107} \\
Falcon-7B-Instruct & 10 & 105 & \resc{54.3}{57}{105} & \resc{48.6}{51}{105} & \resc{5.7}{6}{105} & \resc{3.8}{4}{105} & \resc{1.0}{1}{105} \\
Falcon-7B-Instruct & 24 & 80 & \resc{37.5}{30}{80} & \resc{31.2}{25}{80} & \resc{7.5}{6}{80} & \resc{0.0}{0}{80} & \resc{1.2}{1}{80} \\
\addlinespace[2pt]
Qwen2.5-7B-Instruct & 4  & 94 & \resc{74.5}{70}{94} & \resc{70.2}{66}{94} & \resc{19.1}{18}{94} & \resc{17.0}{16}{94} & \resc{7.4}{7}{94} \\
Qwen2.5-7B-Instruct & 10 & 84 & \resc{70.2}{59}{84} & \resc{67.9}{57}{84} & \resc{17.9}{15}{84} & \resc{19.0}{16}{84} & \resc{7.1}{6}{84} \\
Qwen2.5-7B-Instruct & 24 & 54 & \resc{46.3}{25}{54} & \resc{35.2}{19}{54} & \resc{9.3}{5}{54} & \resc{3.7}{2}{54} & \resc{1.9}{1}{54} \\
\bottomrule
\end{tabular}
\end{threeparttable}
\end{table*}

\begin{table}[t]
\centering
\small
\caption{\textbf{Flipset transfer patching on AQuA (aggregated).}
$|\flipset_{\mathrm{flip}}|$ is the flipset size (baseline-correct examples that flip under full ablation).
Both columns patch along the shared decode subspace $S_\ell$: self-donor vs transfer donor.}
\label{tab:patchback_flipset_agg}
\setlength{\tabcolsep}{7pt}
\renewcommand{\arraystretch}{1.15}
\begin{threeparttable}
\begin{tabular}{llr
    >{\columncolor{PatchGreenStrong}}r
    >{\columncolor{PatchGreen}}r}
\toprule
\rowcolor{HeaderGray}
Model & Layer & $|\flipset_{\mathrm{flip}}|$ &
\textbf{Patched(self)} & \textbf{Patched(transfer)} \\
\midrule
Llama-2-7B-Chat & 4  & 40  & \resc{75.0}{30}{40}  & \resc{75.0}{30}{40} \\
Llama-2-7B-Chat & 10 & 169 & \resc{75.1}{127}{169} & \resc{78.7}{133}{169} \\
Llama-2-7B-Chat & 24 & 66  & \resc{57.6}{38}{66}  & \resc{36.4}{24}{66} \\
\addlinespace[2pt]
Falcon-7B-Instruct & 4  & 6   & \resc{100.0}{6}{6}   & \resc{83.3}{5}{6} \\
Falcon-7B-Instruct & 10 & 6   & \resc{100.0}{6}{6}   & \resc{83.3}{5}{6} \\
Falcon-7B-Instruct & 24 & 88  & \resc{100.0}{88}{88} & \resc{92.0}{81}{88} \\
\addlinespace[2pt]
Qwen2.5-7B-Instruct & 4  & 116 & \resc{100.0}{116}{116} & \resc{96.6}{112}{116} \\
Qwen2.5-7B-Instruct & 10 & 128 & \resc{100.0}{128}{128} & \resc{96.1}{123}{128} \\
Qwen2.5-7B-Instruct & 24 & 122 & \resc{100.0}{122}{122} & \resc{24.6}{30}{122} \\
\bottomrule
\end{tabular}
\end{threeparttable}
\end{table}

\begin{table}[t]
\centering
\scriptsize
\caption{H3 subspace statistics. We estimate shared subspaces from decode vs.\ prefill states under $(\tau{=}10^{-3}, m{=}\texttt{all})$. $k_{\text{match}}=\min(k_{\text{shared}}^{\text{dec}}, k_{\text{shared}}^{\text{pre}})=48$. Principal Ang.\ (deg) summarize the distribution of principal angles between the decode- and prefill-estimated shared subspaces (reported as mean/p50/p95); larger values indicate stronger misalignment.}
\label{tab:h3:subspace_stats}
\setlength{\tabcolsep}{4pt}
\renewcommand{\arraystretch}{1.05}
\begin{tabular}{l r r r r r r}
\toprule
\multicolumn{1}{c}{\multirow{2}{*}{Setting}} &
\multicolumn{1}{c}{\multirow{2}{*}{$k_{\text{shared}}^{\text{dec}}$}} &
\multicolumn{1}{c}{\multirow{2}{*}{$k_{\text{shared}}^{\text{pre}}$}} &
\multicolumn{1}{c}{\multirow{2}{*}{$k_{\text{match}}$}} &
\multicolumn{3}{c}{Principal Ang.\ (deg)} \\
\cmidrule(lr){5-7}
& & & & mean & p50 & p95 \\
\midrule
\makecell[l]{LLaMA2-7B-Chat,\\layer=10} & 136 & 48 & 48 & 79.74 & 82.04 & 89.12 \\
\bottomrule
\end{tabular}
\end{table}

\newcommand{\cell}[2]{\shortstack{\strut #1\\{\tiny #2}}}

\begin{table}[t]
\centering
\scriptsize
\caption{\textbf{Prefill-vs-decode alignment.}
All entries are accuracy in \% (top) with 95\% CI (bottom). Protocol: generation-based evaluation.
We estimate a shared basis from decode states (Decode-shared) or from prefill states (Prefill-shared), match dimensions ($k{=}129$),
and apply the same \emph{decode-only} projection removal with a matched-energy random control (Random).}
\label{tab:prefill-vs-decode-8tasks-kmatch}
\setlength{\tabcolsep}{3.2pt}
\renewcommand{\arraystretch}{1.15}
\begin{tabular}{l c c c c c}
\toprule
Task & Baseline & Decode-shared & Prefill-shared & Random & $p$ \\
\midrule
GSM8K &
\cell{$16.8$}{[12.5, 21.5]} &
\cell{$0.0$}{[0.0, 0.0]} &
\cell{$17.6$}{[13.3, 22.3]} &
\cell{$16.8$}{[12.5, 21.5]} &
$<10^{-4}$ \\
CSQA &
\cell{$37.9$}{[32.0, 44.1]} &
\cell{$23.4$}{[18.4, 28.9]} &
\cell{$31.2$}{[25.8, 36.7]} &
\cell{$40.6$}{[34.8, 46.5]} &
$0.038$ \\
StrategyQA &
\cell{$43.8$}{[37.9, 50.0]} &
\cell{$10.5$}{[7.0, 14.5]} &
\cell{$41.4$}{[35.5, 47.7]} &
\cell{$46.5$}{[40.2, 52.7]} &
$<10^{-4}$ \\
ARC-C &
\cell{$49.6$}{[43.8, 55.5]} &
\cell{$19.5$}{[14.8, 24.6]} &
\cell{$39.1$}{[33.2, 44.9]} &
\cell{$48.4$}{[42.6, 54.7]} &
$<10^{-4}$ \\
OBQA &
\cell{$48.4$}{[42.2, 54.7]} &
\cell{$25.0$}{[19.5, 30.5]} &
\cell{$41.4$}{[35.5, 47.3]} &
\cell{$45.7$}{[39.5, 51.6]} &
$<10^{-4}$ \\
QASC &
\cell{$28.9$}{[23.4, 34.4]} &
\cell{$10.2$}{[6.6, 13.7]} &
\cell{$27.7$}{[22.3, 33.2]} &
\cell{$29.3$}{[23.8, 34.8]} &
$<10^{-4}$ \\
LogiQA &
\cell{$22.7$}{[17.6, 27.7]} &
\cell{$25.8$}{[20.7, 31.2]} &
\cell{$15.2$}{[10.9, 19.9]} &
\cell{$19.5$}{[14.8, 24.2]} &
$0.0032$ \\
BoolQ &
\cell{$65.6$}{[59.8, 71.5]} &
\cell{$55.5$}{[49.2, 61.3]} &
\cell{$66.0$}{[60.2, 71.9]} &
\cell{$70.3$}{[64.8, 75.8]} &
$0.0029$ \\
\bottomrule
\end{tabular}
\end{table}

\subsection{Additional Results for \hyptag{H3}: results under generation-based evaluation}
\label{sec:app:h3:generation}

\paragraph{Prefill-Decode mismatch in Generation.}Table~\ref{tab:prefill-vs-decode-8tasks-kmatch} reproduces the H3 mismatch signature under a generation-based protocol.
We apply the same \emph{decode-only} intervention while swapping only the estimation distribution:
a shared basis estimated from decode states (Decode-shared) versus one estimated from prefill states (Prefill-shared),
with dimension matched ($k=129$) and a matched-energy random control (Random).
Across tasks, removing the \emph{decode-estimated} shared workspace causes large degradation
(e.g., GSM8K collapses to 0.0\%, StrategyQA drops from 43.8\% to 10.5\%, ARC-C from 49.6\% to 19.5\%, QASC from 28.9\% to 10.2\%),
while the \emph{prefill-estimated} basis is much weaker and often stays near baseline within confidence intervals.
The random control remains close to baseline, indicating the effect is not explained by removing arbitrary energy-matched subspaces.
Most differences are statistically significant (p-values in the last column), supporting that estimator--deployment mismatch persists beyond the forced-choice setting.

We note minor task-dependent exceptions (e.g., LogiQA), but they do not change the qualitative conclusion:
the strongest and most consistent causal impact at the decode locus arises only from \emph{decode-aligned} shared bases.
Together with Table~\ref{tab:h3:2x2_main}, these results reinforce the core claim that prefill-derived shared directions are generally not a substitute for decode-time interventions.

\subsection{H3: Prefill--Decode Mismatch and the Need for Decode-Aligned Estimation}
\label{sec:h3_prefill_decode_mismatch}

\paragraph{Motivation: estimator--deployment mismatch under KV-cached decoding.}
Under KV-cached inference, each next-token decision is produced by a cached forward pass with sequence length $=1$.
Consequently, the decode-time decision states follow a distribution $D_{\mathrm{decode}}$ that can differ from the
full-sequence prefill distribution $D_{\mathrm{prefill}}$.
This creates an estimator--deployment mismatch for many activation-space pipelines that estimate directions from prefill
states but intervene at decode time.
DecodeShare enforces an \emph{alignment principle}: all causal interventions act only on cached decode calls, and subspaces
are estimated from the same decode-time distribution on which we intervene.

\paragraph{$2\times 2$ estimation--intervention grid.}
To isolate the causal consequence of prefill--decode mismatch, we evaluate a $2\times 2$ grid that crosses
(i) the \emph{estimation distribution} (decode-last vs.\ prefill-last) and
(ii) the \emph{intervention locus} (decode-only vs.\ prefill-only).
Concretely, we estimate $Q^{(S)}_{\mathrm{decode}}$ from decode-last states and $Q^{(S)}_{\mathrm{prefill}}$ from prefill-last states
(at the same layer, tasks, and hyperparameters), and apply projection removal
$\tilde h = h - \alpha QQ^\top h$
either during cached decoding (seq-len$=1$) or only at prefill.
Fixing the intervention locus to decode-only turns an estimator swap (decode-est vs.\ prefill-est) into a direct test of
mismatch while keeping the intervention budget identical (Table~\ref{tab:prefill_decode_k126}).
A representative small-rank setting preserves the qualitative contrast, ruling out a dimension-driven artifact
(Table~\ref{tab:prefill_decode_k16}).

\paragraph{Geometric mismatch.}
Decode- and prefill-estimated shared subspaces are strongly misaligned even after rank matching.
For example, at layer $\ell=10$ with $k_{\mathrm{match}}=\min(k^{\mathrm{shared}}_{\mathrm{decode}},k^{\mathrm{shared}}_{\mathrm{prefill}})$,
the principal angles between the two shared subspaces are near-orthogonal, and the decode-estimated shared dimension is
substantially larger than the prefill-estimated one (Table~\ref{tab:h3:subspace_stats}).
This confirms a concrete prefill--decode mismatch: a prefill-derived basis is not a faithful proxy for the decode-time
workspace we causally probe.

\paragraph{Causal consequence at the decode locus.}
Under decode-only intervention, removing the \emph{decode-estimated} shared basis induces large and consistent accuracy
degradation across benchmarks, whereas swapping only the estimator distribution (prefill-est / decode-intervene) removes
this effect and behaves comparably to matched random controls (Table~\ref{tab:h3:2x2_main}).
Interventions applied only at prefill are negligible across all conditions, ruling out explanations that operate purely through
prompt-processing dynamics.
Together, these results support H3: \emph{to reproduce robust decode-time causal effects under matched budgets, estimation
must be aligned to the decode-time distribution}.

\paragraph{Replication under generation-based evaluation.}
To verify that the mismatch signature is not an artifact of forced-choice scoring, we replicate on generation-based evaluation Table~\ref{tab:prefill-vs-decode-8tasks-kmatch} and in a mixed-protocol setting Table ~\ref{tab:prefill_decode_k126}.
Across tasks, decode-estimated shared removal yields substantially larger degradations than prefill-estimated removal under
the same decode-only intervention and matched controls, indicating that estimator--deployment mismatch persists beyond the
decision-level setting.

\paragraph{Dimension matching and representative small-$k$ sanity check.}
To ensure the comparison is not driven by dimensionality, we report $k$-matched bases throughout (Appendix
Tables~\ref{tab:prefill_decode_k16}--\ref{tab:prefill_decode_k126}).
Moreover, a representative small-$k$ setting (e.g., $k_{\mathrm{match}}=16$) still exhibits the same qualitative mismatch on
generation-heavy benchmarks, indicating the effect is not explained by ``using more dimensions''.

\newcommand{\mcci}[2]{\makecell{#1\\{\scriptsize #2}}} 

\begin{table}[t]
\centering
\tiny
\caption{Same as Table~\ref{tab:prefill_decode_k126}, but with a representative small rank ($k_{\mathrm{match}}=16$) to rule out
dimension-driven artifacts.}
\label{tab:prefill_decode_k16}
\begin{tabular}{llrcccccc}
\toprule
Task & Protocol & $n$ & Baseline &
Decode-shared ($k{=}16$) &
Prefill-shared ($k{=}16$) &
Random ($k{=}16$) &
$\Delta(\mathrm{Decode{-}Prefill})$ & $p$ \\
\midrule
gsm8k & generation & 1319 &
\mcci{19.6}{[17.5, 21.8]} &
\mcci{3.6}{[2.6, 4.6]} &
\mcci{19.1}{[17.0, 21.2]} &
\mcci{19.7}{[17.5, 21.8]} &
\mcci{-15.5}{[-17.9, -13.3]} & 0.0001 \\
commonsenseqa & forced\_choice & 1221 &
\mcci{55.6}{[52.8, 58.4]} &
\mcci{36.4}{[33.7, 39.2]} &
\mcci{36.0}{[33.3, 38.7]} &
\mcci{55.4}{[52.7, 58.2]} &
\mcci{+0.4}{[-1.4, +2.2]} & 0.716 \\
strategyqa & forced\_choice & 687 &
\mcci{50.4}{[46.7, 54.0]} &
\mcci{52.0}{[48.2, 55.7]} &
\mcci{48.6}{[44.8, 52.4]} &
\mcci{50.5}{[46.9, 54.0]} &
\mcci{+3.3}{[+0.9, +5.8]} & 0.0089 \\
aqua & forced\_choice & 254 &
\mcci{27.2}{[21.7, 32.7]} &
\mcci{25.2}{[20.1, 30.3]} &
\mcci{25.2}{[20.1, 30.7]} &
\mcci{27.2}{[22.0, 32.7]} &
\mcci{+0.0}{[-3.5, +3.5]} & 1 \\
arc\_challenge & forced\_choice & 1172 &
\mcci{53.1}{[50.2, 55.9]} &
\mcci{33.7}{[31.1, 36.5]} &
\mcci{33.4}{[30.8, 36.2]} &
\mcci{53.0}{[50.2, 55.9]} &
\mcci{+0.3}{[-1.5, +2.1]} & 0.724 \\
openbookqa & forced\_choice & 500 &
\mcci{57.8}{[53.6, 62.2]} &
\mcci{36.8}{[32.6, 41.0]} &
\mcci{34.2}{[30.0, 38.6]} &
\mcci{58.2}{[53.8, 62.4]} &
\mcci{+2.6}{[+0.0, +5.2]} & 0.0372 \\
qasc & forced\_choice & 926 &
\mcci{52.6}{[49.4, 55.8]} &
\mcci{32.9}{[29.9, 36.0]} &
\mcci{36.1}{[32.9, 39.2]} &
\mcci{52.2}{[48.9, 55.3]} &
\mcci{-3.1}{[-5.4, -0.9]} & 0.0047 \\
logiqa & forced\_choice & 651 &
\mcci{26.0}{[22.6, 29.3]} &
\mcci{16.7}{[14.0, 19.7]} &
\mcci{17.2}{[14.3, 20.1]} &
\mcci{25.8}{[22.6, 29.0]} &
\mcci{-0.5}{[-1.8, +0.8]} & 0.644 \\
boolq & forced\_choice & 2048 &
\mcci{68.0}{[65.9, 70.0]} &
\mcci{56.2}{[54.1, 58.3]} &
\mcci{63.5}{[61.3, 65.5]} &
\mcci{68.0}{[66.0, 70.0]} &
\mcci{-7.2}{[-9.5, -4.9]} & 0.0001 \\
\bottomrule
\end{tabular}
\end{table}

\begin{table}[t]
\centering
\tiny
\caption{Prefill--vs.--Decode estimator swap under a fixed \emph{decode-only} intervention locus.
All bases are dimension-matched ($k_{\mathrm{match}}=126$). Forced-choice: warmup $W=0$ and prefix\_mode=auto.}
\label{tab:prefill_decode_k126}
\begin{tabular}{llrcccccc}
\toprule
Task & Protocol & $n$ & Baseline &
Decode-shared ($k{=}126$) &
Prefill-shared ($k{=}126$) &
Random ($k{=}126$) &
$\Delta(\mathrm{Decode{-}Prefill})$ & $p$ \\
\midrule
gsm8k & generation & 1319 &
\mcci{19.6}{[17.5, 21.8]} &
\mcci{0.3}{[0.1, 0.6]} &
\mcci{19.1}{[17.1, 21.3]} &
\mcci{19.4}{[17.3, 21.6]} &
\mcci{-18.8}{[-21.0, -16.8]} & 0.0001 \\
commonsenseqa & forced\_choice & 1221 &
\mcci{55.6}{[52.8, 58.4]} &
\mcci{23.8}{[21.5, 26.1]} &
\mcci{20.4}{[18.1, 22.7]} &
\mcci{54.5}{[51.8, 57.3]} &
\mcci{+3.4}{[+2.0, +4.8]} & 0.0001 \\
strategyqa & forced\_choice & 687 &
\mcci{50.4}{[46.7, 54.0]} &
\mcci{52.8}{[49.1, 56.6]} &
\mcci{48.5}{[44.8, 52.3]} &
\mcci{50.4}{[46.6, 53.9]} &
\mcci{+4.4}{[+0.9, +7.7]} & 0.0131 \\
aqua & forced\_choice & 254 &
\mcci{27.2}{[21.7, 32.7]} &
\mcci{20.1}{[15.4, 25.2]} &
\mcci{25.6}{[20.5, 31.1]} &
\mcci{27.6}{[22.0, 33.1]} &
\mcci{-5.5}{[-9.8, -1.6]} & 0.0147 \\
arc\_challenge & forced\_choice & 1172 &
\mcci{53.1}{[50.2, 55.9]} &
\mcci{27.9}{[25.3, 30.5]} &
\mcci{23.0}{[20.6, 25.4]} &
\mcci{52.0}{[49.2, 54.9]} &
\mcci{+4.9}{[+3.0, +6.8]} & 0.0001 \\
openbookqa & forced\_choice & 500 &
\mcci{57.8}{[53.6, 62.2]} &
\mcci{26.8}{[23.0, 30.8]} &
\mcci{27.8}{[23.8, 31.8]} &
\mcci{58.8}{[54.4, 62.8]} &
\mcci{-1.0}{[-3.4, +1.2]} & 0.501 \\
qasc & forced\_choice & 926 &
\mcci{52.6}{[49.4, 55.8]} &
\mcci{18.5}{[16.0, 21.0]} &
\mcci{20.7}{[18.3, 23.3]} &
\mcci{51.8}{[48.5, 55.0]} &
\mcci{-2.3}{[-4.3, -0.1]} & 0.0489 \\
logiqa & forced\_choice & 651 &
\mcci{26.0}{[22.6, 29.3]} &
\mcci{18.0}{[15.2, 21.0]} &
\mcci{16.7}{[13.8, 19.7]} &
\mcci{25.2}{[22.0, 28.4]} &
\mcci{+1.2}{[-0.3, +2.8]} & 0.188 \\
boolq & forced\_choice & 2048 &
\mcci{68.0}{[65.9, 70.0]} &
\mcci{50.8}{[48.6, 52.9]} &
\mcci{62.6}{[60.4, 64.6]} &
\mcci{71.2}{[69.3, 73.2]} &
\mcci{-11.8}{[-14.9, -8.8]} & 0.0001 \\
\bottomrule
\end{tabular}
\end{table}


\begin{figure}[t]
\centering
\begin{tcolorbox}[
  colback=white,
  colframe=black,
  colbacktitle=black!15,
  coltitle=black,
  title=\textbf{Representative Example: Projection removes pirate style (sample, seed=3)},
  boxrule=0.8pt,
  arc=1.2mm,
  left=1.2mm,right=1.2mm,top=1.0mm,bottom=1.0mm
]
\small
\textbf{Prompt:} Please answer the following question:\newline
\hspace*{1em}Explain what climate change means and why it matters.\newline
\textbf{Threshold:} \texttt{pirate\_hits}$\ge 2$;\quad \textbf{Injection:} start\_step=1, first\_n=24;\quad $\alpha=45$.

\vspace{6pt}\hrule\vspace{6pt}

\textbf{\texttt{no\_steer}} (hits=0, success=0)
\begin{quote}\small
Of course, I'd be happy to help! Climate change refers to the long-term warming of the planet due to an increase in the average global temperature. This warming is primarily caused by human activities that release greenhouse gases, such as carbon dioxide and methane, into the atmosphere. These gases trap heat and cause the Earth's temperature to rise, leading to a range of consequences.\newline
Why does climate change matter? Here are some reasons:\newline
(1) Rising sea levels \dots\ (2) Extreme weather events \dots
\end{quote}

\textbf{\texttt{v\_orig\_a45}} (hits=3, success=1)
\begin{quote}\small
Of course, matey! Climate change be the shifts in yer climate, arrr, caused by the warmin' o' the planet due to the release o' greenhouse gases into the atmosphere. These gases trap heat and cause the Earth's temperature to rise, which leads to changes in weather patterns, sea levels, and ecosystems.\newline
Why does it matter, ye ask? Well, me hearty, here be some reasons:\newline
(1) Rising sea levels \dots\ (2) Extreme weather \dots
\end{quote}

\textbf{\texttt{v\_fixed\_k128\_a45}} (hits=0, success=0)
\begin{quote}\small
Of course, I'd be happy to help! Climate change refers to the long-term warming of the planet due to an increase in the average global temperature \dots\newline
Why does climate change matter? Here are some reasons:\newline
(1) Rising sea levels \dots\ (2) Extreme weather events \dots
\end{quote}

\end{tcolorbox}
\caption{\textbf{Same prompt, same decoding, different vectors.}
$v_{\mathrm{orig}}$ triggers pirate lexicon (success), while projecting out the shared PCA span at $k\!=\!128$ suppresses the style (failure), consistent with $v$ being entangled with the shared decode subspace.}
\label{fig:rep_example_projection}
\end{figure}

\begin{figure}[t]
\centering
\begin{tcolorbox}[
  colback=white,
  colframe=black,
  colbacktitle=black!15,
  coltitle=black,
  title=\textbf{Flip Example (forced-choice; ex\_id=aqua-test-9)},
  boxrule=0.8pt,
  arc=1.2mm,
  left=1.2mm,right=1.2mm,top=1.0mm,bottom=1.0mm
]
\small
\textbf{Gold:} \texttt{B}\quad
\textbf{Intervention:} shared-removal at layer $\ell{=}10$ (decode-only).

\vspace{4pt}
\textbf{Prompt (truncated):}
\begin{quote}\ttfamily\small
Question: A newspaper costs \$4 on Sunday ... how many newspapers does it buy on Monday?
Choices: A)45 B)15 C)60 D)30 E)75
\end{quote}
\normalfont

\vspace{6pt}\hrule\vspace{6pt}

\textbf{\texttt{baseline}}: \texttt{B} (correct=true, margin=+1.08)\\
\textbf{\texttt{shared-removed}}: \texttt{A} (correct=false, margin=-3.30) \quad \emph{(flip)}
\end{tcolorbox}
\caption{\textbf{Flip case study (forced-choice).}
The baseline forced-choice prediction matches the gold answer.
Removing the shared decode subspace at layer $\ell=10$ during cached decoding flips the predicted choice demonstrating that the shared decode pathway can be \emph{causally necessary}.}
 
\end{figure}

\begin{figure}[t]
\centering
\begin{tcolorbox}[
  colback=white,
  colframe=black,
  colbacktitle=black!15,
  coltitle=black,
  title=\textbf{Patchback example (boolq; ex\_id=boolq-validation-3)},
  boxrule=0.8pt,
  arc=1.2mm,
  left=1.2mm,right=1.2mm,top=1.0mm,bottom=1.0mm
]
\small
\textbf{Dataset}: boolq\\
\textbf{Gold}: \texttt{B}\\
\textbf{Prompt} (truncated):
\begin{quote}\ttfamily
Question: Passage: Open carry is also legal throughout North Carolina. In the town of Chapel Hill, open carry is restricted to guns of a certain minimum size ... \textbackslash nQuestion do you need a permit to open carry in nc Choices: A) Yes B) No Reason step by step. At the end, write exactly one line: "Final answer: <A/B>".
\end{quote}
\vspace{6pt}\hrule\vspace{6pt}
\textbf{\texttt{baseline}}: \texttt{B} (correct=true, margin=+5.19)\\
\textbf{\texttt{patchback (W=\{0,1\})}}: \texttt{B} (correct=true, margin=+5.19)\\
\textbf{\texttt{rand subspace}}: \texttt{A} (correct=false, margin=-3.66)\\
\textbf{\texttt{time-shuf donor}}: \texttt{B} (correct=true, margin=+2.84)\\
\textbf{\texttt{non-shared patch}}: \texttt{A} (correct=false, margin=-1.96)\\
\end{tcolorbox}
\caption{\textbf{Patchback case study (forced-choice).}
Patching back only the removed \emph{shared}
component over a narrow decode window $W=\{0,1\}$ restores the baseline answer, while energy/dimension-matched
controls do not.}

\end{figure}

\section{Additional Experiments}

\subsection{$\alpha$-Sweep in Patch back Experiments}

\paragraph{$\alpha$-sweep validates ablation strength as a continuous knob and transfer donors reduce answer-copying concerns.} Table~\ref{tab:aqua_flip_supp} shows that increasing $\alpha$ continuously increases flip-rate and collapses margins, consistently across seeds.
On the same AQuA flip-set, transfer donors (same-task or cross-task) achieve rescue rates comparable to self-donor patching,
supporting the interpretation that patchback restores a shared decision channel rather than copying sample-specific answers.


\begin{table}[t]
\centering
\caption{\textbf{AQuA flip-set supplementary experiments.}
\textbf{(A) $\alpha$-sweep on a fixed flip-set defined at $\alpha{=}1$.}
We report flip\_rate (still flipped on the flip-set), ablated accuracy on the flip-set, and mean $\Delta$m (ablated vs. baseline).
\textbf{(B) Transfer-donor patching on the same flip-set:} donors from other examples/tasks yield comparable rescue to self-donor.}
\label{tab:aqua_flip_supp}
\small
\setlength{\tabcolsep}{4pt}
\begin{tabular}{c|ccc|ccc}
\toprule
 & \multicolumn{3}{c|}{Seed 123 ($n{=}42$)} & \multicolumn{3}{c}{Seed 456 ($n{=}43$)} \\
$\alpha$ & flip\_rate(\%) & ablt\_acc(\%) & mean $\Delta$m & flip\_rate(\%) & ablt\_acc(\%) & mean $\Delta$m \\
\midrule
0.00 & 0.0 & 100.0 & 0.000 & 0.0 & 100.0 & 0.000 \\
0.50 & 28.6 & 71.4 & -0.318 & 32.6 & 67.4 & -0.388 \\
0.75 & 71.4 & 28.6 & -1.475 & 88.4 & 11.6 & -1.920 \\
1.00 & 100.0 & 0.0 & -3.695 & 100.0 & 0.0 & -3.899 \\
\midrule
\multicolumn{7}{l}{\textbf{(B) Transfer donor patching on the flip-set (rescue\% / mean $\Delta$m)}} \\
Setting & \multicolumn{2}{c}{Seed} & \multicolumn{2}{c}{Self-donor} & \multicolumn{2}{c}{Transfer-donor} \\
\midrule
Same-task donors & \multicolumn{2}{c}{123} & \multicolumn{2}{c}{73.8 / 3.31} & \multicolumn{2}{c}{76.2 / 3.29} \\
Cross-task donors (MC, baseline-correct) & \multicolumn{2}{c}{123} & \multicolumn{2}{c}{73.8 / 3.31} & \multicolumn{2}{c}{78.6 / 3.39} \\
Cross-task donors (MC, baseline-correct) & \multicolumn{2}{c}{456} & \multicolumn{2}{c}{79.1 / 3.57} & \multicolumn{2}{c}{81.4 / 3.64} \\
\bottomrule
\end{tabular}
\end{table}

\subsection{Additional Downstream Utility Experiments}

\paragraph{Appendix C.2.1: $\beta$ as a continuous \emph{intervention-strength} knob (utility vs.\ worst-case stability).}
Table~\ref{tab:multibench_pertemplate} reports a three-point sweep $\beta\in\{0,0.5,1\}$ under true KV-cache decoding.
Here $\beta$ \emph{does not modify the model’s shared workspace}; it only scales how much the \emph{steering direction} is prevented from
injecting components along the fixed decode-time shared basis $B$ (estimated once from neutral prompts at the same layer as in our workspace analysis).
Intermediate $\beta{=}0.5$ yields intermediate behavior across tasks, consistent with
$v_\beta = v - \beta BB^\top v$ acting as a smooth control on \emph{shared-subspace interference}.
For example, on SST-2 template T0, mean increases monotonically
$0.0275\!\rightarrow\!0.0314\!\rightarrow\!0.0339$ while anti decreases
$0.3555\!\rightarrow\!0.3086\!\rightarrow\!0.3008$.
On RTE template T0, mean slightly increases
$0.0104\!\rightarrow\!0.0112\!\rightarrow\!0.0116$ and anti improves
$0.4844\!\rightarrow\!0.4766\!\rightarrow\!0.4648$.
Aggregating per-template effects, Table~\ref{tab:multibench_delta} shows that full projection ($\beta{=}1$) trades small changes in average efficacy
($\Delta\mu=-0.0071$ on BoolQ; $-0.0011$ on RTE; $+0.0055$ on SST-2) for stability improvements in a task-dependent way
(e.g., fewer worst-case failures on RTE with $\Delta$anti$_{\text{worst}}=-0.0195$, and reduced template variability with $\Delta\sigma_{\text{tmpl}}=-0.0010$).
Thus, $\beta$ provides an explicit, offline-computed knob for balancing steering utility against worst-case template brittleness by attenuating
interference with the decode-time shared pathway (rather than contradicting its causal relevance).

\begin{table}[t]
\centering
\small
\caption{\textbf{Effect of fully removing the shared component.}
$\Delta$ values compare decode\_fixed ($\beta{=}1$) vs decode\_est ($\beta{=}0$).
We emphasize changes in worst-case signed behavior ($\Delta$worst), and report $\Delta$anti$_{\text{worst}}$ as a complementary diagnostic.}
\label{tab:multibench_delta}
\setlength{\tabcolsep}{6pt}
\begin{tabular}{lrrrr}
\toprule
Task & $\Delta \mu$ & $\Delta \sigma_{\text{tmpl}}$ & $\Delta$worst & $\Delta$anti$_{\text{worst}}$ \\
\midrule
BoolQ & -0.0071 & +0.0015 & -0.0090 & 0.0000 \\
RTE   & -0.0011 & -0.0010 & -0.0015 & -0.0195 \\
SST-2 & +0.0055 & +0.0008 & +0.0035 & +0.0313 \\
\bottomrule
\end{tabular}
\end{table}

\paragraph{Appendix C.2.2: Per-template breakdown pinpoints the origin of worst-case gains.}
Table~\ref{tab:multibench_pertemplate} localizes worst-case behavior to specific brittle templates, showing that improvements are not driven by averaging artifacts.
On SST-2, template T2 exhibits a sign flip at $\beta{=}0$ (mean $=-7\!\times\!10^{-4}$) that disappears as shared-overlap is removed,
becoming positive at $\beta{=}1$ (mean $=2.8\!\times\!10^{-3}$), which directly accounts for the improved worst-case signed behavior in
Table~\ref{tab:multibench_delta} ($\Delta$worst $=+0.0035$).
RTE shows consistent reductions in anti on failure-prone templates (e.g., T0: $0.4844\!\rightarrow\!0.4648$) and a modest decrease in template variability
(Table~\ref{tab:multibench_delta}, $\Delta\sigma_{\text{tmpl}}=-0.0010$).
BoolQ illustrates a mild efficacy--stability tradeoff: projection slightly reduces mean and worst (Table~\ref{tab:multibench_delta}),
while anti improves for some templates (e.g., T1: $0.2969\!\rightarrow\!0.2656$), consistent with task-dependent interaction between the steering signal
and the shared decode-time pathway.

\begin{table*}[t]
\centering
\caption{\textbf{Per-template breakdown (final $\lambda$).}
Per-template mean correct-signed margin shift (mean) and per-template failure rate (anti) for $\beta\in\{0,0.5,1\}$.}
\label{tab:multibench_pertemplate}
\scriptsize
\setlength{\tabcolsep}{3.2pt}
\renewcommand{\arraystretch}{0.95}
\setlength{\abovecaptionskip}{2pt}
\setlength{\belowcaptionskip}{0pt}

\begin{tabular}{@{}l c *{2}{r} *{2}{r} *{2}{r}@{}}
\toprule
& & \multicolumn{2}{c}{$\beta{=}0$} & \multicolumn{2}{c}{$\beta{=}0.5$} & \multicolumn{2}{c}{$\beta{=}1$} \\
\cmidrule(lr){3-4}\cmidrule(lr){5-6}\cmidrule(lr){7-8}
Task & Tmpl. & mean$\uparrow$ & anti$\downarrow$ & mean$\uparrow$ & anti$\downarrow$ & mean$\uparrow$ & anti$\downarrow$ \\
\midrule
\multirow{3}{*}{BoolQ} & T0 & 0.0401 & 0.3633 & 0.0372 & 0.3594 & 0.0323 & 0.3633 \\
& T1 & 0.0393 & 0.2969 & 0.0380 & 0.2812 & 0.0347 & 0.2656 \\
& T2 & 0.0357 & 0.3594 & 0.0321 & 0.3594 & 0.0267 & 0.3516 \\
\midrule
\multirow{3}{*}{RTE} & T0 & 0.0104 & 0.4844 & 0.0112 & 0.4766 & 0.0116 & 0.4648 \\
& T1 & 0.0307 & 0.3320 & 0.0301 & 0.3242 & 0.0281 & 0.3203 \\
& T2 & 0.0107 & 0.3789 & 0.0101 & 0.3828 & 0.0089 & 0.3750 \\
\midrule
\multirow{3}{*}{SST-2} & T0 & 0.0275 & 0.3555 & 0.0314 & 0.3086 & 0.0339 & 0.3008 \\
& T1 & 0.0030 & 0.4297 & 0.0063 & 0.4492 & 0.0095 & 0.4648 \\
& T2 & -0.0007 & 0.4336 & 0.0011 & 0.4492 & 0.0028 & 0.4219 \\
\bottomrule
\end{tabular}
\end{table*}

\begin{table}[t]
\centering
\caption{\textbf{Candidate calibration (top pairs).}
Forced-choice candidate pairs selected by maximizing baseline forced-choice accuracy on a balanced subset. All pairs are single-token.}
\label{tab:cand_calib}
\footnotesize
\setlength{\tabcolsep}{4.5pt}
\renewcommand{\arraystretch}{0.95}
\setlength{\abovecaptionskip}{2pt}
\setlength{\belowcaptionskip}{0pt}

\begin{tabular}{@{}l l r@{}}
\toprule
Task & Candidate pair & Baseline acc \\
\midrule
BoolQ & Yes/No & 0.583 \\
      & True/False & 0.552 \\
\midrule
RTE   & True/False & 0.615 \\
      & Yes/No & 0.562 \\
\midrule
SST-2 & Good/Bad & 0.688 \\
      & Yes/No & 0.599 \\
      & True/False & 0.599 \\
\bottomrule
\end{tabular}
\end{table}


\paragraph{Appendix C.2.3: Candidate calibration reduces dependence on label tokenization.}
To avoid confounding steering effects with poorly calibrated answer tokens, we choose forced-choice candidate pairs by maximizing baseline forced-choice accuracy
on a balanced calibration subset.
Table~\ref{tab:cand_calib} lists the selected top pairs (all single-token), which match standard labelings (e.g., Yes/No; True/False; Good/Bad) and yield
reasonable baseline accuracies (e.g., BoolQ: 0.583 with Yes/No; RTE: 0.615 with True/False; SST-2: 0.688 with Good/Bad).
This calibration limits sensitivity to arbitrary token choices and supports interpreting subsequent robustness trends as properties of the steering intervention
(and its overlap with $B$), rather than artifacts of candidate selection.

\subsection{Additional Downstream Style Metrics and Shared-Subspace Diagnostics}

\paragraph{Appendix C.3.1: Pirate shareness sanity check, Table~\ref{tab:app_sharedness_full}.}\label{appendix:A4}
This table verifies that the projection operator behaves as intended and that $v$ is substantially aligned with the decode-trace PCA span. As $k$ increases from 16 to 128, shareness$(v)=\|B_k^\top v\|/\|v\|$ rises from 0.465 to 0.545, indicating that a large fraction of the steering direction lies in the ``shared'' subspace captured by high-variance decode activations. In contrast, shareness$(v_{\mathrm{fixed},k})$ remains $\approx 10^{-7}$ for all $k$, confirming that $v_{\mathrm{fixed},k}=v-B_k(B_k^\top v)$ is numerically orthogonal to the span up to floating-point tolerance. Together, these results establish that the experiment is not merely showing behavioral changes under arbitrary edits: the intervention explicitly removes the component of $v$ contained in a well-defined shared subspace, and does so accurately.


\begin{table}[t]
\centering
\caption{\textbf{Appendix: Shareness sanity check.}
As $k$ increases, the PCA span captures more of $v$ (sharedness rises), while $v_{\mathrm{fixed},k}$ is numerically orthogonal to the span ($\approx 10^{-7}$).}
\label{tab:app_sharedness_full}
\small
\begin{tabular}{rcc}
\toprule
$k$ & sharedness$(v)=\frac{\|B_k^\top v\|}{\|v\|}$ & sharedness$(v_{\mathrm{fixed},k})=\frac{\|B_k^\top v_{\mathrm{fixed},k}\|}{\|v_{\mathrm{fixed},k}\|}$ \\
\midrule
16  & 0.4651 & $9.9\times 10^{-8}$ \\
32  & 0.4864 & $1.0\times 10^{-7}$ \\
64  & 0.5166 & $1.2\times 10^{-7}$ \\
128 & 0.5455 & $1.4\times 10^{-7}$ \\
\bottomrule
\end{tabular}
\end{table}

\paragraph{Appendix  C.3.2: Pirate-hit intensity, Table~\ref{tab:app_hits}.}\label{appendix:A5}
While the main table reports binary success at a threshold of \texttt{pirate\_hits}$\ge 2$, Appendix Table~\ref{tab:app_hits} provides a more graded view of the same phenomenon by summarizing the mean and worst-case \texttt{pirate\_hits} across templates. The hit statistics reinforce the tradeoff observed in success rates: moderate projection (e.g., $k=16$--64) increases the intensity of lexical pirate markers relative to $v_{\mathrm{orig}}$ under both greedy and sampling, whereas stronger projection (k=128) substantially reduces hit intensity, particularly under greedy decoding (near-zero worst-case). This corroborates an entanglement interpretation: removing a small part of the shared space can ``denoise'' the steering direction so pirate tokens appear more frequently when they appear at all, but removing a larger shared span begins to excise style-relevant components and suppresses lexical realization, with the remaining signal becoming more brittle and decoding-dependent.

\begin{table}[t]
\centering
\caption{\textbf{Appendix: Pirate-hit intensity.}
Mean$\pm$std and worst-case (minimum) across templates for \texttt{pirate\_hits}. This complements Table~\ref{tab:main_sharedspace} by showing how strongly the lexicon is triggered when it triggers at all.}
\label{tab:app_hits}
\small
\begin{tabularx}{\linewidth}{l c X X}
\toprule
\textbf{Method} & $k$ & \textbf{Greedy hits} & \textbf{Sample hits (seeds 1--3)} \\
\midrule
no\_steer           & --  & $0.000\pm0.000\ (0.000)$ & $0.000\pm0.000\ (0.000)$ \\
rand0\_a45          & --  & $0.000\pm0.000\ (0.000)$ & $0.000\pm0.000\ (0.000)$ \\
v\_orig\_a45        & --  & $0.090\pm0.136\ (0.000)$ & $0.127\pm0.063\ (0.050)$ \\
\midrule
v\_fixed\_k16\_a45  & 16  & $0.400\pm0.130\ (0.150)$ & $0.360\pm0.034\ (0.317)$ \\
v\_fixed\_k32\_a45  & 32  & $0.210\pm0.116\ (0.050)$ & $0.340\pm0.064\ (0.267)$ \\
v\_fixed\_k64\_a45  & 64  & $0.250\pm0.145\ (0.050)$ & $0.413\pm0.062\ (0.317)$ \\
v\_fixed\_k128\_a45 & 128 & $0.040\pm0.037\ (0.000)$ & $0.213\pm0.097\ (0.083)$ \\
\bottomrule
\end{tabularx}
\end{table}



\newcommand{\twocell}[2]{\shortstack{#1\\{\tiny #2}}}

\begin{table}[t]
\centering
\scriptsize
\caption{\textbf{All-tasks shared basis: decode-only ablation (generation).}
\textsc{Shared(full)} removes the decode-estimated shared subspace; \textsc{Rand(full)} is a dimension/energy-matched non-shared control
(see \S\ref{sec:method:intervention}).
Entries report mean accuracy with 95\% CIs; $\Delta$ is \textsc{Shared(full)}$-$Baseline with paired test $p$. A full sweep across models and layers is provided in the Appendix.} 
\label{tab:h2-alltasks-gen}
\setlength{\tabcolsep}{3pt}
\renewcommand{\arraystretch}{1.12}
\begin{tabular}{l c c c c}
\toprule
\multicolumn{5}{c}{\textbf{All-tasks basis (Mode: All; Generation)}} \\
\midrule
Task & Baseline & Shared (full) & Rand (full) & $\Delta$(Shared$-$Baseline) \\
\midrule
ARC-C
& \twocell{26.2}{[21.1, 31.6]}
& \twocell{7.8}{[4.7, 11.3]}
& \twocell{21.5}{[16.4, 26.6]}
& \twocell{-18.4}{[-23.8, -12.9], $p<0.001$} \\
BoolQ
& \twocell{31.6}{[26.2, 37.5]}
& \twocell{18.0}{[13.3, 22.7]}
& \twocell{30.9}{[25.4, 36.3]}
& \twocell{-13.7}{[-19.9, -7.8], $p<0.001$} \\
CSQA
& \twocell{16.8}{[12.5, 21.5]}
& \twocell{8.2}{[5.1, 11.7]}
& \twocell{19.1}{[14.5, 24.2]}
& \twocell{-8.6}{[-13.7, -3.5], $p=0.001$} \\
GSM8K
& \twocell{16.4}{[12.1, 21.1]}
& \twocell{0.0}{[0.0, 0.0]}
& \twocell{15.6}{[11.3, 20.3]}
& \twocell{-16.4}{[-21.1, -12.1], $p<0.001$} \\
LogiQA
& \twocell{13.3}{[9.4, 17.6]}
& \twocell{5.9}{[3.1, 9.0]}
& \twocell{8.2}{[5.1, 11.7]}
& \twocell{-7.4}{[-12.1, -2.7], $p=0.002$} \\
OBQA
& \twocell{25.4}{[20.3, 30.5]}
& \twocell{6.2}{[3.5, 9.4]}
& \twocell{27.0}{[21.5, 32.4]}
& \twocell{-19.1}{[-24.6, -13.7], $p<0.001$} \\
PIQA
& \twocell{39.8}{[34.0, 46.1]}
& \twocell{19.1}{[14.5, 24.2]}
& \twocell{41.4}{[35.5, 47.7]}
& \twocell{-20.7}{[-27.3, -14.1], $p<0.001$} \\
QASC
& \twocell{18.0}{[13.7, 22.7]}
& \twocell{6.2}{[3.5, 9.4]}
& \twocell{19.5}{[14.8, 24.6]}
& \twocell{-11.7}{[-17.2, -6.2], $p<0.001$} \\
StratQA
& \twocell{45.7}{[39.5, 52.0]}
& \twocell{10.9}{[7.4, 14.8]}
& \twocell{43.4}{[37.5, 49.2]}
& \twocell{-34.8}{[-41.4, -28.1], $p<0.001$} \\
\bottomrule
\end{tabular}
\end{table}

\begin{table}[t]
\centering
\scriptsize
\setlength{\tabcolsep}{3.5pt}
\renewcommand{\arraystretch}{1.05}
\caption{\textbf{LOTO results (two panels).}
Cells report mean (top) and 95\% CI (bottom). $p$ tests Shared(full) vs.\ Baseline, layer $\ell=10$. A full sweep across models and layers is provided in the Appendix.} 
\label{tab:loto-combined}
\begin{tabular}{l r c c c r}
\toprule
\multicolumn{6}{c}{\textbf{Leave-one-task-out (LOTO) results}} \\
\midrule

\multicolumn{6}{c}{\textbf{Part A: Subspace Intervention Experiment (Generation)}} \\
\midrule
Task & $n$ & Baseline & Shared(full) & Rand(full) & $p$-value \\
\midrule
AQuA     &  254 & \twocell{17.7}{[13.4, 22.4]} & \twocell{15.4}{[11.0, 19.7]} & \twocell{19.7}{[15.0, 24.8]} & 0.483 \\
BoolQ    & 2048 & \twocell{32.2}{[30.2, 34.2]} & \twocell{21.1}{[19.3, 22.9]} & \twocell{32.2}{[30.1, 34.2]} & $<0.001$ \\
CSQA     & 1221 & \twocell{21.3}{[19.0, 23.6]} & \twocell{10.9}{[9.2, 12.6]}  & \twocell{21.5}{[19.2, 23.8]} & $<0.001$ \\
GSM8K    & 1319 & \twocell{15.7}{[13.8, 17.7]} & \twocell{0.1}{[0.0, 0.2]}    & \twocell{13.3}{[11.6, 15.2]} & $<0.001$ \\
OBQA     &  500 & \twocell{26.2}{[22.4, 30.0]} & \twocell{13.2}{[10.2, 16.4]} & \twocell{25.4}{[21.6, 29.4]} & $<0.001$ \\
PIQA     & 1838 & \twocell{37.4}{[35.3, 39.7]} & \twocell{31.4}{[29.3, 33.6]} & \twocell{36.7}{[34.5, 38.9]} & $<0.001$ \\
QASC     &  926 & \twocell{20.2}{[17.6, 22.9]} & \twocell{8.9}{[7.1, 10.7]}   & \twocell{20.4}{[17.8, 23.0]} & $<0.001$ \\
StratQA  &  687 & \twocell{48.2}{[44.4, 51.7]} & \twocell{20.2}{[17.3, 23.3]} & \twocell{47.3}{[43.7, 50.9]} & $<0.001$ \\
\addlinespace[2pt]
\textbf{Avg (8 tasks)} & --- &
27.4 &
15.1 &
27.1 &
--- \\
\midrule

\multicolumn{6}{c}{\textbf{Part B: Subspace Intervention Experiment (Forced-Choice)}} \\
\midrule
Task & $n$ & Baseline & Shared(full) & Rand(full) & $p$-value \\
\midrule
AQuA     &  254 & \twocell{24.0}{[18.9, 29.5]} & \twocell{17.3}{[12.6, 22.0]} & \twocell{22.0}{[16.9, 27.2]} & 0.0547 \\
ARC-C    & 1172 & \twocell{50.9}{[48.1, 53.8]} & \twocell{40.4}{[37.5, 43.2]} & \twocell{50.6}{[47.8, 53.4]} & 0.0001 \\
CSQA     & 1221 & \twocell{54.1}{[51.3, 56.8]} & \twocell{50.0}{[47.3, 52.9]} & \twocell{54.5}{[51.6, 57.2]} & 0.0043 \\
GSM8K    & 1319 & \twocell{4.9}{[3.8, 6.0]}    & \twocell{2.3}{[1.5, 3.1]}    & \twocell{4.7}{[3.6, 5.8]}    & 0.0001 \\
LogiQA   &  651 & \twocell{32.7}{[29.0, 36.4]} & \twocell{26.9}{[23.5, 30.3]} & \twocell{33.2}{[29.6, 36.9]} & 0.0147 \\
OBQA     &  500 & \twocell{50.2}{[45.8, 54.6]} & \twocell{41.6}{[37.4, 45.8]} & \twocell{52.4}{[48.0, 56.8]} & 0.0005 \\
QASC     &  926 & \twocell{48.5}{[45.2, 51.7]} & \twocell{40.7}{[37.6, 43.8]} & \twocell{49.0}{[45.8, 52.2]} & 0.0001 \\
StratQA  &  687 & \twocell{55.5}{[51.7, 59.1]} & \twocell{52.3}{[48.5, 55.9]} & \twocell{56.3}{[52.5, 60.0]} & 0.0182 \\
\bottomrule
\end{tabular}
\end{table}

\begin{table}[t]
\centering
\small
\setlength{\tabcolsep}{6pt}
\renewcommand{\arraystretch}{1.12}
\caption{\textbf{Staged Generation Evaluation.} (meta-llama/Llama-2-7b-chat-hf, fp32; decode-only; layer=10; $\tau{=}0.001$; mode=all). Values are means over 8 tasks. \emph{Full} denotes standard end-to-end generation under the intervention;
\emph{Staged} constrains the final-answer format to reduce format/extraction confounds.
Acc/Extr/EOS are in \%, and Len is in tokens.}
\label{tab:disturb_cot_overview_structured}

\begin{tabular}{@{}l c @{\hspace{8pt}} cc @{\hspace{8pt}} cc@{}}
\toprule
& \textbf{Baseline}
& \multicolumn{2}{c}{\textbf{Shared removal}}
& \multicolumn{2}{c}{\textbf{Random control}} \\
\cmidrule(lr){2-2}\cmidrule(lr){3-4}\cmidrule(lr){5-6}
\textbf{Metric} & \textbf{Value} & \textbf{Full} & \textbf{Staged} & \textbf{Full} & \textbf{Staged} \\
\midrule
Acc (\%)   & 27.4 & 15.1  & 16.4  & 27.1 & 26.8 \\
Extr (\%)  & 70.5 & 51.5  & 52.2  & 70.0 & 70.1 \\
EOS (\%)   & 90.8 & 45.3  & 51.0  & 92.4 & 91.9 \\
Len (tok.) & 92.5 & 144.0 & 139.2 & 86.7 & 87.1 \\
\bottomrule
\end{tabular}
\end{table}

\newcommand{\msp}[3]{$#1\pm#2$\,(\,#3\,)}

\begin{table*}
\centering
\setlength{\tabcolsep}{5pt}
\caption{\textbf{Shared-space sweep + steering efficacy (Llama-2-7B-Chat).}
We estimate a pirate style direction $v$ at layer 28 (KV-aligned decode, $v_{\text{decode\_steps}}=16$, $v_n=16$, $\alpha=45$, inject window: start\_step=1, first\_n=24) and construct a PCA basis $B_k$ from decode hidden-state traces of no-steer generations.
\emph{Sharedness} is $\lVert B_k^\top v\rVert/\lVert v\rVert$; $v_{\mathrm{fixed},k}=v-B_k(B_k^\top v)$ has $\lVert B_k^\top v_{\mathrm{fixed},k}\rVert/\lVert v_{\mathrm{fixed},k}\rVert\approx 0$ for all $k$ (Appendix Table~\ref{tab:app_sharedness_full}).
Success is $\mathbb{1}[\texttt{pirate\_hits}\ge 2]$.
Numbers are \textbf{per-template} mean$\pm$std, with worst-case (minimum) across templates in parentheses.}
\label{tab:main_sharedspace}
\small
\begin{tabular}{l c c c c}
\toprule
\textbf{Method} & $k$ & \makecell{\textbf{Sharedness}\\\textbf{($v$)}} &
\makecell{\textbf{Greedy}\\\textbf{success}} &
\makecell{\textbf{Sample success}\\\textbf{(seeds 1--3)}} \\
\midrule
No steer            & --  & --    & \msp{0.000}{0.000}{0.000} & \msp{0.000}{0.000}{0.000} \\
Random direction    & --  & --    & \msp{0.000}{0.000}{0.000} & \msp{0.000}{0.000}{0.000} \\
$v_{\mathrm{orig}}$ & --  & --    & \msp{0.040}{0.058}{0.000} & \msp{0.033}{0.024}{0.000} \\
\midrule
$v_{\mathrm{fixed}}$ & 16  & 0.465 & \msp{\mathbf{0.120}}{0.060}{0.000} & \msp{0.100}{0.015}{0.083} \\
$v_{\mathrm{fixed}}$ & 32  & 0.486 & \msp{0.050}{0.032}{0.000}          & \msp{0.097}{0.029}{0.067} \\
$v_{\mathrm{fixed}}$ & 64  & 0.517 & \msp{0.090}{0.066}{0.000}          & \msp{\mathbf{0.127}}{0.017}{0.100} \\
$v_{\mathrm{fixed}}$ & 128 & 0.545 & \msp{0.000}{0.000}{0.000}          & \msp{0.063}{0.027}{0.033} \\
\bottomrule
\end{tabular}
\end{table*}

\subsection{Sharedness under Diverse Scales}
\label{appendix:scale_checks}

We extend our evaluations beyond the original 7B scale to verify the generality of our findings. Rather than performing a comprehensive scaling-law study, these experiments test whether our core qualitative insights hold across smaller and larger models. Crucially, we confirm that across scales, the decode-shared subspace remains causally impactful, while prefill-based proxies consistently fail to predict decode-time behavior.

\begin{table}[t]
\centering
\caption{
\textbf{H1 beyond the original 7B setting.}
All runs use $\tau=10^{-3}$ and $m_{\rm shared}=\mathrm{all}$.
Ratio is $|S|/\mathrm{cross\_dim}$.
}
\label{tab:appendix_scale_h1}
\small
\setlength{\tabcolsep}{5pt}
\renewcommand{\arraystretch}{1.12}
\begin{tabular}{lrrrrr}
\toprule
Model & Layer & Cross-dim & $|S|$ & Ratio & $(p_{\rm perm},p_{\rm scram})$ \\
\midrule
Llama-3.2-3B & 27 & 1792 & 129 & 7.20\% & (0.015, 0.048) \\
Llama-2-13B  & 10 & 2427 & 88  & 3.63\% & (0.015, 0.048) \\
Llama-2-13B  & 28 & 2746 & 132 & 4.81\% & (0.015, 0.048) \\
Llama-2-13B  & 39 & 2631 & 87  & 3.31\% & (0.015, 0.048) \\
Llama-2-70B  & 32 & 3522 & 21  & 0.60\% & (0.015, 0.048) \\
Llama-2-70B  & 48 & 4018 & 2   & 0.05\% & (0.015, 0.048) \\
Llama-2-70B  & 56 & 4096 & 13  & 0.32\% & (0.015, 0.048) \\
\bottomrule
\end{tabular}
\end{table}

\begin{table*}[t]
\centering
\caption{
\textbf{H2 causal degradation beyond 7B.}
Entries are $\Delta(\mathrm{shared}-\mathrm{baseline})$ in accuracy points under decode-time shared-subspace removal.
Rows use the aligned four-task multiple-choice slice. $^\ast$ denotes $p<0.05$.
}
\label{tab:appendix_scale_h2}
\small
\setlength{\tabcolsep}{5pt}
\renewcommand{\arraystretch}{1.12}
\begin{tabular}{llrrrrr}
\toprule
Model & Layer & CSQA & OBQA & QASC & ARC-C & Mean / Sig. \\
\midrule
Llama-3.2-3B & 27 & $-25.0^\ast$ & $-10.9^\ast$ & $-42.2^\ast$ & $-9.4$       & $-21.9$ / 3 of 4 \\
\midrule
Llama-2-13B  & 10 & $-10.2^\ast$ & $-11.7^\ast$ & $-10.2^\ast$ & $-6.2^\ast$  & $-9.6$ / 4 of 4 \\
Llama-2-13B  & 28 & $-13.3^\ast$ & $-10.2^\ast$ & $-18.0^\ast$ & $-6.2$       & $-11.9$ / 3 of 4 \\
Llama-2-13B  & 39 & $-21.9^\ast$ & $-2.3$        & $-22.7^\ast$ & $+0.0$       & $-11.7$ / 2 of 4 \\
\midrule
Llama-2-70B  & 32 & $-26.6^\ast$ & $-14.1^\ast$ & $-7.8$        & $-12.5^\ast$ & $-15.2$ / 3 of 4 \\
Llama-2-70B  & 48 & $-28.1^\ast$ & $-21.9^\ast$ & $-7.8$        & $-23.4^\ast$ & $-20.3$ / 3 of 4 \\
Llama-2-70B  & 56 & $-21.9^\ast$ & $-15.6^\ast$ & $-17.2^\ast$ & $-26.6^\ast$ & $-20.3$ / 4 of 4 \\
\bottomrule
\end{tabular}
\end{table*}

\begin{table}[t]
\centering
\caption{
\textbf{H3 prefill/decode mismatch beyond 7B.}
Accuracy is averaged over CSQA, OBQA, QASC, and ARC-C.
Parentheses report change from baseline.
}
\label{tab:appendix_scale_h3}
\small
\setlength{\tabcolsep}{4.5pt}
\renewcommand{\arraystretch}{1.12}
\begin{tabular}{lrrrrr}
\toprule
Setting & Baseline & Decode-est. & Prefill-est. & Random & Angle \\
\midrule
3B L27  & 69.2 & 46.3 ($-22.9$) & 64.3 ($-5.0$) & 69.2 ($0.0$)  & $72.1^\circ$ \\
13B L10 & 61.7 & 29.7 ($-32.0$) & 62.1 ($+0.4$) & 61.3 ($-0.4$) & $81.2^\circ$ \\
70B L56 & 77.0 & 42.2 ($-34.8$) & 75.8 ($-1.2$) & 76.2 ($-0.8$) & $81.9^\circ$ \\
\bottomrule
\end{tabular}
\end{table}

Across 3B, 13B, and 70B models, DecodeShare recovers compact but statistically significant shared decode-time subspaces (Table~\ref{tab:appendix_scale_h1}).
Decode-time shared-subspace removal remains harmful on the aligned four-task slice, including a strong Llama-3.2-3B layer-27 result and a clean Llama-2-70B layer-56 result where all four tasks degrade significantly (Table~\ref{tab:appendix_scale_h2}).
Finally, the prefill/decode mismatch persists across scale: decode-estimated bases cause substantially larger decode-time drops than prefill-estimated or random bases under the same intervention protocol (Table~\ref{tab:appendix_scale_h3}).
These results support the limited claim that the phenomenon is not confined to 7B-class models, while leaving systematic scaling trends to future work.

\subsection{Policy Sensitivity and Computational Overhead}
\label{app:policy-overhead}

\paragraph{Policy sensitivity.}
DecodeShare is policy-aligned rather than policy-invariant: the shared subspace is estimated from the decode-time state distribution induced by the serving-time decoding policy. The main experiments use greedy decoding because it provides the cleanest controlled setting for causal tests. To test whether the existence of shared structure is an artifact of greedy decoding, we additionally estimate shared bases under sampling policies with temperatures $T \in \{0.7, 1.0, 1.3\}$ on Llama-2-7B-Chat.

Table~\ref{tab:policy-sensitivity} summarizes the shared ratio and the mean principal angle to the corresponding greedy-estimated basis. Across all tested layers and temperatures, the H1 sharedness test remains significant ($p<0.05$). The learned basis rotates substantially relative to greedy decoding, as indicated by the large principal angles, but the shared set remains compact and statistically distinguishable from the null. Thus, changing the decoding policy affects the orientation of the estimated shared workspace, but does not remove the existence of a decode-time shared structure.

\begin{table}[t]
\centering
\caption{
Policy sensitivity under sampling on Llama-2-7B-Chat. Each cell reports shared ratio and mean principal angle relative to the same-layer greedy-estimated basis. All 9 settings pass the H1 sharedness test ($p<0.05$).
}
\label{tab:policy-sensitivity}
\begin{tabular}{lccc}
\toprule
Layer & $T=0.7$ & $T=1.0$ & $T=1.3$ \\
\midrule
10 & 0.85\%, 56.82$^\circ$ & 2.02\%, 71.43$^\circ$ & 4.57\%, 64.74$^\circ$ \\
18 & 1.93\%, 60.38$^\circ$ & 2.74\%, 66.47$^\circ$ & 1.49\%, 66.26$^\circ$ \\
28 & 0.88\%, 67.42$^\circ$ & 0.56\%, 59.50$^\circ$ & 0.61\%, 58.57$^\circ$ \\
\bottomrule
\end{tabular}
\end{table}

\paragraph{Computational overhead.}
DecodeShare consists of a one-time offline basis-estimation stage and a lightweight online projection stage. The offline cost is dominated by collecting decode-time activations and computing the shared basis. On an H200 GPU with Llama-2-7B-Chat at layer $\ell=10$, the full DecodeShare pipeline over five tasks takes 296.5 seconds, compared with 48.5 seconds for the unmodified model run; basis estimation accounts for 87.6\% of the total pipeline runtime.

At deployment time, the additional cost is only the projection onto the estimated shared basis at the edited layer. This projection scales as $O(d |S_\ell|)$ per edited decode state and does not modify the KV cache. In our measurement, the online intervention changes throughput from 102.95 tokens/s to 81.10 tokens/s, corresponding to a 21.2\% throughput overhead, with only 1.16 MiB additional memory. These measurements indicate that the main cost is offline estimation, while the online cost is modest for a compact shared core.

\subsection{Step-Localized Ablation Across Decode Trajectories}
\label{app:step-localized}

The main causal tests remove the shared subspace throughout the decode trajectory. To check whether the effect is merely a final-token readout artifact, we run a step-localized ablation on extended GSM8K generation trajectories with Llama-2-7B-Chat. For each generated trajectory, we partition decode steps into three contiguous windows: early, middle, and late thirds. We then apply the same shared-subspace removal only within one window, leaving the rest of the trajectory unchanged.

Table~\ref{tab:step-localized} shows that all three localized interventions reduce accuracy relative to the unmodified baseline. The middle-window intervention is the most damaging, while early- and late-window interventions also cause substantial degradation. Full-trajectory ablation collapses accuracy to 0.0\%. This pattern suggests that the shared decode-time subspace is used throughout the reasoning trajectory, rather than acting only as a single final-token answer switch.

\begin{table}[t]
\centering
\caption{
Step-localized shared-subspace ablation on GSM8K extended generation trajectories with Llama-2-7B-Chat. We ablate the shared subspace only within one third of the generated decode steps. Accuracy is reported in percent.
}
\label{tab:step-localized}
\begin{tabular}{lc}
\toprule
Intervention window & Accuracy (\%) \\
\midrule
No ablation & 25.0 \\
Early third only & 14.1 \\
Middle third only & 6.2 \\
Late third only & 12.5 \\
Full trajectory & 0.0 \\
\bottomrule
\end{tabular}
\end{table}

We interpret this result conservatively. It does not imply that the shared subspace has the same functional role at every decode step. Rather, it shows that the causal effect is distributed across the decode trajectory and is not confined to the final answer token.

\subsection{Readout Remapping and Verbalizer Controls}
\label{app:readout-remap}

A possible concern is that shared-subspace removal might primarily disrupt a narrow answer-letter readout rather than a broader decode-time decision component. The main paper already addresses this through the $Q_{\mathrm{out}}$/$Q_{\mathrm{core}}$ split: ablating the residual core reproduces most of the full shared-subspace effect, whereas ablating the small readout slice is nearly inert. Here we add a complementary verbalizer control.

We hold the same layer-10 decode-shared basis fixed and change only the forced-choice readout protocol, without re-estimating the basis. Under the original answer-label readout, the mean shared-ablation drop is $-11.7$ points relative to baseline, while matched-energy controls remain close to zero. Under a remapped \texttt{option\_label} readout, e.g., ``Final answer: Option A/B/\dots'', the drop remains $-10.9$ points, preserving 93\% of the original effect. Under a stronger \texttt{option\_text} readout, where choices are scored using the full option text rather than a short label token, the effect remains negative at $-7.8$ points, although the magnitude attenuates as expected for a noisier multi-token verbalizer.

\begin{table}[t]
\centering
\caption{
Readout-remap control with the same fixed layer-10 decode-shared basis. More negative $\Delta$ indicates stronger causal degradation from shared-subspace removal. Matched-energy controls remain near zero under each readout.
}
\label{tab:readout-remap}
\begin{tabular}{llccc}
\toprule
Readout protocol & What changes & Mean $\Delta$ & Retained effect & Controls \\
\midrule
\texttt{label} & Original answer-letter readout & $-11.7$ & 100\% & near 0 \\
\texttt{option\_label} & Remapped option label & $-10.9$ & 93\% & near 0 \\
\texttt{option\_text} & Full option-text readout & $-7.8$ & 67\% & near 0 \\
\bottomrule
\end{tabular}
\end{table}

These results argue against the trivial explanation that \textsc{DecodeShare} only disrupts the original answer-letter surface form. We do not claim strict readout invariance under every possible verbalizer. Instead, the conservative conclusion is that the dominant causal effect is largely preserved under a simple label remapping and remains directionally consistent under a stronger full-text readout, while matched controls stay near baseline.

\section{Discussion.}
\label{app:discussion}

\subsection{Broader positioning and additional related work}
\label{sec:discussion_related}
\paragraph{Broader positioning.}
Our results sit at the intersection of (i) mechanistic interpretability, which aims to reverse-engineer internal computations and validate hypotheses with interventions, and (ii) systems/efficiency work that treats KV cache management as the defining constraint of modern inference. \citep{bereska2024misafetyreview,elhage2021mathematical,olsson2022context,kwon2023pagedattention,xiao2023streamingllm,shao2026flashsvd,wu2026flashsvd15}
From the interpretability perspective, circuit-style analyses and scalable discovery methods (e.g., pruning) motivate studying reusable structure across tasks, aligning with our focus on cross-task shared subspaces. \citep{merullocircuit,bhaskar2024finding}

\paragraph{Mechanistic interpretability and shared structure.} Representation similarity tools (SVCCA/CKA) have long been used to compare internal representations across networks and training regimes, and recent hypotheses argue that important structure may concentrate in shared low-dimensional subspaces. \citep{raghu2017svcca,kornblith2019cka,kaushik2025universal,wang2025attention}
\textbf{KV-cache compression/eviction beyond streaming.} Beyond streaming, a growing literature compresses or restructures KV caches (including low-rank and reconstruction-based schemes), closely related to our emphasis on decode-time low-dimensional structure and regime mismatch. \citep{saxena2024eigen,li2024snapkv,cai2024pyramidkv,kim2025kvzip,chang2025xkv,khalaf2025qkv}
\textbf{Steering directions and reliability.} Steering vectors and representation engineering methods (including activation addition and direction-based control) provide complementary control mechanisms; their reported brittleness and reliability analyses motivate our regime-specific, decision-level evaluations and null controls. \citep{turner2023activationengineering,rimsky-etal-2024-steering,konen2024stylevectors,todd2024functionvectors,arditi2024refusal,tan2024steeringreliability,deng2025care,belitsky2025kv}

\paragraph{Differences from closely related subspace work.}
Arturi et~al.\ analyze \emph{parameter-level} shared subspaces in weight updates across tasks and connect this geometry to emergent misalignment \citep{arturi2025shared}; in contrast, we study an \emph{activation-level} shared workspace that appears specifically in \emph{decode-time} states under true KV-cached decoding, and we validate it with decode-only causal interventions (H2) and estimator--distribution mismatch tests (H3). MSRS proposes \emph{multi-subspace} representation steering to compose attribute-aligned behaviors \citep{jiang2025msrs}; our focus is not to design a stronger steering method, but to diagnose a regime-specific failure mode (shared-workspace interference under KV-cached decoding) and provide a simple projection-based repair knob with matched controls. Son et~al.\ study \emph{semantic convergence across models/scales} by comparing learned representations across different LLMs \citep{son2025semantic}; we instead ask whether \emph{cross-task} shared structure exists within a fixed model at decision time, and whether it is causally necessary for decode-time decisions. Zhang and Zhou address interference in \emph{adaptation/merging} by enforcing orthogonal \emph{parameter} subspaces (e.g., for LoRA) \citep{zhang2025unraveling}; we do not constrain training, but identify and intervene on a decode-time activation subspace at inference. Xie et~al.\ remove nuisance low-rank subspaces to obtain language-agnostic multilingual representations \citep{xie-etal-2022-discovering}; similarly, we use projection operations, but our target is a task-shared decode-time workspace and we establish its relevance via matched nulls, budget-matched controls, and patchback sufficiency tests. Falissard et~al.\ learn prefix-parameter subspaces to improve generalization during fine-tuning \citep{falissard-etal-2023-improving}; we do not learn subspaces, but estimate them from activations and show that prefill-estimated bases often fail to transfer to decode-time causal effects (H3). Finally, Whitaker et~al.\ study identification of shared semantics via contrastive latent alignment across modalities \citep{Whitaker2025CommonSemantic}; while our setting is unimodal and task-based, we share the goal of isolating compact shared structure, and we complement geometric evidence with decode-time causal validation.


\end{document}